\title{Strategy Complexity of Mean Payoff, Total Payoff and Point Payoff Objectives in Countable MDPs}
\titlerunning{Strategy Complexity of Mean/Total/Point Payoff Objectives in Countable MDPs}
\author{Richard Mayr}{University of Edinburgh, UK}{}{}{}
\author{Eric Munday}{University of Edinburgh, UK}{}{}{}
\authorrunning{R.~Mayr and E.~Munday}
\keywords{Markov decision processes, Strategy complexity, Mean payoff}
\pgfplotsset{compat=1.15}
\tikzset{every picture/.style={thick,>=angle 60}}
\tikzset{MDPrand/.style={draw,circle,minimum size=11*1.5,inner sep=0}}
\tikzset{MDPcont/.style={draw,rectangle,minimum size=9*1.5,inner sep=0}}
\tikzset{MDPbad/.style={fill=red}}
\definecolor{myorange}{RGB}{255, 128, 0}
\newcommand{\+}[1]{\mathbb{#1}}
\newcommand{\N}{\+{N}}
\newcommand{\R}{\+{R}}
\newcommand{\x}{\times}
\newcommand{\rsymbol}{\ocircle}
\newcommand{\zsymbol}{\Box}
\newcommand{\zstates}{\states_\zsymbol}
\newcommand{\rstates}{\states_\rsymbol}
\newcommand{\reachset}{T}
\newcommand{\eqby}[2][=]{\stackrel{\text{{\tiny{#2}}}}{#1}}
\newcommand{\eqdef}{\eqby{def}}
\newcommand{\defeq}{\eqdef}
\newcommand{\eps}{\varepsilon}
\newcommand{\problemx}[3]{
\par\noindent\underline{\sc#1}\par\nobreak\vskip.2\baselineskip
\begingroup\clubpenalty10000\widowpenalty10000
\setbox0\hbox{\bf INPUT:\ }\setbox1\hbox{\bf QUESTION:\ }
\dimen0=\wd0\ifnum\wd1>\dimen0\dimen0=\wd1\fi
\vskip-\parskip\noindent
\hbox to\dimen0{\box0\hfil}\hangindent\dimen0\hangafter1\ignorespaces#2\par
\vskip-\parskip\noindent
\hbox to\dimen0{\box1\hfil}\hangindent\dimen0\hangafter1\ignorespaces#3\par
\endgroup}
\newcommand{\Runs}[2]{{\textit{Runs}_{#1}^{#2}}}
\newcommand{\pRuns}[2]{{\textit{pRuns}_{#1}^{#2}}}
\newcommand{\dist}{\mathcal{D}}
\newcommand{\always}{{\sf G}}
\newcommand{\eventually}{{\sf F}}
\newcommand{\hide}[1]{}
\newcommand{\lrc}[1]{(#1)}
\newcommand{\ignore}[1]{}
\newcommand{\tuple}[1]{\lrc{#1}}
\newcommand{\denotationof}[2]{\llbracket #1\rrbracket^{#2}}
\newcommand{\mdp}{{\mathcal M}}
\newcommand{\mdptuple}{\tuple{\states,\zstates,\rstates,\transition,\probp,r}}
\newcommand{\states}{S}
\newcommand{\state}{s}
\newcommand{\transition}{{\longrightarrow}}
\newcommand{\probp}{P}
\newcommand{\complementof}[1]{\overline{#1}}
\newcommand{\play}{\rho}
\newcommand{\playset}{{\mathfrak R}}
\newcommand{\partialplay}{\rho}
\newcommand{\zstrat}{\sigma}
\newcommand{\zstratset}{\Sigma}
\newcommand{\memory}{{\sf M}}
\newcommand{\memconf}{{\sf m}}
\newcommand{\expectval}{{\mathcal E}}
\newcommand{\probm}{{\mathcal P}}
\newcommand{\formula}{{\varphi}}
\newcommand{\valueof}[2]{{\mathtt{val}_{#1}(#2)}}
\mathchardef\mhyphen="2D 
\newcommand{\liminfmpobj}{{\it MP}_{\liminf\ge 0}}
\newcommand{\liminftpobj}{{\it TP}_{\liminf\ge 0}}
\newcommand{\liminfppobj}{{\it PP}_{\liminf\ge 0}}
\newcommand{\transience}{\mathtt{Transience}}
\newcommand{\reward}{\mathit{r}}
\begin{document}
\maketitle
\begin{abstract}
  We study countably infinite Markov decision processes (MDPs)
  with real-valued transition rewards.
  Every infinite run induces the following sequences of payoffs:
  1. Point payoff (the sequence of directly seen transition rewards),
  2. Total payoff (the sequence of the sums of all rewards so far), and
  3. Mean payoff.
  For each payoff type, the objective is to maximize the probability
  that the $\liminf$ is non-negative.
  We establish the complete picture of the strategy complexity of
  these objectives, i.e., how much memory is necessary and sufficient for
  $\eps$-optimal (resp.\ optimal) strategies.
  Some cases can be won with memoryless deterministic strategies,
  while others require a step counter, a reward counter, or both.
\end{abstract}

\section{Introduction}\label{sec:intro}
{\bf\noindent Background.}
Markov decision processes (MDPs) are a standard model for dynamic systems that
exhibit both stochastic and controlled behavior \cite{Puterman:book}.
Applications include control theory~\cite{blondel2000survey,NIPS2004_2569},
operations research
and finance~\cite{Flesch:JOTA2020,bauerle2011finance,schal2002markov},
artificial intelligence and machine
learning~\cite{sutton2018reinforcement,sigaud2013markov},
and formal verification~\cite{ModCheckHB18,ModCheckPrinciples08}.

An MDP is a directed graph where states are either random or controlled.
In a random state the next state is chosen according to a fixed probability distribution.
In a controlled state the controller can choose
a distribution over all possible successor states.
By fixing a strategy for the controller (and an initial state), one obtains a probability space
of runs of the MDP. The goal of the controller is to optimize the expected value of
some objective function on the runs.
The type of strategy necessary to achieve an $\eps$-optimal (resp.\ optimal)
value for a given objective is called its \emph{strategy complexity}.

{\bf\noindent Transition rewards and liminf objectives.}
MDPs are given a reward structure by assigning a real-valued
(resp.\ integer or rational) reward to
each transition. Every run then induces an infinite sequence of
seen transition rewards $r_0r_1r_2\dots$.
We consider the $\liminf$
of this sequence, as well as
two other important derived sequences.
\begin{description}
\item[1.]
The point payoff considers the
$\liminf$ of the sequence $r_0 r_1r_2 \dots$
directly.
\item[2.]
The total payoff considers the
$\liminf$ of the sequence
$\left\{\sum_{i=0}^{n-1} r_i\right\}_{n \in \N}$, i.e., the sum of all rewards seen
so far.
\item[3.]
The mean payoff considers the
$\liminf$ of the sequence
$\left\{\frac{1}{n}\sum_{i=0}^{n-1} r_i\right\}_{n \in \N}$, i.e., the mean of
all rewards seen
so far in an expanding prefix of the run.
\end{description}
For each of the three cases above, the $\liminf$ threshold objective is
to maximize the probability that the $\liminf$ of the respective type of sequence
is $\ge 0$.

{\bf\noindent Our contribution.}
We establish the strategy complexity of all the
$\liminf$ threshold objectives above for \emph{countably infinite} MDPs.
(For the simpler case of finite MDPs, see the paragraph on related work below.)
We show the amount and type of memory
that is sufficient for $\eps$-optimal strategies
(and optimal strategies, where they exist),
and corresponding lower bounds in the sense of \cref{rem:lowerbonds}. 
This is not only the distinction between memoryless, finite memory
and infinite memory, but the type of infinite memory that is necessary
and sufficient.
A step counter is an integer counter that merely counts the number of steps in
the run (i.e., like a discrete clock), while a reward counter is a variable that records the
sum of all rewards seen so far.
(The reward counter has the same type as the transition rewards in the MDP, i.e.,
integers, rationals or reals.)
While these use infinite memory, it is a very
restricted form, since this memory is not directly controlled by the player.
Strategies using only a step counter are also called Markov strategies
\cite{Puterman:book}.

Some of the $\liminf$ objectives can be attained by memoryless deterministic (MD)
strategies, while others require (in the sense of \cref{rem:lowerbonds})
a step counter, 
a reward counter, or both. It depends on the type of objective (point, total, or
mean payoff) and on whether the MDP is finitely or infinitely
branching.
For clarity of presentation,
our counterexamples use large transition rewards and high degrees of
branching. However, the lower bounds hold even for just binary
branching MDPs with transition rewards in $\{-1,0,1\}$;
cf.~\cref{app:strengthening}.

For our objectives, the strategy complexities  of $\eps$-optimal
and optimal strategies (where they exist) coincide, but the proofs are different.
\cref{table:allresults} shows the results for all combinations.

\begin{table*}[hbtp]
\centering
\begin{tabular}{|l||c|c|c|}\hline
      & Point payoff & Total payoff & Mean payoff \\ \hline
$\eps$-optimal, infinitely branching & SC \ref{infbranchsteplower}, \ref{infpointpayoff}  & SC+RC \ref{infbranchsteplower}, \ref{infinitesummarytp}, \ref{inftpepsupper} & SC+RC \ref{mpstepepslower}, \ref{infinitesummary}, \ref{mpepsupper}  \\ \hline
optimal, infinitely branching     & SC \ref{infbranchsteplower}, \ref{infoptupper}  & SC+RC \ref{almostsummarytp}, \ref{infbranchsteplower}, \ref{infoptupper} & SC+RC \ref{almostsummary}, \ref{mpstepoptlower}, \ref{infoptupper}   \\ \hline
$\eps$-optimal, finitely branching   & MD \ref{finpointpayoff} & RC \ref{infinitesummarytp}, \ref{fintpepsupper}   & SC+RC \ref{mpstepepslower}, \ref{infinitesummary}, \ref{mpepsupper}  \\ \hline
optimal, finitely branching       & MD \ref{finoptupper} & RC \ref{almostsummarytp}, \ref{finoptupper}   & SC+RC \ref{almostsummary}, \ref{mpstepoptlower}, \ref{infoptupper} \\ \hline
\end{tabular}
\caption{Strategy complexity of $\eps$-optimal/optimal
strategies for point, total and mean payoff objectives in
infinitely/finitely branching MDPs. MD stands for memoryless deterministic,
SC for step counter, RC for reward counter and SC+RC for both.
All strategies are deterministic and randomization does not help.
For each result, we list the numbers of the theorems that show the
upper and lower bounds on the strategy complexity.
The lower bounds hold in the sense of \cref{rem:lowerbonds},
but work for integer
rewards. The upper
bounds hold even for real-valued rewards. 
}\label{table:allresults}
\end{table*}

\vspace*{-5mm}
Some complex new proof techniques are developed to show these results.
E.g., the examples showing the lower bound in cases where both a step counter
and a reward counter are required use a finely tuned tradeoff between different
risks that can be managed with both counters, but not with just one
counter plus arbitrary finite memory.
The strategies showing the upper bounds need to take into account
convergence effects, e.g., the sequence of point rewards
$-1/2, -1/3, -1/4, \dots$ \emph{does} satisfy $\liminf$ $\ge 0$, i.e.,
one cannot assume that rewards are integers.

Due to space constraints, we sketch some proofs in the main body.
Full proofs can be found in the Appendix.

{\bf\noindent Related work.}
Mean payoff objectives for \emph{finite} MDPs have been widely
studied; cf.~survey in \cite{chatterjee2011games}.
There exist optimal MD strategies for $\liminf$ mean payoff
(which are also optimal for $\limsup$ mean payoff since the transition rewards are bounded),
and the associated computational problems
can be
solved in polynomial time \cite{chatterjee2011games,Puterman:book}.
Similarly, see \cite{CDH:ICALP2009} for a survey on
$\limsup$ and $\liminf$ point payoff objectives in finite stochastic games and MDPs,
where there also exist optimal MD strategies, and the more recent paper
by Flesch, Predtetchinski and Sudderth \cite{FPS:2018} on simplifying optimal strategies.

All this does \emph{not} carry over to countably infinite MDPs.
Optimal strategies need not exist
(not even for much simpler objectives),
($\eps$-)optimal strategies can require
infinite memory, and computational problems are not defined in general, since
a countable MDP need not be finitely presented
\cite{KMSW2017}.
Moreover, attainment for $\liminf$ mean payoff need not coincide with attainment
for $\limsup$ mean payoff, even for very simple examples.
E.g., consider the acyclic infinite graph with
transitions $s_n \rightarrow s_{n+1}$ for all $n \in \N$
with reward $(-1)^n2^n$ in the $n$-th step, which yields a $\liminf$ mean payoff
of $-\infty$ and a $\limsup$ mean payoff of $+\infty$.

Mean payoff objectives for countably infinite MDPs have been considered in
\cite[Section 8.10]{Puterman:book}, e.g., 
\cite[Example 8.10.2]{Puterman:book}
(adapted in \cref{putermanexample})
shows that there are no optimal MD (memoryless deterministic)
strategies for $\liminf$/$\limsup$ mean payoff.
\cite[Counterexample 1.3]{Ross:1983} shows that there are not even
$\eps$-optimal memoryless randomized strategies for $\liminf$/$\limsup$ mean payoff.
(We show much stronger lower/upper bounds; cf.~\cref{table:allresults}.)

Sudderth \cite{Sudderth:2020} considered an objective on countable MDPs that is
related to our point payoff threshold objective. However, instead of maximizing the
probability that the $\liminf$/$\limsup$ is non-negative, it asks to maximize the
\emph{expectation} of the $\liminf$/$\limsup$ point payoffs, which is a different
problem (e.g., it can tolerate a high probability of a negative $\liminf$/$\limsup$
if the remaining cases have a huge positive $\liminf$/$\limsup$).
Hill \& Pestien \cite{Hill-Pestien:1987} showed the existence of good
randomized Markov strategies for the $\limsup$ of the \emph{expected}
average reward up-to step $n$ for growing $n$, and for the \emph{expected}
$\liminf$ of the point payoffs.

\section{Preliminaries}\label{sec:prelim}
\noindent{\bf Markov decision processes.}
A \emph{probability distribution} over a countable set $S$ is a function
$f:\states\to[0,1]$ with $\sum_{\state\in \states}f(\state)=1$.
We write 
$\dist(\states)$ for the set of all probability distributions over $\states$. 
A \emph{Markov decision process} (MDP) $\mdp=\mdptuple$ consists of
a countable set~$\states$ of \emph{states}, 
which is partitioned into a set~$\zstates$ of \emph{controlled states} 
and  a set~$\rstates$ of \emph{random states},
a  \emph{transition relation} $\transition\subseteq\states\x\states$,
and a  \emph{probability function}~$\probp:\rstates \to \dist(\states)$. 
We  write $\state\transition{}\state'$ if $\tuple{\state,\state'}\in \transition$,
and  refer to~$s'$ as a \emph{successor} of~$s$. 
We assume that every state has at least one successor.  
The probability function~$P$  assigns to each random state~$\state\in \rstates$
a probability distribution~$P(\state)$ over its (non-empty) set of successor states.
A \emph{sink in $\mdp$} is a subset $T \subseteq \states$ closed under the $\transition$ relation,
that is,  $\state \in \reachset$ and  $\state\transition\state'$ implies that $\state'\in T$.

An MDP is \emph{acyclic} if the underlying directed graph~$(S,\transition)$ is acyclic, i.e., 
there is no directed cycle.
It is  \emph{finitely branching} 
if every state has finitely many successors
and \emph{infinitely branching} otherwise.
An MDP without controlled states
($\zstates=\emptyset$) is called a \emph{Markov chain}.

In order to specify our mean/total/point payoff objectives (see below), we 
define a function $\reward: \states \times \states \to \R$ that assigns numeric
rewards to transitions.

\smallskip
\noindent{\bf Strategies and Probability Measures.}
A \emph{run}~$\play$ is an  infinite sequence of states and transitions
$\state_0e_0\state_1e_1\cdots$ 
such that $e_i = (\state_i, \state_{i+1}) \in \transition$
for all~$i\in \mathbb{N}$.
Let $\Runs{\mdp}{\state_0}$
be the set of all runs from $\state_0$ in the MDP $\mdp$.
A \emph{partial run} is a finite prefix of a run, 
$\pRuns{\mdp}{\state_0}$ is the set of all partial runs
from $\state_0$ and $\pRuns{\mdp}{}$ the set of partial runs from any state.

We write~$\play_s(i)\eqdef\state_i$ for the $i$-th state along~$\play$
and $\play_e(i)\eqdef e_i$ for the $i$-th transition along~$\play$.
We sometimes write runs as $\state_0\state_1\cdots$, leaving the transitions implicit.
We say that a (partial) run $\play$ \emph{visits} $\state$ if
$\state=\play_s(i)$ for some $i$, and that~$\play$ starts in~$s$ if $\state=\play_s(0)$. 

A \emph{strategy} 
is a function $\zstrat:\pRuns{\mdp}{}\!\cdot\!\zstates \to \dist(S)$ that 
assigns to partial runs $\partialplay\state$, where $\state \in \zstates$,
a distribution over the successors~$\{\state'\in \states\mid \state \transition{} \state'\}$. 
The set of all strategies  in $\mdp$ is denoted by $\zstratset_\mdp$ 
(we omit the subscript and write~$\zstratset$ if $\mdp$ is clear from the context).
A (partial) run~$\state_0e_0\state_1e_1\cdots$ is consistent with a strategy~$\zstrat$
if for all~$i$
either $\state_i \in \zstates$ and $\zstrat(\state_0e_0\state_1e_1\cdots\state_i)(\state_{i+1})>0$,
or
$\state_i \in \rstates$ and $\probp(\state_i)(\state_{i+1})>0$.
%

An MDP $\mdp=\mdptuple$, an initial state $\state_0\in \states$, and a strategy~$\zstrat$ 
induce a probability space in which the outcomes are runs starting in $\state_0$
and with measure $\probm_{\mdp,\state_0,\zstrat}$
defined as follows.
It is first defined on \emph{cylinders} $s_0 e_0 s_1 e_1 \ldots
s_n \Runs{\mdp}{s_n}$:
if $s_0 e_0 s_1 e_1 \ldots s_n$
is not a partial run consistent with~$\zstrat$ then
$\probm_{\mdp,\state_0,\zstrat}(s_0 e_0 s_1 e_1 \ldots s_n \Runs{\mdp}{s_n}) \eqdef 0$.
Otherwise, $\probm_{\mdp,\state_0,\zstrat}(s_0 e_0 s_1 e_1 \ldots
s_n \Runs{\mdp}{s_n}) \eqdef \prod_{i=0}^{n-1} \bar{\zstrat}(s_0 e_0
s_1 \ldots s_i)(s_{i+1})$, where $\bar{\zstrat}$ is the map that
extends~$\zstrat$ by $\bar{\zstrat}(w s) = \probp(s)$
for all partial runs $w s \in \pRuns{\mdp}{}\!\cdot\!\rstates$.
By Carath\'eodory's theorem~\cite{billingsley-1995-probability}, 
this extends uniquely to a probability measure~$\probm_{\mdp,\state_0,\zstrat}$ on 
the Borel $\sigma$-algebra $\?F$ of subsets of~$\Runs{\mdp}{s_0}$.
Elements of $\?F$, i.e., measurable sets of runs, are called \emph{events} or \emph{objectives} here.
For $X\in\?F$ we will write $\complementof{X}\eqdef \Runs{\mdp}{s_0}\setminus X\in \?F$ for its complement
and $\expectval_{\mdp,\state_0,\zstrat}$
for the expectation wrt.~$\probm_{\mdp,\state_0,\zstrat}$.
We drop the indices if possible without
ambiguity.

\noindent{\bf Objectives.}
We consider objectives that are determined by a predicate on infinite runs.
We assume familiarity with the syntax and semantics of the temporal
logic LTL \cite{CGP:book}.
Formulas are interpreted on the structure $(\states,\transition)$.
We use $\denotationof{\formula}{\state}$ to denote the set of runs starting from
$\state$ that satisfy the LTL formula $\formula$,
which is a measurable set \cite{Vardi:probabilistic}.
We also write $\denotationof{\formula}{}$ for $\bigcup_{s \in S} \denotationof{\formula}{s}$.
Where it does not cause confusion we will
identify $\varphi$ and $\denotationof{\formula}{}$
and just write
$\probm_{\mdp,\state,\zstrat}(\formula)$
instead of 
$\probm_{\mdp,\state,\zstrat}(\denotationof\formula\state)$.
The reachability objective of eventually visiting a set of states $X$ can be expressed by
$\denotationof{\eventually X}{} \eqdef \{\play\,|\, \exists
i.\, \play_s(i) \in X\}$.
Reaching $X$ within at most $k$ steps is expressed by
$\denotationof{\eventually^{\le k} X}{} \eqdef \{\play\,|\, \exists
i \le k.\, \play_s(i) \in X\}$.
The definitions for eventually visiting certain transitions are analogous.
The operator $\always$ (always) is defined as $\neg\eventually\neg$.
So the safety objective of avoiding $X$ is expressed by
$\always \neg X$.

\begin{itemize}
\item
The $\liminfppobj$ objective is to maximize the
probability that the $\liminf$ of the \emph{point} payoffs (the immediate
transition rewards) is $\ge 0$, i.e., 
$\liminfppobj \defeq
\{\rho \mid \liminf_{n\in \N} \reward(\rho_e(n)) \ge 0\}$.
\item
The $\liminftpobj$ objective is to maximize the
probability that the $\liminf$ of the \emph{total} payoff (the sum of the
transition rewards seen so far) is $\ge 0$, i.e., 
$\liminftpobj \defeq
\{\rho \mid \liminf_{n\in \N} \sum_{j=0}^{n-1}\reward(\rho_e(j)) \ge 0\}$.
\item
The $\liminfmpobj$ objective is to maximize the
probability that the $\liminf$ of the \emph{mean} payoff
is $\ge 0$, i.e., 
$\liminfmpobj\defeq
\{\rho \mid \liminf_{n\in \N}\frac{1}{n}\sum_{j=0}^{n-1}
\reward(\rho_e(j)) \ge 0\}$.
\end{itemize}
An objective $\formula$ is called \emph{tail} in $\mdp$
if for every run
$\rho'\rho$ in $\mdp$ with some finite prefix $\rho'$ we have
$\rho'\rho \in \denotationof{\formula}{} \Leftrightarrow \rho \in \denotationof{\formula}{}$.
An objective is called a \emph{tail objective} if it is tail in every MDP. 
$\liminfppobj$ and $\liminfmpobj$ are tail objectives,
but $\liminftpobj$ is not.
Also $\liminfppobj$ is more general than co-B\"uchi.
(The special case of integer transition rewards coincides with co-B\"uchi,
since rewards $\le -1$ and accepting states can be encoded into each other.)

\smallskip
\noindent{\bf  Strategy Classes.}
Strategies are in general  \emph{randomized} (R) in the sense that they take values in $\dist(\states)$. 
A strategy~$\zstrat$ is \emph{deterministic} (D) if $\zstrat(\rho)$ is a Dirac
distribution for all $\rho$.
General strategies can be \emph{history dependent} (H), while others are
restricted by the size or type of memory they use, see below.
We consider certain classes of strategies:
\begin{itemize}
\item
A strategy $\zstrat$ is  \emph{memoryless}~(M) (also called \emph{positional}) 
if it can be implemented with a memory  of size~$1$. 
We may view
M-strategies as functions $\zstrat: \zstates \to \dist(\states)$.
\item 
A strategy~$\zstrat$ is \emph{finite memory}~(F) if 
there exists a finite memory~$\memory$ implementing~$\zstrat$.
Hence FR stands for finite memory randomized.
\item
A \emph{step counter strategy} bases decisions only on
the current state and the number of steps taken so far, i.e., it uses
an unbounded integer counter that gets incremented by $1$ in every step.
Such strategies are also called \emph{Markov strategies} \cite{Puterman:book}.
\item
\emph{$k$-bit Markov strategies} use $k$ extra bits of general purpose memory
in addition to a step counter \cite{KMST2020c}.
\item
A \emph{reward counter strategy} uses infinite memory, but only in
the form of a counter that always contains the sum of all transition rewards
seen to far.
\item
A \emph{step counter + reward counter strategy} uses both a step counter and a reward counter.
\end{itemize}
See \cref{app-def} for a formal definition how strategies use memory.
Step counters and reward counters are very restricted forms of
memory, since the memory update is not directly under the control of the
player. These counters merely record an aspect of the partial run.

\smallskip
\noindent{\bf Optimal and $\eps$-optimal Strategies.}
Given an objective~$\formula$, the value of state~$s$ in an MDP~$\mdp$, denoted by 
$\valueof{\mdp,\formula}{s}$, is the supremum probability of achieving~$\formula$.
 Formally, $\valueof{\mdp,\formula}{s} \eqdef\sup_{\sigma \in \Sigma} \probm_{\mdp,\state,\zstrat}(\formula)$ where $\Sigma$ is the set of all strategies.
For $\eps\ge 0$ and state~$s\in\states$, we say that a strategy is \emph{$\eps$-optimal} from $s$
if $\probm_{\mdp,\state,\zstrat}(\formula) \geq \valueof{\mdp,\formula}{s} -\eps$.
A $0$-optimal strategy is called \emph{optimal}. 
An optimal strategy is \emph{almost-surely winning} if $\valueof{\mdp,\formula}{s} = 1$. 
Considering an MD strategy as a function $\zstrat: \zstates \to \states$ and $\eps\ge 0$, $\zstrat$ is \emph{uniformly} $\eps$-optimal  (resp.~uniformly optimal) if it is $\eps$-optimal (resp.~optimal) from \emph{every} $s\in S$.

\begin{remark}\label{rem:lowerbonds}
\rm
To establish an upper bound $X$ on the strategy complexity of an objective
$\formula$ in countable MDPs,
it suffices to prove that there always exist good
($\eps$-optimal, resp.\ optimal)
strategies in class $X$
(e.g., MD, MR, FD, FR, etc.)
for objective $\formula$.

Lower bounds on the strategy complexity of an objective $\formula$
can only be established in the sense of proving that good
strategies for $\formula$ do not exist in some classes $Y$,
$Z$, etc.
Classes of strategies that use different types
of \emph{restricted} infinite memory are generally not comparable,
e.g., step counter strategies are
incomparable to reward counter strategies.
In particular, there is no weakest type of infinite memory with restricted use.
Therefore statements like ``good strategies for objective $\formula$ require at
least a step counter'' are always \emph{relative} to the
considered alternative strategy classes.
In this paper, we only consider the strategy classes of memoryless, finite memory,
step counter, reward counter and \emph{combinations thereof}.
Thus, when we write in \cref{table:allresults} that an objective requires a
step counter (SC), it just means that a reward counter (RC) plus finite memory
is not sufficient.
\end{remark}
For our upper bounds, we use deterministic strategies.
Moreover, we show that allowing randomization does not help
to reduce the strategy complexity, in the sense of \cref{rem:lowerbonds}.

\section{When is a step counter not sufficient?}\label{sec:liminfreward}
In this section we will prove that strategies with a step counter plus
arbitrary finite memory are not sufficient
for $\varepsilon$-optimal strategies for $\liminfmpobj$ or $\liminftpobj$.
We will construct an acyclic MDP where the step counter is implicit in the
state such that $\varepsilon$-optimal
strategies for $\liminfmpobj$ and $\liminftpobj$
still require infinite memory.

\subsection{Epsilon-optimal strategies}

We construct an acyclic MDP $\mathcal{M}$ in which the step counter is implicit in the state as follows.



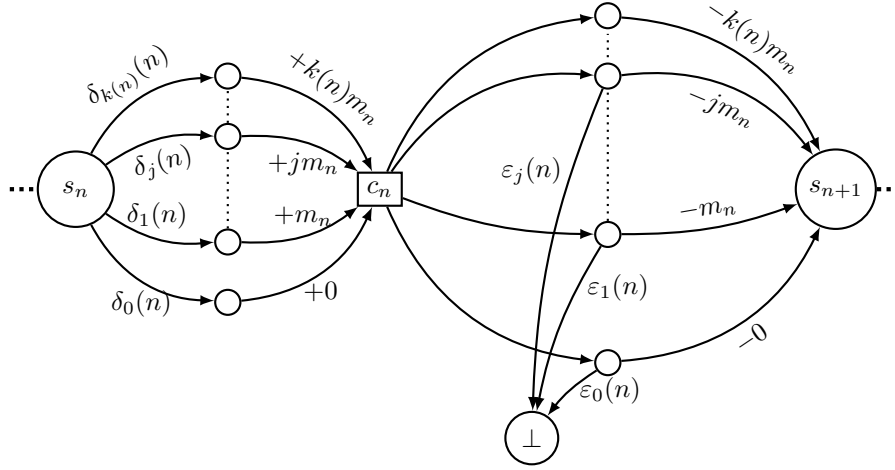
\begin{figure*}
\begin{center}
    \begin{tikzpicture}

    \node[draw,circle, minimum height=1cm] (R) at (0,0) {$s_{n}$};
    \node[draw, minimum height=0.4cm, minimum width=0.4cm] (N) at (4,0) {$c_{n}$};
    \node[draw,circle] (S) at (10,0) {$s_{n+1}$};
    
    \node[] (Left) at (-1,0) {};
    \node[] (Right) at (11,0) {};
    \draw [dotted, ultra thick] (Left) -- (R);
    \draw [dotted, ultra thick] (S) -- (Right);

    \node[draw,circle] (M) at (7,-0.6){};
    \node[draw,circle] (U) at (7,1.5){};
    
    \node[draw, circle] at (7,-2.3) (B) {};
    \node[draw, circle] at (7,2.3) (E) {};
    
    \node[draw, circle] at (6,-3.3) (Deadbottom) {$\perp$};
    
    %
    %
    %
    %
    %

    \node[draw,circle] (MidBotL) at (2,-0.7){};
    \node[draw,circle] (MidTopL) at (2,0.7){};
    
    \node[draw, circle] (BotL) at (2,-1.5) {};
    \node[draw, circle] (TopL) at (2,1.5) {};

    \draw [dotted, thick] (MidBotL) -- (MidTopL);
    \draw [dotted, thick] (MidTopL) -- (TopL);
    
    %
    %
    %
    %
    \coordinate[shift={(0mm,2.5mm)}] (Mshift) at (M);
    

    \coordinate[shift={(0mm,2.5mm)}] (Ushift) at (U);
    \coordinate[shift={(0mm,-2.4mm)}] (Eshift) at (E);

    \draw [dotted, thick] (Ushift) -- (Eshift);

    
    \draw[->,>=latex] (R) edge[bend left] node[sloped, above, midway]{$\delta_{k(n)}(n)$} (TopL) 
                    (TopL) edge[bend left] node[above, sloped, midway]{$+k(n)m_{n}$} (N)
                    (R) edge[bend left=20] node[sloped, below, midway]{$\delta_{j}(n)$} (MidTopL)
                    (MidTopL) edge[bend left=20] node[below, midway]{$+jm_{n}$} (N)
                    (R) edge[bend right=20] node[above, midway]{$\delta_{1}(n) $} (MidBotL)
                    (MidBotL) edge[bend right=20] node[above, midway]{$+m_{n}$} (N)
                    (R) edge[bend right] node[below, midway]{$\delta_{0}(n)$} (BotL) 
                    (BotL) edge [bend right] node[below, midway]{$+0$} (N);


    \draw[->,>=latex] (N) edge[bend right=10] (M)
                      (N) edge[bend left]  (E)
                      (E) edge[bend left] node[sloped, above, midway]{$-k(n)m_{n}$} (S)                  
                      (N) edge[bend right] (B)
                      (B) edge[bend right] node[sloped, below, midway]{$-0$} (S)
                      (B) edge[bend right=10] node[right, midway]{$\varepsilon_{0}(n)$} (Deadbottom)
                      (M) edge[bend right=10] node[right, near start]{$\varepsilon_{1}(n)$} (Deadbottom)
                      (M) edge[bend right=10] node[sloped, above, midway]{$-m_{n}$} (S)
                      (N) edge[bend left] (U)
                      (U) edge[bend left] node[sloped, below, midway]{$-jm_{n}$} (S)
                      (U) edge[bend right=10] node[left, near start]{$\varepsilon_{j}(n)$} (Deadbottom);
    
    \draw [dotted, thick] (M) -- (U);

    \end{tikzpicture}
    \caption{A typical building block with $k(n)+1$ choices, first random then controlled. The number of choices $k(n) + 1$ grows unboundedly with $n$. This is the $n$-th building block of the MDP in \cref{chain}.
    The $\delta_{i}(n)$ and $\eps_{i}(n)$ are probabilities depending on $n$ and the $\pm i m_{n}$ are transition rewards. 
    We index the successor states of $s_{n}$ and $c_{n}$ from $0$ to $k(n)$ to
    match the indexing of the $\delta$'s and $\eps$'s such that the bottom
    state is indexed with $0$ and the top state with $k(n)$.
    }
    \label{infinitegadget}
\end{center}
    \end{figure*}



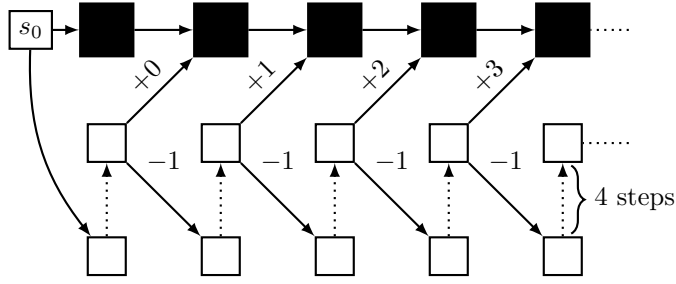
\begin{figure}[t]
\begin{center}
    \begin{tikzpicture}
    
    \node[draw, minimum height=0.5cm, minimum width=0.5cm] (I1) at (-0.5,0) {$s_{0}$};
    \node (I2) at (7.5,0) {};
    \node (I3) at (7.5,-1.5) {};
    
    \node[draw, minimum height=0.7cm, minimum width=0.7cm, fill=black] (B1) at (0.5,0) {B1};
    \node[draw, minimum height=0.7cm, minimum width=0.7cm, fill=black] (B2) at (2,0) {B2};
    \node[draw, minimum height=0.7cm, minimum width=0.7cm, fill=black] (B3) at (3.5,0) {B3};
    \node[draw, minimum height=0.7cm, minimum width=0.7cm, fill=black] (B4) at (5,0) {B4};
    \node[draw, minimum height=0.7cm, minimum width=0.7cm, fill=black] (B5) at (6.5,0) {B5};
    
    \node[draw, minimum height=0.5cm, minimum width=0.5cm] (W1) at (0.5,-1.5) {};
    \node[draw, minimum height=0.5cm, minimum width=0.5cm] (W2) at (2,-1.5) {};
    \node[draw, minimum height=0.5cm, minimum width=0.5cm] (W3) at (3.5,-1.5) {};
    \node[draw, minimum height=0.5cm, minimum width=0.5cm] (W4) at (5,-1.5) {};
    \node[draw, minimum height=0.5cm, minimum width=0.5cm] (W5) at (6.5,-1.5) {};
    
    \node[draw, minimum height=0.5cm, minimum width=0.5cm] (W6) at (0.5,-3) {};
    \node[draw, minimum height=0.5cm, minimum width=0.5cm] (W7) at (2,-3) {};
    \node[draw, minimum height=0.5cm, minimum width=0.5cm] (W8) at (3.5,-3) {};
    \node[draw, minimum height=0.5cm, minimum width=0.5cm] (W9) at (5,-3) {};
    \node[draw, minimum height=0.5cm, minimum width=0.5cm] (W10) at (6.5,-3) {};
    
    

    
    \draw[->, >=latex] (I1) -- (B1);
    \draw[dotted, thick] (I2) -- (B5);
    \draw[dotted, thick] (I3) -- (W5);
    \draw[->, >=latex] (I1) to[bend right=20] (W6);
    
    \draw[->,>=latex] (W1) -- (B2) node[sloped, above, midway]{$+0$};
    
    \draw[->,>=latex] (W2) -- (B3) node[sloped, above, midway]{$+1$};
    
    \draw[->,>=latex] (W3) -- (B4) node[sloped, above, midway]{$+2$};
    
    \draw[->,>=latex] (W4) -- (B5) node[sloped, above, midway]{$+3$};

    \draw[->,>=latex] (B1) -- (B2);
    \draw[->,>=latex] (B2) -- (B3);
    \draw[->,>=latex] (B3) -- (B4);
    \draw[->,>=latex] (B4) -- (B5);
    
    \draw[->, >=latex, dotted, thick] (W6) -- (W1);
    \draw[->, >=latex, dotted, thick] (W7) -- (W2);
    \draw[->, >=latex, dotted, thick] (W8) -- (W3);
    \draw[->, >=latex, dotted, thick] (W9) -- (W4);
    \draw[->, >=latex, dotted, thick] (W10) -- (W5);
    
    
    
    \draw[->,>=latex] (W1) -- (W7) node[above=0.25cm, midway]{$-1$};
    \draw[->,>=latex] (W2) -- (W8) node[above=0.25cm, midway]{$-1$};
    \draw[->,>=latex] (W3) -- (W9) node[above=0.25cm, midway]{$-1$};
    \draw[->,>=latex] (W4) -- (W10) node[above=0.25cm, midway]{$-1$};
    
\draw [decorate, decoration={brace,amplitude=5pt},xshift=6pt,yshift=0pt]
(6.4,-1.8) -- (6.4,-2.7) node [black,midway,right, xshift=5pt] {$4$ steps};

    \end{tikzpicture}
    \caption{The buildings blocks from \cref{infinitegadget} represented by
      black boxes are chained together ($n$ increases as you go to the
      right). The chain of white boxes allows to skip arbitrarily long
      prefixes while preserving path length. The positive rewards from the
      white states to the black boxes reimburse the lost reward accumulated
      until then. The $-1$ rewards between white states ensure that skipping
      gadgets forever is losing.
      \vspace{-5mm}
    }
    \label{chain}
\end{center}
    \end{figure}


The system consists of a sequence of gadgets.  \cref{infinitegadget} depicts a typical building block in this system. The system consists of these gadgets chained together as illustrated in  \cref{chain}, starting with $n$ sufficiently high at $n=N^{*}$. In the controlled choice, there is a small chance in all but the top choice of falling into a $\bot$ state. These $\bot$ states are abbreviations for an infinite chain of states with $-1$ reward on the transitions and are thus losing. The intuition behind the construction is that there is a random transition with branching degree $k(n)+1$. Then, the only way to win, in the controlled states, is to play the $i$-th choice if one arrived from the $i$-th choice. Thus intuitively, to remember what this choice was, one requires at least $k(n)+1$ memory modes. That is to say, the one and only way to win is to mimic, and mimicry requires memory.

\vspace{-2mm}
\begin{remark}
$\mathcal{M}$ is acyclic, finitely branching and for every state $s \in S, \exists n_{s} \in \N$ such that every path from $s_{0}$ to $s$ has length $n_{s}$. That is to say the step counter is implicit in the state. 
\end{remark}
Additionally, the number of transitions in each gadget now grows unboundedly with $n$ according to the function $k(n)$. Consequently, we will show that the number of memory modes required to play correctly grows above every finite bound. This will imply that no finite amount of memory suffices for $\varepsilon$-optimal strategies.

\noindent
\textbf{Notation}:
All logarithms are assumed to be in base $e$.
\begin{align*}
& \text{log}_{1}n \eqdef\text{log}n, \quad \text{log}_{i+1}n\eqdef\text{log}(\text{log}_{i}n) \\
& \delta_{0}(n)\eqdef\dfrac{1}{\text{log}n}, \quad \delta_{i}(n)\eqdef\dfrac{1}{\text{log}_{i+1}n}, 
\quad \delta_{k(n)}(n) \eqdef 1 - \sum_{j=0}^{k(n)-1}\delta_{j}(n)\\ 
& \varepsilon_{0}(n)\eqdef\dfrac{1}{n\text{log}n}, \ \varepsilon_{i+1}(n)\eqdef\dfrac{\varepsilon_{i}(n)}{\text{log}_{i+2}n}, 
\ \text{i.e. } \varepsilon_{i}(n) = \dfrac{1}{n \cdot \text{log}n \cdot \text{log}_{2}n \cdots \text{log}_{i+1}n},
\varepsilon_{k(n)}(n) \eqdef 0\\
& \text{Tower}(0) \eqdef e^{0} = 1, \quad \text{Tower}(i+1) \eqdef e^{\text{Tower}(i)},
\quad N_{i}\eqdef\text{Tower}(i)
\end{align*}

\begin{lemma}\label{convdiv}
The family of series 
$
\sum_{n > N_{j}} \delta_{j}(n)  \cdot \varepsilon_{i}(n) 
$
is divergent for all $i,j \in \N$, $i < j$. 

\noindent
Additionally, the related family of series 
$\sum_{n > N_{i}} \delta_{i}(n) \cdot \varepsilon_{i}(n)
$
is convergent for all $i \in \N.$

\end{lemma}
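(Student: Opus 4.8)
The plan is to reduce both assertions to the classical Bertrand/Abel--Dini convergence test for series built from iterated logarithms, and then to finish by an elementary comparison. Concretely, I would first isolate the fact that for every $k\ge 0$ and every real $s$ the series $\sum_{n>N_{k+1}}\big(n\log n\log_2 n\cdots\log_k n\,(\log_{k+1}n)^s\big)^{-1}$ (with the middle product empty when $k=0$) converges if and only if $s>1$, and derive both parts of the lemma from it.

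To prove this fact I would use the integral test together with the identity $\frac{d}{dx}\log_{k+1}x=\big(x\log x\log_2 x\cdots\log_k x\big)^{-1}$, which follows from the chain rule by induction on $k$. Substituting $u=\log_{k+1}x$ gives $\int^{X}\big(x\log x\cdots\log_k x\,(\log_{k+1}x)^s\big)^{-1}dx=\big[(1-s)^{-1}(\log_{k+1}x)^{1-s}\big]^{X}$ for $s\neq 1$ and $\big[\log_{k+2}x\big]^{X}$ for $s=1$, which is finite as $X\to\infty$ exactly when $s>1$. The choice $N_{k+1}=\text{Tower}(k+1)$ is precisely what makes this rigorous: for $n>N_{k+1}$ all of $\log n,\log_2 n,\dots,\log_{k+1}n$ are defined and exceed $1$, they are nonincreasing in the index ($\log_a n\ge\log_b n$ whenever $a\le b$), and the integrand is positive and eventually decreasing, so the integral test applies. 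I would also remark here that, since $k(n)\to\infty$, for any fixed $i<j$ all but finitely many terms of the series in the lemma use the generic formulas $\delta_j(n)=(\log_{j+1}n)^{-1}$ and $\varepsilon_i(n)=(n\log n\cdots\log_{i+1}n)^{-1}$; the special values at index $k(n)$ (in particular $\varepsilon_{k(n)}(n)=0$) affect only finitely many terms and hence neither convergence nor divergence.

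With the fact available, the convergent family is immediate: $\delta_i(n)\varepsilon_i(n)=\big(n\log n\cdots\log_i n\,(\log_{i+1}n)^2\big)^{-1}$, which is the above series with $k=i$ and $s=2>1$. For the divergent family, fix $i<j$; then $j+1\ge i+2$, hence $\log_{j+1}n\le\log_{i+2}n$ for $n>N_{j+1}$, and therefore $\delta_j(n)\varepsilon_i(n)=(\log_{j+1}n)^{-1}(n\log n\cdots\log_{i+1}n)^{-1}\ge(n\log n\cdots\log_{i+1}n\log_{i+2}n)^{-1}$. The right-hand side is the general term of the above series with $k=i+1$ and $s=1$, which diverges, so $\sum_{n>N_j}\delta_j(n)\varepsilon_i(n)$ diverges by comparison. (When $j=i+1$ the inequality is an equality, the lower bound being exactly $\varepsilon_{i+1}(n)$.)

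I expect the only genuine work to lie in the careful set-up of the Bertrand test, not in the lemma itself: checking well-definedness and positivity of all the iterated logarithms on $n>N_{k+1}$, the index-monotonicity $\log_a n\ge\log_b n$, and eventual monotonicity of the integrand so that the integral test is legitimate. Once that scaffolding is in place, both halves of \cref{convdiv} are one-line comparisons.
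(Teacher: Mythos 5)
Your proof is correct. The paper itself disposes of this lemma in one line, citing Cauchy's Condensation Test, so the only real difference is which classical test you invoke for the iterated-logarithm (Bertrand/Abel--Dini) series: you derive the criterion ``converges iff $s>1$'' via the integral test and the antiderivative identity $\frac{d}{dx}\log_{k+1}x = (x\log x\cdots\log_k x)^{-1}$, whereas condensation would reduce the depth of the logarithm iteratively. Both routes are standard and of comparable length; your write-up has the advantage of making explicit the two reductions the paper leaves implicit, namely the computation $\delta_i(n)\varepsilon_i(n) = \bigl(n\log n\cdots\log_i n\,(\log_{i+1}n)^2\bigr)^{-1}$ for the convergent family and the comparison $\delta_j(n)\varepsilon_i(n)\ge \varepsilon_{i+1}(n)$ (using $\log_{j+1}n\le\log_{i+2}n$ on the relevant tail) for the divergent one, together with the observation that the choice $N_i=\mathrm{Tower}(i)$ guarantees all iterated logarithms involved are defined and at least $1$. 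Your side remark about the special values at index $k(n)$ is harmless but not needed: $k(n)$ is only defined later (indeed its definition relies on this lemma), so the lemma is to be read with the generic formulas for $\delta_j$ and $\varepsilon_i$; in any case, as you note, such overrides would affect only finitely many terms.
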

\begin{proof}
These are direct consequences of Cauchy's Condensation Test.
\end{proof}

\begin{definition}\label{def:kn}
We define $k(n)$, the rate at which the number of transitions grows. We define $k(n)$ in terms of fast growing functions $g, \text{Tower}$ and $h$ defined for $i \ge 1$ as follows:

$$
g(i) \eqdef \text{min} \left\{ N : \left( \sum_{n>N} \delta_{i-1}(n) \varepsilon_{i-1}(n) \right)  \le 2^{-i}  \right\}, \quad h(1) \eqdef 2
$$


\vspace{-4mm}
$$
h(i+1) \eqdef \left\lceil \text{max} \left\{ g(i+1), \text{Tower}(i+2),  \text{min} \left\{ m+1 \in \N :\sum^{m}_{n=h(i)} \eps_{i-1}(n) \geq 1 \right\} \right\} \right\rceil.
$$

Note that function $g$ is well defined by \cref{convdiv}, and
$h(i+1)$ is well defined since for all $i$, $\sum^{\infty}_{n=h(i)} \eps_{i-1}(n)$ diverges to infinity.
$k(n)$ is a slow growing unbounded step function defined in terms of $h$ as 
$k(n) \eqdef h^{-1}(n)$.
The Tower function features in the definition to ensure that the transition probabilities are always well defined. $g$ and $h$ are used to smooth the proofs of \cref{infwin} and \cref{claim:divergence} respectively.
\emph{Notation:} $N^{*} \eqdef \text{min}\{n\in \N : k(n) = 1 \}$. This is intuitively the first natural number for which the construction is well defined.

The reward $m_{n}$ which appears in the $n$-th gadget is defined such that it 
outweighs any possible reward accumulated up to that
point in previous gadgets. As such we define $m_{n} \eqdef
2k(n) \sum_{i=N^{*}}^{n-1} m_{i},$ with $m_{N^{*}} \eqdef 1$ and where $k(n)$ is the
branching degree.
\end{definition}

To simplify the notation,
the state $s_{0}$ in our theorem statements refers to
$s_{N^{*}}$.

\begin{restatable}{lemma}{lemwelldefined}\label{lem:welldefined}
For $k(n) \geq 1$, the transition probabilities in the gadgets are well defined.
\end{restatable}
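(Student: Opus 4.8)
The plan is to verify that at every random and every controlled state in the $n$-th gadget (for $n$ with $k(n)\ge 1$) the outgoing probabilities form a genuine probability distribution, i.e.\ each is in $[0,1]$ and they sum to $1$. There are two families of distributions to check: the random branching at $s_n$ governed by the $\delta_i(n)$, and the random branching at each controlled-successor state governed by the $\eps_i(n)$ (the small escape probability into $\bot$ versus the complementary probability of proceeding to $s_{n+1}$).

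First I would treat the $\delta$'s. By definition $\delta_{k(n)}(n) = 1 - \sum_{j=0}^{k(n)-1}\delta_j(n)$, so summation to $1$ is automatic; the only thing to prove is that each $\delta_i(n)\in[0,1]$, which for $i<k(n)$ is clear since $\delta_i(n)=1/\log_{i+1}n\in(0,1]$ provided $\log_{i+1}n\ge 1$, and for $i=k(n)$ amounts to showing $\sum_{j=0}^{k(n)-1}\delta_j(n)\le 1$. Here the Tower function does the work: since $k(n)=h^{-1}(n)$ and $h(i+1)\ge\text{Tower}(i+2)$, whenever $k(n)\ge i+1$ we have $n\ge\text{Tower}(i+2)$, hence $\log_{i+1}n\ge\text{Tower}(1)=e$, so each of the $k(n)-1$ terms $\delta_j(n)=1/\log_{j+1}n$ for $1\le j\le k(n)-1$ is at most... more carefully, I would bound $\sum_{j=0}^{k(n)-1}\delta_j(n)$ by noting the terms decrease rapidly ($\log_{j+1}n$ grows with $j$) and the largest terms are controlled because $n$ is forced to be at least a tower of height $k(n)+1$; a crude geometric-type bound then gives a sum strictly below $1$. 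Then for the $\eps$'s: at each controlled-successor state the escape probability is $\eps_i(n)$ and the complementary probability is $1-\eps_i(n)$, so again summation is automatic and one only needs $\eps_i(n)\in[0,1]$, which is immediate from $\eps_i(n)=1/(n\log n\log_2 n\cdots\log_{i+1}n)$ being a product of reciprocals of quantities $\ge 1$ (using the same Tower lower bound on $n$), together with $\eps_{k(n)}(n)=0$.

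The main obstacle I expect is the bound $\sum_{j=0}^{k(n)-1}\delta_j(n)\le 1$: this is where one must actually use the definition of $k(n)$ and the Tower condition quantitatively, rather than just term-by-term positivity. The key step is to observe that $k(n)=h^{-1}(n)$ together with $h(i+1)\ge\text{Tower}(i+2)$ forces $n$ to be enormous relative to $k(n)$ — specifically $n\ge\text{Tower}(k(n)+1)$ — so that $\log_{j+1}n\ge\text{Tower}(k(n)-j)$ for each $j$, and hence the $j$-th term is at most $1/\text{Tower}(k(n)-j)$; summing the reversed sequence $1/\text{Tower}(1),1/\text{Tower}(2),\dots$ gives a quantity bounded by $e^{-1}+e^{-e}+\cdots<1$, with the single $\delta_0(n)=1/\log n$ term handled separately (also $\le 1/\text{Tower}(k(n))$, which is tiny). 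All the remaining verifications ($\delta_i(n)\le 1$, $\eps_i(n)\le 1$, nonnegativity of $\delta_{k(n)}(n)$) then follow as routine consequences, and since the full statement is marked \texttt{restatable} I would defer the detailed inequality chasing to the appendix and keep the body proof to the structure just described.
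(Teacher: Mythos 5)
Your proposal follows essentially the same route as the paper's proof: it uses the fact that the definition of $k(n)$ (via $h$ and the Tower function) forces $n \ge \text{Tower}(k(n)+1)$, bounds each $\delta_j(n)=1/\log_{j+1}n$ by $1/\text{Tower}(k(n)-j)$, and sums the reciprocal towers (dominated by a geometric series in $1/e$) to conclude $\sum_{j=0}^{k(n)-1}\delta_j(n)<1$, so that $\delta_{k(n)}(n)\ge 0$ and the distribution at $s_n$ is well defined. The additional routine check that each $\eps_i(n)\in[0,1]$ is a harmless supplement the paper leaves implicit, so the proposal is correct and matches the paper's argument.
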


\newcommand{\lemwelldefinedproof}{
\begin{proof}
Recall that Tower$(i)$ is $i$ repeated exponentials. Thus, log(Tower$(i)$)=Tower$(i-1)$. 

When checking whether probabilities in a given gadget are well defined,
first we choose a gadget. The choice of gadget gives us a branching degree
$k(n)+1$ which in turn lower bounds the value of $n$ in that gadget.
So for a branching degree of $k(n)+1$, we have $n$ lower bounded
by Tower$(k(n)+1)$ by definition of $k(n)$.

We need to show that $\sum_{i=0}^{k(n)-1} \delta_{i}(n) \leq 1$.
Indeed, we have that:

\[
\sum_{i=0}^{k(n)-1} \delta_{i}(n) 
\leq 
\sum_{i=0}^{k(n)-1} \dfrac{1}{\text{log}_{i+1}(\text{Tower}(k(n)+1))}\\
=
\sum_{i=1}^{k(n)} \dfrac{1}{\text{Tower}(i)}
<
\sum_{i=1}^{k(n)} \dfrac{1}{e^{i}}
<
\sum_{i=1}^{k(n)} \dfrac{1}{2^{i}}
<
1.
\]
Hence, for $k(n) \geq 1$, the transition probabilities are well defined, i.e.\ $\delta_{0}(n), \delta_{1}(n),...,\delta_{k(n)}(n)$ do indeed sum to 1.
\end{proof}
}




\begin{restatable}{lemma}{lemmainfwin}\label{infwin}
For every $\varepsilon > 0$, there exists a strategy $\sigma_{\varepsilon}$ with  $\probm_{\mdp, s_{0}, \sigma_{\varepsilon}}(\liminfmpobj) \geq 1 - \eps$ that cannot fail unless it hits a $\perp$ state. Formally, $\probm_{\mdp, s_{0}, \sigma_{\varepsilon}}(\liminfmpobj \wedge \always(\neg \perp)) = \probm_{\mdp, s_{0}, \sigma_{\varepsilon}}(\always( \neg \perp)) \geq 1- \varepsilon$. So in particular, $\valueof{\mdp,\liminfmpobj}{s_{0}} = 1$.
\end{restatable}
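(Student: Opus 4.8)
The plan is to construct, for each $\varepsilon > 0$, a strategy $\sigma_\varepsilon$ that plays "honestly" — always mimicking in the controlled state the index of the branch it arrived on — and to show that the only way it fails is by hitting a $\bot$ state, and that the total probability of hitting some $\bot$ state can be made $\le \varepsilon$ by starting sufficiently deep in the chain (i.e.\ at $s_{N^*}$ with $N^*$ large; recall the convention that $s_0$ in the statement refers to $s_{N^*}$, and $N^*$ itself grows with how small we want $\varepsilon$, or rather we skip an initial segment of gadgets using the white "skip" states of \cref{chain}). First I would define $\sigma_\varepsilon$: in every gadget, if the run entered $c_n$ via the branch indexed $i$ (incurring reward $+i m_n$), then from $c_n$ play the controlled edge indexed $i$ (incurring reward $-i m_n$); crucially this edge carries no $\bot$-risk only when $i = k(n)$, wait — re-reading the gadget, every controlled branch except one leads with probability $\varepsilon_i(n)$ to $\bot$, so mimicking branch $i$ incurs risk $\varepsilon_i(n)$. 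Actually the honest strategy must simply mimic, and we bound the accumulated risk.

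The key steps, in order: (1) Verify that conditioned on never hitting $\bot$, every run consistent with $\sigma_\varepsilon$ satisfies $\liminfmpobj$. This is where the reward bookkeeping matters: in gadget $n$, the honest play adds $+i m_n$ then $-i m_n$, so the net contribution of each completed gadget is $0$, and hence the running total payoff returns to its pre-gadget value infinitely often; moreover within a gadget the partial sums deviate from the baseline by at most $k(n) m_n$ in absolute value. Since the number of steps up to and including gadget $n$ grows (at least linearly — each gadget has a bounded-by-$k(n)+\text{const}$ number of states, but crucially the step count when entering gadget $n$ is some $t_n \to \infty$), and since the total payoff stays in a band of width $O(k(n) m_n)$ around $0$ over the steps spent in gadget $n$... here I must be careful: $m_n$ grows doubly-exponentially, so I cannot naively divide. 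The right argument: the mean payoff at any step during gadget $n$ is (baseline total, which is $0$ or close to it) $\pm k(n)m_n$, divided by the step count, which is at least $t_n \ge$ (something growing like the sum of previous branching degrees). I would need $t_n$ to dominate $k(n) m_n$ — but $m_n = 2k(n)\sum_{i<n} m_i$ grows far faster than any step count can. So the honest strategy as I described does \emph{not} obviously win pointwise on the mean payoff; the resolution must be that the strategy only actually plays a large reward $+i m_n$ with small probability, or that it plays $i = 0$ (reward $0$) on all but finitely many gadgets with high probability. Let me reconsider: the \emph{random} choice picks branch $i$ with probability $\delta_i(n)$; branch $0$ has probability $\delta_0(n) = 1/\log n \to 0$, and the top branch $k(n)$ has probability $\to 1$. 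So with high probability the run takes a \emph{high}-index branch with a huge reward $+k(n)m_n$, mimicks with $-k(n)m_n$, and the partial sums swing by $\pm k(n)m_n$ — which kills the mean payoff unless the step count is even larger.

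Therefore the correct core argument (step 2) must be: show that the swings, while individually large, occur at steps where the denominator (step count) is correspondingly huge because $k(n)+1$ is the \emph{branching degree} and the chain up to gadget $n$ has been traversed — no wait, that still doesn't give enough steps. The genuine resolution, which I'd need to pin down from the $m_n$ and $k(n)$ definitions, is that the step counter \emph{is implicit in the state} and the construction is designed (via the Tower function and $h$) so that by the time branching degree reaches $k(n)+1$, the index $n$ — hence the step count on entering gadget $n$ — is at least $\text{Tower}(k(n)+1)$, which grows fast enough relative to $m_n$. Concretely $m_n \le (2k(n))^n$-ish, while $n \ge \text{Tower}(k(n))$, and $\text{Tower}$ beats any fixed iterated exponential in its argument, so for the relevant regime the step count dominates. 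So step (2) is: bound $m_n$ by a tower-type function of $k(n)$ and compare with the step count $t_n \ge \text{Tower}(k(n)+1)$ to conclude $k(n)m_n / t_n \to 0$, giving $\liminf$ mean payoff $\ge 0$ (in fact $\to 0$) on every $\bot$-free run. Then step (3): bound $\probm_{\mdp,s_0,\sigma_\varepsilon}(\eventually \bot)$. Hitting $\bot$ in gadget $n$ after taking branch $i$ has probability $\delta_i(n)\varepsilon_i(n)$ summed over $i < k(n)$ (the top branch is safe, $\varepsilon_{k(n)}(n)=0$); by \cref{convdiv} the "diagonal" series $\sum_n \delta_i(n)\varepsilon_i(n)$ converges for each $i$, and the definition of $k(n)$ via $g$ ensures the tails are summably small, so $\sum_{n \ge N^*}\sum_{i<k(n)}\delta_i(n)\varepsilon_i(n) \le \varepsilon$ for $N^*$ large; using the skip-chain of \cref{chain} we may assume the run starts at $s_{N^*}$. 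Finally (4): combine — on the $\bot$-free event (probability $\ge 1-\varepsilon$) the run wins $\liminfmpobj$, and since the strategy \emph{cannot} fail otherwise, $\probm(\liminfmpobj \wedge \always\neg\bot) = \probm(\always\neg\bot) \ge 1-\varepsilon$; letting $\varepsilon \to 0$ gives $\valueof{\mdp,\liminfmpobj}{s_0} = 1$.

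The main obstacle I anticipate is step (2): carefully establishing that the doubly-exponentially growing rewards $m_n$ are nonetheless dominated by the (tower-growing) step count, so that the large reward swings within each gadget vanish in the mean. This requires an explicit estimate relating $m_n$, $k(n)$, and $n$ via the $\text{Tower}$ and $h$ definitions, and is exactly the kind of "convergence effect" the introduction warns about — one cannot treat the rewards as bounded. The $\bot$-probability bound (step 3) is comparatively routine given \cref{convdiv} and the design of $k(n)$.
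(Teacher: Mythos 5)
Your step (3)--(4) (bounding the probability of ever hitting $\perp$ by the convergent sum $\sum_{n}\sum_{i<k(n)}\delta_i(n)\varepsilon_i(n)$, using the definition of $k(n)$ via $g$, and skipping a prefix of gadgets via the white chain of \cref{chain} to make the tail $\le\eps$) is essentially the paper's argument and is fine. The gap is in your central step (2). You correctly observe that the net reward of each honestly-played gadget is $0$, but you then treat the within-gadget deviation as a two-sided swing of size $k(n)m_n$ and try to kill it by dividing by the step count, claiming the step count dominates because $n \ge \mathrm{Tower}(k(n)+1)$. That resolution is false: each gadget contributes only a constant number of steps, so the step count on entering gadget $n$ is $\Theta(n)$, while $m_n = 2k(n)\sum_{i<n}m_i \ge 2^{\,n-N^*}$ grows at least exponentially in $n$; hence $k(n)m_n$ divided by the step count tends to $+\infty$, not $0$. (Your Tower comparison conflates ``$n$ is at least Tower of $k(n)$'' with ``the step count is at least Tower of $k(n)$ relative to $m_n$''; the relevant race is between $m_n$ and $n$, and $m_n$ wins.)

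Fortunately no such domination is needed, and this is the idea you missed: in \cref{infinitegadget} the positive reward $+i\,m_n$ is collected \emph{before} the matching $-i\,m_n$, so under the mimicking strategy the running total payoff never dips \emph{below} its pre-gadget baseline; after the (finite) skip prefix that baseline is $\ge 0$, so on every $\perp$-free run the total payoff is eventually always $\ge 0$, hence the mean payoff is eventually $\ge 0$ at every step and $\liminf \ge 0$ holds no matter how large the upward swings are. This one-sidedness is exactly what the paper invokes (``the total reward never dips below $0$''), and it also immediately gives the equality $\probm_{\mdp,s_0,\sigma_\eps}(\liminfmpobj \wedge \always(\neg\perp)) = \probm_{\mdp,s_0,\sigma_\eps}(\always(\neg\perp))$ that the statement requires. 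With step (2) replaced by this observation, the rest of your outline goes through and coincides with the paper's proof.
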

\begin{proof}[Proof sketch]
(Full proof in \cref{sec:appreward}.)
We define a strategy $\sigma$ which in $c_{n}$ always mimics the choice in $s_{n}$. Playing according to $\sigma$, the only way to lose is by dropping into the $\perp$ state. This is because by mimicking, the player finishes each gadget with a reward of 0. From $s_{0}$, the probability of surviving while playing in all the gadgets is 
$$
\prod_{n \ge N^{*}} \left( 1 - \sum_{j=0}^{k(n)-1} \delta_{j}(n) \cdot \varepsilon_{j}(n) \right) > 0.
$$
Hence the player has a non zero chance of winning when playing $\sigma$.

When playing with the ability to skip gadgets, as illustrated in \cref{chain}, all runs not visiting a $\bot$ state are winning since the total reward never dips below $0$.
We then consider the strategy $\sigma_{\varepsilon}$ which plays like $\sigma$
after skipping forwards by sufficiently many gadgets
(starting at $n \gg N^{*}$). Its
probability of satisfying $\liminfmpobj$
corresponds to a tail of the
above product, which can be made arbitrarily close to $1$ (and thus $\ge 1-\eps$)
by \cref{prop:tail-product}.
Thus the strategies $\sigma_{\varepsilon}$ for arbitrarily small $\varepsilon >0$
witness that $\valueof{\mdp,\liminfmpobj}{s_{0}} = 1$.
\end{proof}

\newcommand{\lemmainfwinproof}{
\begin{proof}
We define a strategy $\sigma$ which in $c_{n}$ always mimics the choice in $s_{n}$. We first prove that playing this way gives us a positive chance of winning.
Then we show that there are strategies $\sigma_{\varepsilon}$ that attain $1- \varepsilon$ from $s_{0}$ without hitting a $\perp$ state.
This implies in particular that $\valueof{\mdp,\liminfmpobj}{s_{0}} = 1$.

 Playing according to $\sigma$, the only way to lose is by dropping into the $\perp$ state. This is because by mimicking, the player finishes each gadget with a reward of 0. In the $n$-th gadget, the chance of reaching the $\perp$ state is $\sum_{j=0}^{k(n)-1}\delta_{j}(n) \cdot \varepsilon_{j}(n)$. Thus, the probability of surviving while playing in all the gadgets is 
$$
\prod_{n \ge N^{*}} \left( 1 - \sum_{j=0}^{k(n)-1} \delta_{j}(n) \cdot \varepsilon_{j}(n) \right).
$$
However, by \cref{prop:product-sum}, this product is strictly greater than 0 if and only if the sum
$$
\sum_{n \ge N^{*}} \left( \sum_{i=0}^{k(n)-1} \delta_{i}(n) \varepsilon_{i}(n) \right)
$$
is finite. With some rearranging exploiting the definition of $k(n)$ we see that this is indeed the case:

\begin{align*}
& \sum_{n \ge N^{*}} \left( \sum_{i=0}^{k(n)-1} \delta_{i}(n) \varepsilon_{i}(n) \right) \\
\le & \sum_{i \ge 1} \left( \sum_{n=g(i)}^{\infty} \delta_{i-1}(n) \varepsilon_{i-1}(n) \right) &\text{by definition of $k(n)$}\\
\le & \sum_{i \ge 1} 2^{-i}  &\text{by definition of $g(n)$}\\
\le & 1
\end{align*}
Hence the player has a non zero chance of winning.

When playing with the ability to skip gadgets, as illustrated in \cref{chain},
all runs not visiting a $\perp$ state are winning since the total reward never
dips below $0$.
Hence $\probm_{\mdp, s_{0}, \sigma_{\varepsilon}}(\liminfmpobj \wedge \neg \perp) = \probm_{\mdp, s_{0}, \sigma_{\varepsilon}}( \neg \perp )$.
Thus the idea is to skip an arbitrarily long prefix of gadgets to push
the chance of winning $\varepsilon$ close to $1$ by pushing the
chance of visiting a $\perp$ state $\varepsilon$ close to $0$.
From the $N$-th state, for $N \ge N^{*}$, the chance of winning is
$$
\prod_{n \ge N} \left( 1 - \sum_{j=0}^{k(n)-1} \delta_{j}(n) \cdot \varepsilon_{j}(n) \right) > 0
$$
By \cref{prop:tail-product} this can be made arbitrarily close to $1$ by choosing $N$
sufficiently large.

Let 
$N_{\varepsilon} \eqdef \text{min} 
\left\{ 
    N \in \N \mid 
        \prod_{n \ge N} \left( 
            1 - \sum_{j=0}^{k(n)-1} \delta_{j}(n) \cdot \varepsilon_{j}(n) 
            \right) 
        \geq 1-\varepsilon 
\right\}$.
Now define the strategy $\sigma_{\varepsilon}$ to be the strategy that plays like $\sigma$ after skipping forwards by $N_{\varepsilon}$ gadgets. Thus, by definition $\sigma_{\varepsilon}$ attains $1-\varepsilon$ for all $\varepsilon > 0$.

Thus, by playing $\sigma_{\varepsilon}$ for an arbitrarily small $\varepsilon$ the chance of winning must be arbitrarily close to 1. Hence, $\valueof{\mdp,\liminfmpobj}{s_{0}} = 1$. 
\end{proof}
}

\smallskip
\begin{restatable}{lemma}{lemmainflose}\label{inflose}
For any FR strategy $\sigma$, almost surely either the mean payoff dips below $-1$ infinitely often, or the run hits a $\perp$ state, i.e.\ $\probm_{\mathcal{M}, \sigma, s_{0}}(\liminfmpobj)=0$.
\end{restatable}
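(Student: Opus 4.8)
The plan is to show that any FR strategy, having only finitely many memory modes, cannot reliably mimic the initial random choice in every gadget, and that every mismatch costs a reward drop of at least $m_n$, which (by the choice of $m_n$) dominates everything accumulated so far and forces the mean payoff below $-1$. First I would fix an FR strategy $\sigma$ with memory set $\memory$ of size $K = |\memory|$. In the $n$-th gadget with $n$ large enough that $k(n)+1 > K$, the random transition from $s_n$ sends the run to one of $k(n)+1$ distinct predecessor states of $c_n$, each with positive probability $\delta_i(n)$; when the run reaches $c_n$ the strategy has at most $K < k(n)+1$ available memory modes, so by a pigeonhole argument at least two of the incoming branches $i \neq i'$ lead to the \emph{same} memory configuration at $c_n$, hence to the same controlled choice out of $c_n$. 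Therefore, conditioned on reaching $c_n$ and not having dropped into $\bot$, there is a branch (at least one of $i, i'$) on which the controlled choice does \emph{not} match the incoming index; the probability of entering that ``bad'' branch is at least $\min\{\delta_i(n), \delta_{i'}(n)\} > 0$, and on it the run either falls into $\bot$ (with probability $\eps_\cdot(n)$) or completes the gadget with net reward $+im_n - jm_n$ for some $i \neq j$, i.e.\ a net reward of absolute value at least $m_n$, and in particular if the controlled choice index is smaller than the incoming index the net reward is $\le -m_n$.

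The second ingredient is that $m_n$ was chosen so that $m_n \ge 2k(n)\sum_{i=N^*}^{n-1} m_i$, hence $m_n$ exceeds (by a factor that grows) the maximum reward magnitude obtainable in all earlier gadgets combined; moreover $m_n / (\text{length of the run up to the end of gadget } n) \to \infty$, since the path length grows only linearly in the number of gadgets while $m_n$ grows super-exponentially. So a single mismatched gadget of the losing kind at stage $n$ drives the \emph{partial} mean payoff, evaluated at the step where the reward $-jm_n$ (or $-im_n$) is collected, below $-1$ once $n$ is large. The plan is then to combine these: either the run hits $\bot$ at some finite stage (and is losing by definition, since $\bot$ is an infinite $-1$ chain), or it survives all gadgets, in which case I invoke a second Borel--Cantelli argument. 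Let $A_n$ be the event ``the run enters a bad branch in gadget $n$ \emph{and} survives it.'' Conditioned on the past and on survival so far, $\Prob{}{A_n}$ is bounded below by $\min\{\delta_i(n),\delta_{i'}(n)\}\cdot(1 - \eps_{\max}(n))$; one checks via \cref{convdiv} (the divergence half) that $\sum_n \Prob{}{A_n \mid \text{survive to } n}$ diverges — this is exactly where the function $g$ and the divergence of $\sum_{n>N_j}\delta_j(n)\eps_i(n)$ for $i<j$ are used — so by the second Borel--Cantelli lemma (conditional version, using that on the survival event the $A_n$ happen with summable-to-infinity conditional probability), almost surely on the survival event infinitely many $A_n$ occur. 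Each such occurrence with the ``smaller index chosen'' orientation (which happens on a definite fraction of the bad branches, or can be arranged by picking the worse of the two pigeonhole-collided branches) forces the mean payoff below $-1$ at that stage; hence $\liminf$ mean payoff $< 0$, so $\probm_{\mathcal{M},s_0,\sigma}(\liminfmpobj) = 0$.

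I would structure the write-up as: (1) pigeonhole on memory modes at $c_n$ for $n$ large; (2) quantify the ``bad branch'' probability $\ge \min_i\delta_i(n)$ and note the reward drop is $\le -m_n$ on the losing orientation; (3) growth estimate showing $-m_n$ over the elapsed step count is $\le -1$ eventually; (4) a conditional Borel--Cantelli argument, feeding in the divergence from \cref{convdiv}, to conclude that almost surely either $\bot$ is hit or infinitely many losing bad branches occur. The main obstacle I anticipate is step (4): making the conditional second Borel--Cantelli rigorous, because the events $A_n$ are not independent (the memory mode reached at $c_n$, and even which two indices collide, depends on the strategy's behaviour in earlier gadgets) and because one must condition on the ever-shrinking event of not yet having hit $\bot$. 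The clean way around this is to work with the filtration generated by the run up to the entry of $s_n$, bound $\Prob{}{A_n \mid \?F_{n-1}}$ from below \emph{uniformly} on the survival event by a deterministic quantity $p_n$ with $\sum_n p_n = \infty$, and then apply Lévy's extension of Borel--Cantelli (the $\{A_n \text{ i.o.}\} \supseteq \{\sum \Prob{}{A_n\mid\?F_{n-1}} = \infty\}$ direction) to conclude. The reward-magnitude bookkeeping in steps (2)--(3) is routine given the definition of $m_n$ and the linear growth of path length, so I would state it and defer the arithmetic to the appendix.
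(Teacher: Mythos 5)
Your plan goes wrong at the point where a ``mismatch'' is converted into a loss. In the gadget, an incoming branch $i$ pays $+i\,m_n$ and a controlled choice $j$ pays $-j\,m_n$, so a mismatch is only harmful for the mean payoff when the \emph{chosen} index is \emph{larger} than the incoming one (net $\le -m_n$); if the chosen index is smaller, the net reward is $+(i-j)m_n>0$ and the only penalty is the \emph{higher} $\bot$-risk $\eps_j(n)$ of the lower branch. (Your write-up has this orientation reversed, but that is the lesser issue.) The real gap is that your event $A_n$ (``enter a confused branch and survive it'') does not imply anything bad: a strategy that, in the confused memory mode, \emph{always} plays the lower of the two collided indices never suffers the reward disaster at all, so infinitely many occurrences of $A_n$ are perfectly compatible with $\liminfmpobj$. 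Your attempt to patch this --- ``a definite fraction of the bad branches have the losing orientation, or can be arranged by picking the worse of the two collided branches'' --- does not work, because the orientation is chosen by the strategy, not by you: the fraction can be zero. What kills the conservative strategy is a different channel, namely that on the $\delta_{j(n)}(n)$-branch it incurs $\bot$-risk $\eps_{i(n)}(n)$, and $\sum_n \delta_{j(n)}(n)\eps_{i(n)}(n)$ diverges; so for that strategy the conclusion is almost-sure absorption in $\bot$, not infinitely many mean-payoff dips. The paper's proof handles exactly this tradeoff: it lower-bounds the per-gadget probability of the combined bad event (mean payoff $\le -1$ \emph{or} hitting $\bot$) by
$e_n=\delta_{j(n)}(n)\bigl(\alpha_n\eps_{j(n)}(n)+(1-\alpha_n)\eps_{i(n)}(n)\bigr)+\delta_{i(n)}(n)\bigl(\alpha_n+(1-\alpha_n)\eps_{i(n)}(n)\bigr)$,
where $\alpha_n$ is the strategy's probability of playing the higher confused index, and then shows (\cref{alglose}) that $\sum_n e_n$ diverges \emph{whatever} the sequence $\alpha_n$, so $\prod_n(1-e_n)=0$. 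Your proposal contains no analogue of this dichotomy over $\alpha_n$, and without it the Borel--Cantelli step is aimed at the wrong event.

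Two further points you would need to repair even after fixing the main gap. First, the pigeonhole step ``two incoming branches lead to the same memory configuration at $c_n$'' is only immediate for deterministic memory updates; with randomized updates the post-update memory is a distribution depending on the observed reward, and the paper first argues (by a local optimality argument, choosing for each observed reward level the best mode) that randomizing the memory update cannot decrease the local bad-event probability, so one may assume deterministic updates. Second, when you do invoke a confusion pair you must also rule out the strategy playing indices \emph{outside} the pair at $c_n$; the paper's \cref{claim:confusion-simple} shows every such choice is weakly worse, which is what licenses reducing to the single parameter $\alpha_n$. Your growth estimate (a single drop of $-m_n$ forces the running mean below $-1$ because path length is linear and $m_n$ dominates all past rewards) and the final ``eventually-always avoid the bad event'' product bound are fine and match the paper.
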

\begin{proof}[Proof sketch]
(Full proof in \cref{sec:appreward}.)
Let $\sigma$ be some FR strategy with $k$ memory modes.
We prove a \emph{lower bound} $e_n$ on the probability of a local error
(reaching a $\bot$ state, or seeing a mean payoff $\le -1$)
in the current $n$-th gadget. This lower bound $e_n$ holds regardless
of events in past gadgets, regardless of the memory mode of $\sigma$
upon entering the $n$-th gadget, and cannot be improved by
$\sigma$ randomizing its memory updates.

The main idea is that,
once $k(n) > k+1$
(which holds for $n \ge N'$ sufficiently large)
by the Pigeonhole Principle there will always be
a memory mode confusing at least two different branches $i(n),j(n) \neq k(n)$
of the previous random choice at state $s_n$.
This confusion yields a probability $\ge e_n$
of reaching a $\bot$ state or seeing a mean payoff $\le -1$,
regardless of events in past gadgets and regardless
of the memory upon entering the $n$-th gadget.
We show that $\sum_{n \ge N'} e_n$ is a \emph{divergent} series.
Thus, by \cref{prop:product-sum}, $\prod_{n \ge N'} (1 - e_n)=0$. 
Hence, $\probm_{\mathcal{M}, \sigma, s_{0}}(\liminfmpobj) \le \prod_{n \ge
N'} (1 - e_n) = 0$.
\end{proof}

\cref{infwin} and \cref{inflose} yield the following theorem.

\begin{theorem} \label{infinitesummary}
There exists a countable, finitely branching and acyclic MDP $\mathcal{M}$ whose step counter is implicit in the state for which 
$\valueof{\mdp,\liminfmpobj}{s_{0}} = 1$ and any FR strategy $\sigma$ is such that 
$\probm_{\mdp, s_{0}, \sigma}(\liminfmpobj)=0$.
In particular, there are no $\varepsilon$-optimal $k$-bit Markov strategies
for any $k \in \N$ and any $\varepsilon < 1$ for
$\liminfmpobj$ in 
countable MDPs.
\end{theorem}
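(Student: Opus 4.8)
The plan is to instantiate the MDP $\mathcal{M}$ already built in \cref{infinitegadget,chain} with the parameters of \cref{def:kn}, and then combine the two preceding lemmas. First I would record the structural properties that the theorem demands of $\mathcal{M}$: it is countable (one gadget per $n \ge N^{*}$, each with $k(n)+1$ branches plus a bounded number of auxiliary states, and the $\bot$-states expand into one-way $-1$-chains), finitely branching (each $k(n)<\infty$, and all other states have bounded out-degree), and acyclic (the chain in \cref{chain} never returns to an earlier gadget, and the skipping lane of white boxes only moves forward). The remark immediately following the construction already observes that every path from $s_{0}$ to a state $s$ has the same length $n_{s}$, i.e.\ the step counter is implicit in the state.

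Next I would simply invoke the two lemmas. \cref{infwin} gives $\valueof{\mathcal{M},\liminfmpobj}{s_{0}}=1$, witnessed by the strategies $\sigma_{\varepsilon}$ that mimic after skipping a long enough prefix of gadgets; its correctness rests on the convergence half of \cref{convdiv} together with the propositions relating divergent sums to vanishing products. \cref{inflose} gives $\probm_{\mathcal{M},\sigma,s_{0}}(\liminfmpobj)=0$ for every FR strategy $\sigma$, via the Pigeonhole bound $e_{n}$ on the per-gadget local error probability once $k(n)>k+1$, the divergence half of \cref{convdiv} yielding $\sum_{n} e_{n}=\infty$, and hence $\prod_{n}(1-e_{n})=0$. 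Together these two facts establish the first sentence of the theorem verbatim.

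For the ``in particular'' clause, the only genuinely new step is the observation that on an MDP whose step counter is implicit in the state, a $k$-bit Markov strategy collapses to a finite-memory strategy. Concretely, a $k$-bit Markov strategy bases its move at a partial run $\rho s$ on the triple consisting of the state $s$, the step count $\len{\rho}$, and the current $k$ bits of general-purpose memory; but in $\mathcal{M}$ we have $\len{\rho}=n_{s}$, a fixed function of $s$, so the move depends only on $(s,\text{$k$ bits})$. That is exactly a deterministic finite-memory strategy with at most $2^{k}$ memory modes, hence an FR strategy, so by \cref{inflose} it attains probability $0$ for $\liminfmpobj$ from $s_{0}$. Since the value there is $1$, such a strategy is not $\varepsilon$-optimal for any $\varepsilon<1$, and this holds for every $k\in\N$.

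I expect no real obstacle at the level of this theorem: essentially all of the difficulty lives upstream, in \cref{infwin} (the mimic-and-skip strategy and the product estimates) and \cref{inflose} (the Pigeonhole lower bound on the error probability and the divergence argument), both of which are assumed here. The only thing requiring a sentence of care is the bookkeeping that ``step counter implicit in the state'' genuinely reduces $k$-bit Markov strategies to finite memory \emph{on this particular} $\mathcal{M}$, which is precisely what the acyclicity and the equal-path-length property guarantee.
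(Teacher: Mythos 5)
Your proposal matches the paper's proof: the theorem is obtained exactly by combining \cref{infwin} and \cref{inflose} for the MDP of \cref{infinitegadget,chain}, and your extra observation that the implicit step counter collapses $k$-bit Markov strategies into finite-memory (hence FR) strategies with at most $2^{k}$ modes is precisely the intended (and in the paper only implicit) justification of the ``in particular'' clause. No gaps.
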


All of the above results/proofs also hold for $\liminftpobj$, giving us the following theorem.

\begin{theorem} \label{infinitesummarytp}
There exists a countable, finitely branching and acyclic MDP $\mathcal{M}$ whose step counter is implicit in the state for which 
$\valueof{\mdp,\liminftpobj}{s_{0}} = 1$ and any FR strategy $\sigma$ is such that 
$\probm_{\mdp, s_{0}, \sigma}(\liminftpobj)=0$.
In particular, there are no $\varepsilon$-optimal $k$-bit Markov strategies
for any $k \in \N$ and any $\varepsilon < 1$ for
$\liminftpobj$ in 
countable MDPs.
\end{theorem}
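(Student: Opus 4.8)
The plan is to reuse verbatim the same MDP $\mathcal{M}$ built above: it is acyclic, finitely branching, and its step counter is implicit in the state, so all the structural requirements of the theorem are already met. It then remains to observe that \cref{infwin} and \cref{inflose} already establish both halves of the statement for $\liminftpobj$, because the conclusions proved there are in fact slightly stronger than what the mean payoff objective literally needs.

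First, for the value $\valueof{\mdp,\liminftpobj}{s_{0}} = 1$, I would revisit the proof of \cref{infwin}. There the $\eps$-optimal strategy $\sigma_{\varepsilon}$ skips a sufficiently long prefix of gadgets (as in \cref{chain}) and then mimics; the decisive fact invoked is that on every run avoiding the $\perp$ states the accumulated reward never dips below $0$. This is literally a statement about the \emph{total} payoff: every $\perp$-avoiding run of $\sigma_{\varepsilon}$ has all partial sums nonnegative, hence lies in $\liminftpobj$. Since $\probm_{\mdp, s_{0}, \sigma_{\varepsilon}}(\always(\neg\perp)) \ge 1-\varepsilon$, we get $\probm_{\mdp, s_{0}, \sigma_{\varepsilon}}(\liminftpobj) \ge 1-\varepsilon$, and letting $\varepsilon \to 0$ gives $\valueof{\mdp,\liminftpobj}{s_{0}} = 1$ (in particular optimal strategies still need not exist, but that is not claimed here).

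For the lower bound I would invoke \cref{inflose}: for any FR strategy $\sigma$, almost surely the run either hits a $\perp$ state or its running mean payoff drops below $-1$ infinitely often. Both events force $\liminftpobj$ to fail. If a $\perp$ state is hit, the run has an infinite suffix of $-1$ rewards, so the partial sums tend to $-\infty$; and whenever the mean payoff at some step $t \ge 1$ is $\le -1$, the partial sum at that step is $\le -t \le -1$, so if this happens infinitely often then the $\liminf$ of the partial sums is $\le -1 < 0$. (Equivalently, one may note the elementary inclusion $\liminftpobj \subseteq \liminfmpobj$ as events — if $\liminf_n S_n \ge 0$ then $S_n/n \ge -\varepsilon/n$ eventually, so $\liminf_n S_n/n \ge 0$ — and simply quote $\probm_{\mdp,s_0,\sigma}(\liminfmpobj)=0$ from \cref{inflose}.) Hence $\probm_{\mdp, s_{0}, \sigma}(\liminftpobj)=0$ for every FR strategy $\sigma$, combining with the previous paragraph to give the first assertion of the theorem.

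Finally, for the ``in particular'' about $k$-bit Markov strategies: since the step counter of $\mathcal{M}$ is implicit in the state, the step-counter component of a $k$-bit Markov strategy carries no information beyond the current state, so such a strategy is simulated on $\mathcal{M}$ by an FR strategy with $2^k$ memory modes; the lower bound above then rules out $\varepsilon$-optimal $k$-bit Markov strategies for every $k \in \N$ and every $\varepsilon < 1$, exactly as in \cref{infinitesummary}. I do not expect a real obstacle here; the only points needing a line of care are confirming that the winning strategy of \cref{infwin} keeps the \emph{total} reward nonnegative (which it does by construction, since mimicking nets exactly $0$ per gadget and skipping is reimbursed), and the routine observation that ``eventually only $-1$ rewards'' and ``mean $\le -1$ infinitely often'' each already push the partial sums below $0$ in the limit.
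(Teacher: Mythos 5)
Your proposal is correct and follows essentially the same route as the paper, which proves \cref{infinitesummarytp} by simply reusing the MDP of \cref{infinitegadget}/\cref{chain} and observing that the proofs of \cref{infwin} and \cref{inflose} carry over to $\liminftpobj$. Your explicit verification --- that the mimicking-after-skipping strategy keeps the total payoff nonnegative from some finite prefix onward on $\perp$-avoiding runs, and that the lower bound transfers via the pointwise inclusion $\liminftpobj \subseteq \liminfmpobj$ (or directly, since both ``hit $\perp$'' and ``mean $\le -1$ infinitely often'' drive the partial sums below $0$ in the limit) --- is exactly the justification the paper leaves implicit.
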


\subsection{Optimal strategies}


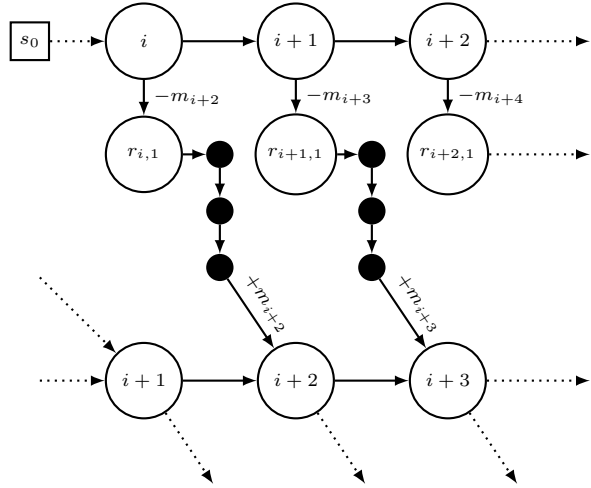
\begin{figure}[t]
\begin{center}
\begin{tikzpicture}

\node[draw, minimum height=0.5cm, minimum width=0.5cm] (I1) at (-0.5,0) {\scriptsize $s_{0}$};
\node[] (I2) at (-0.5,-4.5) {};
\node[] (I3) at (7,0) {};
\node[] (I4) at (7,-4.5) {};

\node[] (HI1) at (-0.5,-3) {};
\node[] (HI2) at (7,-1.5) {};
\node[] (HI3) at (2,-6) {};
\node[] (HI4) at (4,-6) {};
\node[] (HI5) at (6,-6) {};

\node[draw, circle, minimum width=1cm] (W1) at (1,0) {\scriptsize $i$};
\node[draw, circle, minimum width=1cm] (W2) at (3,0) {\scriptsize $i+1$};
\node[draw, circle, minimum width=1cm] (W3) at (5,0) {\scriptsize $i+2$};

\node[draw, circle, minimum width=1cm] (W4) at (1,-4.5) {\scriptsize $i+1$};
\node[draw, circle, minimum width=1cm] (W5) at (3,-4.5) {\scriptsize $i+2$};
\node[draw, circle, minimum width=1cm] (W6) at (5,-4.5) {\scriptsize $i+3$};

\node[draw, circle, minimum width=1cm] (R1) at (1,-1.5) {\scriptsize $r_{i, 1}$};
\node[draw, circle, minimum width=1cm] (R2) at (3,-1.5) {\scriptsize $r_{i+1, 1}$};
\node[draw, circle, minimum width=1cm] (R3) at (5,-1.5) {\scriptsize $r_{i+2, 1}$};

\node[draw, circle, fill=black] (B1) at (2,-1.5) {};
\node[draw, circle, fill=black] (B2) at (2,-2.25) {};
\node[draw, circle, fill=black] (B3) at (2,-3) {};

\node[draw, circle, fill=black] (B4) at (4,-1.5) {};
\node[draw, circle, fill=black] (B5) at (4,-2.25) {};
\node[draw, circle, fill=black] (B6) at (4,-3) {};

\draw[->, >=latex, dotted, thick] (I1) -- (W1);
\draw[->, >=latex, dotted, thick] (I2) -- (W4);
\draw[->, >=latex, dotted, thick] (W3) -- (I3);
\draw[->, >=latex, dotted, thick] (HI1) -- (W4);
\draw[->, >=latex, dotted, thick] (R3) -- (HI2);
\draw[->, >=latex, dotted, thick] (W6) -- (I4);
\draw[->, >=latex, dotted, thick] (W4) -- (HI3);
\draw[->, >=latex, dotted, thick] (W5) -- (HI4);
\draw[->, >=latex, dotted, thick] (W6) -- (HI5);

\draw[->,>=latex] 
(W1) edge node[right, midway]{\scriptsize $-m_{i+2}$} (R1)
(R1) edge (B1)
(B1) edge (B2)
(B2) edge (B3)
(B4) edge (B5)
(B5) edge (B6)
(B3) edge node[above, midway, sloped]{\scriptsize $+m_{i+2}$} (W5)
(W2) edge node[right, midway]{\scriptsize $-m_{i+3}$} (R2)
(R2) edge (B4)
(B6) edge node[above, midway, sloped]{\scriptsize $+m_{i+3}$}(W6)
(W3) edge node[right, midway]{\scriptsize $-m_{i+4}$} (R3);       
\draw[->,>=latex]
(W1) edge (W2)
(W2) edge (W3)
(W4) edge (W5)
(W5) edge (W6);

\end{tikzpicture}
\caption{Each row represents a copy of the MDP depicted in \cref{chain}.
  Each white circle labeled with a number $i$
  represents the correspondingly numbered gadget (like in \cref{infinitegadget})
  from that MDP. Now, instead
  of the bottom states in each gadget leading to an infinite losing chain,
  they lead to a restart state $r_{i,j}$ which leads to a fresh copy of the
  MDP (in the next row).
  Each restart incurs a penalty guaranteeing that the mean payoff dips
  below $-1$ before refunding it and continuing on in the next copy of the
  MDP. The states $r_{i,j}$ are labeled such that the $j$ indicates that if a
  run sees this state, then it is the $j$th restart. The $i$ indicates that
  the run entered the restart state from the $i$th gadget of the current copy
  of the MDP. The black states are dummy states inserted in order to preserve
  path length throughout.
\vspace{-5mm}
}
\label{restart}
\end{center}
\end{figure}


Even for acyclic MDPs with the step counter implicit in the state,
optimal (and even almost sure winning) strategies for $\liminfmpobj$
require infinite memory.
To prove this,
we consider a variant of the MDP from the previous section
which has been augmented to include restarts from the $\perp$ states.
For the rest of the section, $\mathcal{M}$ is the MDP constructed in \cref{restart}.

\begin{remark}
$\mathcal{M}$ is acyclic, finitely branching and the step counter is implicit
in the state. We now refer to the rows of \cref{restart} as gadgets,
i.e., a gadget is a single instance of \cref{chain} where the $\perp$ states lead to the next row.
\end{remark}

\begin{restatable}{lemma}{lemmaalmostwin} \label{almostwin}
There exists a strategy $\sigma$ such that $\probm_{\mdp,\sigma,s_{0}}(\liminfmpobj)=1$.
\end{restatable}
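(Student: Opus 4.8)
The plan is to build an almost surely winning strategy $\sigma$ on the MDP $\mathcal{M}$ from \cref{restart} by combining two ingredients: the $\eps$-optimal ``skip-and-mimic'' strategies from \cref{infwin} applied within each row, together with an outer schedule that forces $\eps \to 0$ as the number of restarts grows. Concretely, for each $j \in \N$ fix a threshold $N_j$ (in the $j$-th row) large enough that the skip-and-mimic strategy $\sigma_{\eps_j}$ of \cref{infwin}, when started at gadget $N_j$ of that row, wins $\liminfmpobj$ within that row with probability $\ge 1 - \eps_j$ while never hitting a $\perp$ (restart) state with the complementary probability; here $\eps_j$ is a summable sequence, e.g.\ $\eps_j \eqdef 2^{-j}$. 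The strategy $\sigma$ behaves as follows: in row $j$ (i.e.\ after having been restarted $j$ times), first traverse the chain of white ``skip'' boxes of \cref{chain} up to gadget $N_j$, and from then on mimic, exactly as $\sigma_{\eps_j}$ does. If a $\perp$/restart state is reached, the run enters row $j+1$ via some $r_{i,j}$, and $\sigma$ switches to the row-$(j+1)$ behaviour.

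The key steps, in order, are: \textbf{(1)} Recall from the proof of \cref{infwin} that within a single row, conditioned on never visiting a restart state, the total payoff (hence the mean payoff) along the mimic phase never dips below $0$, so such runs satisfy $\liminfmpobj$; and the probability of never visiting a restart state from gadget $N$ onward is the tail product $\prod_{n \ge N}(1 - \sum_{j=0}^{k(n)-1}\delta_j(n)\eps_j(n))$, which by \cref{prop:tail-product} tends to $1$ as $N \to \infty$. So $N_j$ witnessing the bound $1-\eps_j$ exists. \textbf{(2)} Argue that a run $\rho$ under $\sigma$ does one of two things: either it is restarted only finitely often, say it stays in row $j$ forever from some point on, or it is restarted infinitely often. \textbf{(3)} In the first case, the suffix of $\rho$ lying in row $j$ behaves like a run of $\sigma_{\eps_j}$ that never hits a restart state, so its mean payoff stays $\ge 0$ eventually and $\rho \in \liminfmpobj$; since $\liminfmpobj$ is a tail objective (stated in \cref{sec:prelim}), the finite prefix through earlier rows is irrelevant. \textbf{(4)} Show the second case has probability $0$: the probability of being restarted at least once while in row $j$ is at most $\eps_j = 2^{-j}$ \emph{regardless of the entry gadget $i$ into row $j$} (this uniformity must be extracted from \cref{infwin}, since one enters row $j$ at some $r_{i,j}$ rather than at the top), so the probability of being restarted in every row $0,1,2,\dots$ is bounded by $\prod_j \eps_j$-type reasoning — more precisely, $\probm(\text{restarted} \ge j+1 \text{ times}) \le \prod_{\ell=0}^{j}\eps_\ell \to 0$, hence $\probm(\text{restarted infinitely often}) = 0$. \textbf{(5)} Conclude $\probm_{\mdp,\sigma,s_0}(\liminfmpobj) = 1$.

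The main obstacle I expect is step \textbf{(4)}, specifically the \emph{uniformity} of the survival bound over the entry point into a row. The strategy $\sigma_{\eps}$ of \cref{infwin} is described as ``skip forward to gadget $N_\eps$ then mimic'', and when we enter row $j+1$ through $r_{i,j}$ we land somewhere in the skip-chain of \cref{chain}; we need that from \emph{any} such entry point, continuing to skip up to $N_{j}$ and then mimicking still attains survival probability $\ge 1-\eps_j$. This should follow because the skip-chain of white boxes of \cref{chain} lets one reach gadget $N_j$ from any earlier position while the $+$ rewards reimburse the accumulated deficit, so the relevant survival probability is still exactly the tail product $\prod_{n \ge N_j}(1-\sum_{i}\delta_i(n)\eps_i(n))$, independent of where one started skipping; but this needs to be stated carefully. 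A secondary, more routine point is checking that restarts genuinely do not destroy the $\liminf$: each restart forces the mean payoff below $-1$ once (by design of \cref{restart}), so a run restarted infinitely often necessarily fails $\liminfmpobj$ — but since that event is null, this only confirms there is no hidden way to win through infinite restarting, and is not needed for the positive direction.
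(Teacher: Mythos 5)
Your proposal is correct and takes essentially the same approach as the paper: concatenate the skip-and-mimic strategies of \cref{infwin} row by row, show that the probability of at least $i$ restarts vanishes geometrically so that almost surely only finitely many restarts occur, and conclude via the fact that non-restarting mimicking runs never let the (reimbursed) total payoff dip below $0$, so finitely-restarting runs satisfy $\liminfmpobj$ up to a nullset. The only immaterial difference is that the paper plays the fixed strategy $\sigma_{1/2}$ in every row (giving the bound $2^{-i}$ on $\ge i$ restarts directly) rather than your vanishing schedule $\eps_j=2^{-j}$, and your uniformity worry about the entry point into a row is resolved exactly as you suggest, since the survival probability from any entry point is at least the tail product from gadget $N$ onward.
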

\begin{proof}[Proof sketch]
(Full proof in \cref{sec:appreward}.)
Recall the strategy $\sigma_{1/2}$ defined in \cref{infwin} which achieves at least $1/2$ in each gadget that it is played in.
We then construct the almost surely winning strategy $\sigma$ by concatenating $\sigma_{1/2}$ strategies in the sense that $\sigma$ plays just like $\sigma_{1/2}$ in each gadget from each gadget's start state.

Since $\sigma$ achieves at least $1/2$ in every gadget that it sees, with probability 1, runs generated by $\sigma$ restart only finitely many times.
The intuition is then that a run restarting finitely many times must spend an infinite tail in some final gadget. Since $\sigma$ mimics in every controlled state, not restarting anymore directly implies that the total payoff is eventually always $\geq 0$. Hence all runs generated by $\sigma$ and restarting only finitely many times satisfy $\liminfmpobj$.
Therefore all but a nullset of runs generated by $\sigma$ are winning, i.e.\ $\probm_{\mdp,s_{0},\sigma}(\liminfmpobj)=1$. 
\end{proof}

\newcommand{\lemmaalmostwinproof}{
\begin{proof}

We will show that there exists a strategy $\sigma$ that satisfies the mean payoff objective with probability 1 from $s_{0}$. 
Towards this objective we recall the strategy $\sigma_{1/2}$ defined in \cref{infwin}. 
In a given gadget of this MDP with restarts, playing $\sigma_{1/2}$ in said gadget, there is a probability of at most 1/2 of restarting in that gadget. 
We then construct strategy $\sigma$ by concatenating $\sigma_{1/2}$ strategies in the sense that $\sigma$ plays just like $\sigma_{1/2}$ in each gadget from each gadget's start state.

Let $\playset$ be the set of runs induced by $\sigma$ from $s_{0}$. We partition $\playset$ into the sets $\playset_{i}$ and $\playset_{\infty}$ of runs such that 
$\playset = \left( \bigcup_{i=0}^{\infty} \playset_{i} \right) \cup \playset_{\infty}$. We define for $i=0$
$$\playset_{0} \eqdef \{ \rho \in \playset \mid 
    \forall \ell \in \N.\, \neg \eventually(r_{\ell, 1}) \},$$
for $i \geq 1$    
$$\playset_{i} \eqdef \{ \rho \in \playset \mid 
    \exists j \in \N.\, \eventually(r_{j,i}) \wedge \forall \ell \in \N.\, \neg \eventually(r_{\ell,i+1}) \}$$
and 
$$\playset_{\infty} \eqdef \{ \rho \in \playset \mid
    \forall i \in \N\ \exists j \in \N.\,  \eventually(r_{j,i})\}.$$
That is to say for all $i \in \N$, $\playset_{i}$ is the set of runs in $\playset$ that restart exactly $i$ times and $\playset_{\infty}$ is the set of runs in $\playset$ that restart infinitely many times.

We go on to define the sets of runs $\playset_{\geq i} \eqdef \bigcup_{j=i}^{\infty} \playset_{j} $ which are those runs which restart at least $i$ times. In particular note that $\playset_{\infty}= \bigcap_{i=0}^{\infty}\playset_{\geq i}$ and $\playset_{\geq i+1} \subseteq \playset_{\geq i}$.

By construction, any run $\rho \in \playset_{\infty}$ is losing since the negative reward that is collected upon restarting instantly brings the mean payoff below $-1$ by definition of $m_{n}$. Thus restarting infinitely many times translates directly into the mean payoff dropping below $-1$ infinitely many times and thus a strictly negative $\liminf$ mean payoff. As a result it must be the case that $\playset_{\infty} \subseteq \neg\liminfmpobj$.  

After every restart, the negative reward is reimbursed. Intuitively, going
through finitely many restarts does not damage the chances of winning.
We now show that, except for a nullset, the runs restarting only finitely many
times satisfy the objective.
Indeed, every run with only finitely many restarts must spend an infinite tail
in some final gadget in which it does not restart.
In this final gadget, the strategy plays just like $\sigma_{1/2}$, which means that it mimics the random choice in every controlled state. 
Since, by assumption, there are no more restarts, we obtain
$\probm_{\mdp,s_{0},\sigma}(\playset_{i}) = \probm_{\mdp,s_{0},\sigma}(\playset_{i} \wedge \forall j \in \N, \always (\neg r_{j,i+1}))$.
We then apply \cref{infwin} to obtain that
\begin{equation} \label{eq:alwayswin}
\probm_{\mdp,s_{0},\sigma}(\playset_{i}) = \probm_{\mdp,s_{0},\sigma}(\playset_{i} \wedge \forall j \in \N, \always (\neg r_{j,i+1})) = \probm_{\mdp,s_{0},\sigma}(\playset_{i} \wedge \liminfmpobj). 
\end{equation}
In other words, except for a nullset, the run restarting finitely often (here
$i$ times) satisfy $\liminfmpobj$.
Furthermore, notice that from this observation, the sets $\playset_{i}$
partition the set of winning runs.

We show now that $\probm_{\mdp,s_{0},\sigma}(\playset_{\infty}) = 0$. We do so firstly by showing by induction that $\probm_{\mdp,s_{0},\sigma}(\playset_{\geq i}) \leq 2^{-i}$ for $i \geq 1$, then applying the continuity of measures from above to obtain that $\probm_{\mdp,s_{0},\sigma}(\playset_{\infty}) = 0$.

\vspace{3mm}

Our base case is $i=1$. $\playset$, by definition of $\sigma$, is the set of
runs induced by playing $\sigma_{1/2}$ in every gadget. By \cref{infwin} $\sigma$
attains $\ge 1/2$ in every gadget. Therefore in particular the probability of a run leaving the first gadget is no more than $1/2$, i.e.\ $\probm_{\mdp,s_{0},\sigma}(\playset_{\geq 1}) \leq 1/2$.

Now suppose that $\probm_{\mdp,s_{0},\sigma}(\playset_{\geq i}) \leq 2^{-i}$. After restarting at least $i$ times, the probability of a run restarting at least once more is still $\leq 1/2$ since the strategy being played in every gadget is $\sigma_{1/2}$. Hence $$\probm_{\mdp,s_{0},\sigma}(\playset_{\geq i+1}) \leq \probm_{\mdp,s_{0},\sigma}(\playset_{\geq i}) \cdot \dfrac{1}{2} \leq 2^{-(i+1)}$$ which is what we wanted.

Now we use the fact that $\playset_{\infty}= \bigcap_{i=0}^{\infty}\playset_{\geq i}$ and $\playset_{\geq i+1} \subseteq \playset_{\geq i}$ to apply continuity of measures from above and obtain:
$$
\probm_{\mdp,s_{0},\sigma}(\playset_{\infty}) = \probm_{\mdp,s_{0},\sigma} \left( \bigcap_{i=0}^{\infty} \playset_{\geq i} \right) =
\lim_{i \to \infty} \probm_{\mdp,s_{0},\sigma}(\playset_{\geq i}) \leq \lim_{i \to \infty} 2^{-i} = 0.
$$
Hence $\playset_{\infty}$ is a null set.

We can now write down the following:
\begin{align*}
1 &= \probm_{\mdp,s_{0},\sigma}(\playset) \\
& = \left( \sum_{i=0}^{\infty} \probm_{\mdp,s_{0},\sigma}(\playset_{i}) \right) + \probm_{\mdp,s_{0},\sigma}(\playset_{\infty}) &\text{by partition of }\playset \\
& = \left( \sum_{i=0}^{\infty} \probm_{\mdp,s_{0},\sigma}(\playset_{i} \wedge \liminfmpobj) \right) + \probm_{\mdp,s_{0},\sigma}(\playset_{\infty}) & \text{by \cref{eq:alwayswin}} \\
& = \left( \sum_{i=0}^{\infty} \probm_{\mdp,s_{0},\sigma}(\playset_{i} \wedge \liminfmpobj) \right) + \probm_{\mdp,s_{0},\sigma}(\playset_{\infty} \wedge \liminfmpobj) & \text{by } \probm_{\mdp,s_{0},\sigma}(\playset_{\infty}) = 0 \\
& = \probm_{\mdp,s_{0},\sigma}(\liminfmpobj) &\text{by partition of }\liminfmpobj
\end{align*}

Thus $\probm_{\mdp,s_{0},\sigma}(\playset) = \probm_{\mdp,s_{0},\sigma}(\liminfmpobj) = 1$, i.e.\ $\sigma$ wins almost surely.
\end{proof}
}

\begin{restatable}{lemma}{lemmaalmostlose} \label{almostlose}
For any FR strategy $\sigma$, $\probm_{\mathcal{M}, \sigma, s_{0}}(\liminfmpobj)=0$.
\end{restatable}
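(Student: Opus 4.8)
The plan is to reduce this to the already-established Lemma~\ref{inflose} by an argument on the tail behaviour of runs. Let $\sigma$ be an arbitrary FR strategy on the MDP $\mathcal{M}$ of \cref{restart}, and suppose for contradiction that $\probm_{\mathcal{M},\sigma,s_0}(\liminfmpobj) > 0$. Since restarting (entering some $r_{i,j}$ state) forces the mean payoff below $-1$ by the definition of $m_n$, any winning run restarts only finitely often; hence there is a winning run that, from some point on, stays inside one single row (one copy of the MDP from \cref{chain}) and never again visits a $\bot$/restart state. First I would formalize this: writing $\playset_{i}$ for the set of runs that restart exactly $i$ times, the winning runs are covered (up to a nullset) by $\bigcup_{i\in\N}(\playset_i\cap\liminfmpobj)$, so positivity of the win probability gives some $i$ with $\probm_{\mathcal{M},\sigma,s_0}(\playset_i\cap\liminfmpobj)>0$.

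Next I would localize to a single entry into a final row. Conditioned on the event of reaching a particular copy of \cref{chain} (via a particular restart state, in a particular finite memory mode of $\sigma$), the residual behaviour of $\sigma$ inside that row is itself an FR strategy (with at most the same number of memory modes) played on the sub-MDP of \cref{chain}, possibly starting after skipping some prefix of gadgets. The event ``$\liminfmpobj$ holds and we never leave this row'' for that residual play is, up to the bounded reward offset incurred before entering the row and the bounded reimbursement rewards, the same as the event ``$\liminfmpobj$ holds and $\always\neg\bot$'' for that FR strategy on the MDP of \cref{chain}. By Lemma~\ref{inflose}, that probability is $0$ for every FR strategy and every starting gadget index. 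Summing (countably) over all entry points — restart states $r_{i,j}$, memory modes, and skip offsets — we get that $\probm_{\mathcal{M},\sigma,s_0}(\playset_i\cap\liminfmpobj)=0$ for every finite $i$, contradicting the previous paragraph.

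The main obstacle I expect is the bookkeeping needed to make ``the residual strategy in a final row is an FR strategy on \cref{chain} to which Lemma~\ref{inflose} applies'' completely rigorous: one must check that the finitely many bounded offsets (the reward accumulated before entering the row, minus the restart penalty, plus its later refund) do not affect the $\liminf$ of the mean payoff, and that $\liminfmpobj$ being a tail objective lets us ignore any finite prefix; one must also argue that the step-counter shift caused by having skipped some gadgets is harmless, since Lemma~\ref{inflose}'s conclusion holds uniformly for plays starting at any sufficiently large gadget index $N'$. A secondary point is that $\sigma$ may randomize its memory updates at the moment of entering the final row; this is handled exactly as in Lemma~\ref{inflose}, where the lower bound $e_n$ on the per-gadget error probability was shown to be robust against randomized memory updates, so no new idea is needed. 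Modulo this localization argument, the statement follows immediately, and in fact the same reduction shows $\probm_{\mathcal{M},\sigma,s_0}(\liminfppobj)=\probm_{\mathcal{M},\sigma,s_0}(\liminftpobj)=0$ as well.
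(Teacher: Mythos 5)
Your proposal is correct and follows essentially the same route as the paper: partition runs by the number of restarts, note that infinitely many restarts force the mean payoff below $-1$ infinitely often, and handle runs with finitely many restarts by applying \cref{inflose} to the FR substrategy induced by $\sigma$ in the final row. Your extra care about conditioning on entry point, memory mode, skip offsets and bounded reward offsets just makes explicit the localization step the paper states more briefly.
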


\begin{proof}[Proof sketch]
(Full proof in \cref{sec:appreward}.)
%
Let $\sigma$ be any FR strategy. We partition the runs generated by $\sigma$ into runs restarting infinitely often, and those restarting only finitely many times. Any runs restarting infinitely often are losing by construction. Those runs restarting only finitely many times, once in the gadget they spend an infinite tail in, let the mean payoff dip below $-1$ infinitely many times with probability 1 by \cref{inflose}.
Hence we have that $\probm_{\mathcal{M}, \sigma, s_{0}}(\liminfmpobj)=0$.
\end{proof}

\newcommand{\lemmaalmostloseproof}{
\begin{proof}
There are two ways to lose when playing in this MDP: either the mean payoff dips below $-1$ infinitely often because the run takes infinitely many restarts, or the run only takes finitely many restarts, but the mean payoff drops below $-1$ infinitely many times in the last copy of the gadget that the run stays in. Recall that in \cref{inflose} we showed that any FR strategy with probability 1 either restarts or lets the mean payoff dip below $-1$ infinitely often. 

Let $\sigma$ be any FR strategy and let $\playset$ to be the set of runs induced by $\sigma$ from $s_{0}$.
We partition $\playset$ into the sets $\playset_{i}$ and $\playset_{\infty}$ of runs such that 
$\playset = \left( \bigcup_{i=0}^{\infty} \playset_{i} \right) \cup \playset_{\infty}$. Where we define for $i=0$
$$\playset_{0} \eqdef \{ \rho \in \playset \mid 
    \forall \ell \in \N, \neg \eventually(r_{\ell,1}) \},$$
for $i \geq 1$    
$$\playset_{i} \eqdef \{ \rho \in \playset \mid 
    \exists j \in \N, \eventually(r_{j, i}) \wedge \forall \ell \in \N, \neg \eventually(r_{\ell,i+1}) \}$$
and 
$$\playset_{\infty} \eqdef \{ \rho \in \playset \mid
    \forall i, \exists j  \text{ F}(r_{j,i})\}.$$
That is to say for all $i \in \N$, $\playset_{i}$ is the set of runs in $\playset$ that restart exactly $i$ times and $\playset_{\infty}$ is the set of runs in $\playset$ that restart infinitely many times.

We go on to define the sets of runs $\playset_{\geq i} \eqdef \bigcup_{j=i}^{\infty} \playset_{j} $ which are those runs which restart at least $i$ times. In particular note that $\playset_{\infty}= \bigcap_{i=0}^{\infty}\playset_{\geq i}$ and $\playset_{\geq i+1} \subseteq \playset_{\geq i}$.

Note that any run in $\playset_{\infty}$ is losing by construction. The negative reward that is collected upon restarting instantly brings the mean payoff below $-1$ by definition of $m_{n}$. Thus restarting infinitely many times translates directly into the mean payoff dropping below $-1$ infinitely many times. Thus $\playset_{\infty} \subseteq \neg\liminfmpobj$ and so it follows that  $\probm_{\mdp,s_{0},\sigma}(\playset_{\infty}) = \probm_{\mdp,s_{0},\sigma}(\playset_{\infty} \wedge \neg \liminfmpobj)$. Since the sets $\playset_{i}$ and $\playset_{\infty}$ partition $\playset$ we have that:
$$
\probm_{\mdp,s_{0},\sigma}(\playset) = \left( \sum_{i=0}^{\infty} \probm_{\mdp,s_{0},\sigma}(\playset_{i}) \right) + \probm_{\mdp,s_{0},\sigma}(\playset_{\infty}).
$$

It remains to show that every set $\playset_{i}$ is almost surely losing, i.e.\ $\probm_{\mdp,s_{0},\sigma}(\playset_{i}) = \probm_{\mdp,s_{0},\sigma}(\playset_{i} \wedge \neg \liminfmpobj).$
Consider a run $\rho \in \playset_{i}$. By definition it restarts exactly $i$ times. As a result, it spends infinitely long in the $i+1$st gadget. 
Because $\sigma$ is an FR strategy, it must be the case that any substrategy $\sigma^{*}$ induced by $\sigma$ that is played in a given gadget is also an FR strategy. 
This allows us to apply \cref{inflose} to obtain that
\begin{equation}\label{eq:alwayslose}
\probm_{\mdp,s_{0},\sigma}(\playset_{i}) = 
\probm_{\mdp,s_{0},\sigma}\left(\playset_{i} \wedge (\neg \liminfmpobj \vee \exists j \in \N, 
\eventually (r_{j,i+1}))\right).
\end{equation}
However, any run $\rho \in \playset_{i}$ never sees any state $r_{j,i+1}$ for any $j$ by definition. Therefore it follows that 
$$
\probm_{\mdp,s_{0},\sigma}\left(\playset_{i} \wedge (\neg \liminfmpobj \vee \exists j \in \N, \eventually (r_{j,i+1}))\right)  = 
\probm_{\mdp,s_{0},\sigma}\left(\playset_{i} \wedge (\neg \liminfmpobj )\right) 
$$ 
Hence $\probm_{\mdp,s_{0},\sigma}(\playset_{i}) = \probm_{\mdp,s_{0},\sigma}(\playset_{i} \wedge \neg \liminfmpobj)$ as required.

As a result we have that 
\begin{align*}
1 &= \probm_{\mdp,s_{0},\sigma}(\playset) \\
& = \left( \sum_{i=0}^{\infty} \probm_{\mdp,s_{0},\sigma}(\playset_{i}) \right) + \probm_{\mdp,s_{0},\sigma}(\playset_{\infty})    &\text{by partition of }\playset \\
& = \left( \sum_{i=0}^{\infty} \probm_{\mdp,s_{0},\sigma}(\playset_{i} \wedge \neg\liminfmpobj) \right) + \probm_{\mdp,s_{0},\sigma}(\playset_{\infty} \wedge \neg\liminfmpobj) &\text{by \cref{eq:alwayslose}}\\
& = \probm_{\mdp,s_{0},\sigma}(\neg \liminfmpobj) &\text{by partition of }\playset
\end{align*}

That is to say that for any FR strategy $\sigma$, $\probm_{\mdp,s_{0},\sigma}(\liminfmpobj)=0$.
\end{proof} 
}

From \cref{almostwin} and \cref{almostlose} we obtain the following theorem.

\begin{theorem} \label{almostsummary}
There exists a countable, finitely branching and acyclic MDP $\mathcal{M}$ whose step counter is implicit in the state for which 
$\state_0$ is almost surely winning $\liminfmpobj$, i.e.,
$\exists\hat{\sigma}\,\probm_{\mdp, s_{0}, \hat{\sigma}}(\liminfmpobj)=1$,
but every FR strategy $\sigma$ is such that 
$\probm_{\mdp, s_{0}, \sigma}(\liminfmpobj)=0$.
In particular, almost sure winning strategies, when they exist, cannot be chosen 
$k$-bit Markov for any $k \in \N$ for countable MDPs.
\end{theorem}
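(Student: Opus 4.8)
The plan is to simply combine the two preceding lemmas. The MDP witnessing the theorem is exactly $\mathcal{M}$ from \cref{restart}; by the remark immediately following its construction it is countable, acyclic, finitely branching, and has its step counter implicit in the state (the dummy black states were inserted precisely so that all paths from $s_0$ to a given state have equal length). \cref{almostwin} provides a strategy $\hat\sigma$ with $\probm_{\mdp,s_0,\hat\sigma}(\liminfmpobj)=1$, so $s_0$ is almost surely winning. \cref{almostlose} states that every FR strategy $\sigma$ satisfies $\probm_{\mdp,s_0,\sigma}(\liminfmpobj)=0$. Taken together, these two facts are exactly the first assertion of the theorem.

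For the ``in particular'' clause, the key observation is that in this specific MDP a $k$-bit Markov strategy is no more powerful than a finite-memory strategy. A $k$-bit Markov strategy bases both its move and its memory update on the current state, the value of the step counter, and the contents of its $k$ extra bits. Since the step counter is implicit in the state of $\mathcal{M}$, the step-counter value is a function of the current state, so such a strategy is in fact a function of the current state and the $k$ bits only, i.e.\ a finite-memory strategy with at most $2^k$ memory modes (hence an FR strategy). Therefore \cref{almostlose} applies to it and yields value $0$, so no $k$-bit Markov strategy is almost surely winning in $\mathcal{M}$. Since $s_0$ is nevertheless almost surely winning via the infinite-memory strategy $\hat\sigma$, this shows that almost sure winning strategies for $\liminfmpobj$, when they exist, cannot in general be chosen $k$-bit Markov for any fixed $k\in\N$, even over countable, finitely branching, acyclic MDPs.

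There is essentially no obstacle at the level of this theorem itself: all the work has already been carried out in \cref{almostwin} (concatenating the $\sigma_{1/2}$ strategies and showing, via continuity of measure from above, that runs restart only finitely often almost surely, after which mimicking keeps the total payoff eventually non-negative) and in \cref{inflose}/\cref{almostlose} (the pigeonhole argument producing a divergent lower bound $\sum_n e_n$ on the per-gadget error probabilities, so that the surviving product vanishes). The only minor point to spell out is the collapse of $k$-bit Markov strategies to FR strategies when the step counter is implicit in the state, which is immediate from the definitions; I would state it as a one-line remark and then conclude.
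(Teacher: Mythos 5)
Your proposal is correct and follows exactly the paper's route: the theorem is obtained by combining \cref{almostwin} and \cref{almostlose} for the MDP of \cref{restart}, with the observation that a $k$-bit Markov strategy collapses to an FR strategy because the step counter is implicit in the state.
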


All of the above results/proofs also hold for $\liminftpobj$, giving us the following theorem.

\begin{theorem} \label{almostsummarytp}
There exists a countable, finitely branching and acyclic MDP $\mathcal{M}$ whose step counter is implicit in the state for which 
$\state_0$ is almost surely winning $\liminftpobj$, i.e.,
$\exists\hat{\sigma}\,\probm_{\mdp, s_{0}, \hat{\sigma}}(\liminftpobj)=1$,
but every FR strategy $\sigma$ is such that 
$\probm_{\mdp, s_{0}, \sigma}(\liminftpobj)=0$.
In particular, almost sure winning strategies, when they exist, cannot be chosen 
$k$-bit Markov for any $k \in \N$ for countable MDPs.
\end{theorem}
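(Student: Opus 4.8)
The plan is to reuse the proof of \cref{almostsummary} essentially verbatim, replacing $\liminfmpobj$ by $\liminftpobj$ throughout and invoking the total-payoff analogues of \cref{infwin} and \cref{inflose} that underlie \cref{infinitesummarytp} (note that the proof sketch of \cref{infwin} already phrases the key property as ``the total reward never dips below~$0$'', so these analogues are available). Let $\mathcal{M}$ be the MDP of \cref{restart}: a disjoint sequence of copies of the gadget-chain of \cref{chain}, where every $\perp$ state is redirected to a restart state that first pays a negative reward large enough to push the running total strictly below~$0$ and then refunds it before entering a fresh copy. As noted there, $\mathcal{M}$ is countable, acyclic, finitely branching, and the step counter is implicit in the state.

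\emph{Winning direction (analogue of \cref{almostwin}).} Let $\sigma_{1/2}$ be the strategy from the total-payoff version of \cref{infwin}; while played inside one copy it restarts with probability $\le 1/2$ and, on every run of that copy that does not restart, keeps the running total $\ge 0$. Define $\sigma$ to play like $\sigma_{1/2}$ from the start state of whichever copy the current state lies in. Partitioning the runs of $\sigma$ by the number of restarts and combining the $\le 1/2$ bound with the Markov property, an easy induction gives that the probability of at least $i$ restarts is $\le 2^{-i}$, so by continuity of measure from above almost every run restarts only finitely often. A run with infinitely many restarts has its running total driven below~$0$ infinitely often by the restart penalties (using that $m_{n}$ dominates all previously collected rewards, exactly as in \cref{almostwin}), hence fails $\liminftpobj$ -- but this is a nullset. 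A run with finitely many restarts spends an infinite tail in a final copy; there $\sigma$ mimics from that copy's start state, so its running total never drops below the value it had on entering that copy, which after the last refund is $\ge 0$. Hence such a run satisfies $\liminftpobj$, and $\probm_{\mdp,s_{0},\sigma}(\liminftpobj)=1$.

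\emph{Losing direction (analogue of \cref{almostlose}).} Let $\sigma$ be any FR strategy and partition its runs by the number of restarts. Runs with infinitely many restarts lose as above. A run with exactly $i$ restarts spends an infinite tail in the $(i{+}1)$-st copy, where the strategy induced by $\sigma$ is again FR; the total-payoff version of \cref{inflose} then says that almost surely either the run restarts in that copy -- impossible, since it restarts exactly $i$ times -- or its running total dips below~$0$ infinitely often, so it fails $\liminftpobj$. Summing over $i$ and the infinite-restart set gives $\probm_{\mdp,\sigma,s_{0}}(\liminftpobj)=0$. Combining the two directions proves the theorem. For the final sentence, on $\mathcal{M}$ the step counter is a function of the state, so any $k$-bit Markov strategy acts as an FR strategy on $\mathcal{M}$; hence no $k$-bit Markov strategy is almost surely winning, although (non-FR) almost surely winning strategies exist.

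\emph{Main obstacle.} The only step that is not a literal transfer is the total-payoff version of \cref{inflose}, i.e.\ the per-gadget estimate: when a memory-bounded strategy is forced (by pigeonhole) to play the same action $a$ on two distinct incoming branches $i<j$ of the random choice at $s_{n}$, one must exhibit a probability $e_{n}$ with $\sum_{n} e_{n}=\infty$ of an event in the $n$-th gadget that drives the \emph{running total} (not merely the mean) below~$0$. Either the strategy mis-mimics a branch and collects a net reward $(\text{incoming index}-a)\,m_{n}$; when the action index exceeds the incoming index this is strictly negative and, since $m_{n}$ outweighs all rewards collected so far, the running total goes below~$0$; or the strategy plays a low-index action, which carries a strictly positive restart probability $\eps_{a}(n)$, and the restart penalty has the same effect. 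The divergence of $\sum_{n}e_{n}$ then follows from \cref{convdiv} as in the mean-payoff case. Everything around this estimate -- the restart-MDP construction, the partition by restarts, the $2^{-i}$ induction -- is routine and identical to \cref{almostwin,almostlose}.
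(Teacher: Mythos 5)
Your proposal is correct and takes essentially the same route as the paper: the paper proves \cref{almostsummarytp} simply by observing that the proofs of \cref{almostwin} and \cref{almostlose} (via the total-payoff reading of \cref{infwin} and \cref{inflose}, where mimicking keeps the running total from dropping and a confused pair $i(n)<j(n)$ forces either a net loss of at least $m_n$, dominating all prior rewards, or a restart with probability $\eps_{i(n)}(n)$) carry over verbatim to $\liminftpobj$ on the restart MDP of \cref{restart}. Your explicit treatment of the per-gadget estimate in the ``main obstacle'' paragraph is exactly the point the paper leaves implicit, and the rest (partition by number of restarts, the $2^{-i}$ bound, continuity of measures, and the reduction of $k$-bit Markov strategies to FR strategies since the step counter is implicit in the state) matches the paper's argument.
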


\section{When is a reward counter not sufficient?}\label{sec:liminfstep}
In this part we show that a reward counter plus arbitrary finite memory
does not suffice for ($\eps$-)optimal strategies for $\liminfmpobj$,
even if the MDP is finitely branching.

The same lower bound holds for $\liminftpobj$/$\liminfppobj$,
but only in infinitely branching MDPs. The finitely branching case is
different for $\liminftpobj$/$\liminfppobj$; cf.~\cref{sec:upper}.

The techniques used to prove
these results are similar to those in \cref{sec:liminfreward}
and proofs can be found in \cref{app:step}.

\begin{restatable}{theorem}{thmmpstepepslower}\label{mpstepepslower}
There exists a countable, finitely branching, acyclic MDP $\mdp_{\text{\emph{RI}}}$ with initial state $(s_{0},0)$ with the total reward implicit in the state such that 
\begin{itemize}
\item $\valueof{\mdp_{\text{\emph{RI}}}, \liminfmpobj}{(s_{0},0)} = 1$,
\item for all FR strategies $\sigma$, we have $\probm_{\mdp_{\text{\emph{RI}}}, (s_{0},0), \sigma}(\liminfmpobj)=0$.
\end{itemize}
\end{restatable}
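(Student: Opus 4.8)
The construction is the ``reward-dual'' of the one in \cref{sec:liminfreward}, obtained by swapping the roles of transition rewards and step count. There the step counter was made implicit in the state and step-counter-plus-finite-memory strategies were ruled out; here we make the accumulated reward a function of the state (states are pairs $(s,c)$ with $c$ the accumulated reward, so that on $\mdp_{\mathrm{RI}}$ a reward counter carries no information beyond the current state, i.e.\ a reward-counter-plus-finite-memory strategy \emph{is} an FR strategy), and we rule out reward-counter-plus-finite-memory strategies. Concretely, $\mdp_{\mathrm{RI}}$ is a chain of gadgets preceded by a skip-chain that lets the controller skip arbitrarily long prefixes while preserving the accumulated reward — the reward-preserving analogue of the length-preserving white-box chain of \cref{chain}. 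The $n$-th gadget again has a random state $s_n$ of branching degree $k(n)+1$ (with $k(n)$, the $\delta_i(n)$, the $\varepsilon_i(n)$ and the weights $m_n$ as in \cref{def:kn}), but now the random choice of branch $i$ (taken with probability $\delta_i(n)$) is realised by a \emph{reward-neutral} part whose shape encodes $i$ in the step count, rather than by a reward $+i\cdot m_n$; the branches merge in a controlled state $c_n$ from which the controller must again ``mimic'' the branch. A non-top response $j\neq k(n)$ carries a $\bot$-risk $\varepsilon_j(n)$ exactly as before, and any response $j\neq i$ is forced through a detour that drives the reward down by $\approx m_n$ and then refunds it (so that $s_{n+1}$, and every state, stays reward-implicit), which — because $m_n$ dominates everything accumulated before the $n$-th gadget — makes the mean payoff dip below $-1$. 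The detour dips rather than permanently changes the reward, which is exactly why this works for $\liminfmpobj$ but, in the finitely branching case, not for $\liminftpobj$/$\liminfppobj$ (cf.~\cref{sec:upper}).

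\textbf{Value $1$.} I would mirror \cref{infwin}: the strategy $\sigma$ that always mimics at each $c_n$ loses only by reaching a $\bot$ state, and the probability of ever doing so from the $N$-th gadget onwards is $1-\prod_{n\ge N}\bigl(1-\sum_{j=0}^{k(n)-1}\delta_j(n)\varepsilon_j(n)\bigr)$, which tends to $0$ as $N\to\infty$ by the convergence part of \cref{convdiv} together with \cref{prop:product-sum} and \cref{prop:tail-product}. Letting $\sigma_\varepsilon$ play $\sigma$ after skipping sufficiently many gadgets along the reward-preserving skip-chain gives $\probm_{\mdp_{\mathrm{RI}},(s_0,0),\sigma_\varepsilon}(\liminfmpobj)\ge 1-\varepsilon$ for every $\varepsilon>0$, hence $\valueof{\mdp_{\mathrm{RI}},\liminfmpobj}{(s_0,0)}=1$.

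\textbf{No FR strategy wins.} This is the argument of \cref{inflose}. Let $\sigma$ be an FR strategy with $k$ memory modes. Since $k(n)\to\infty$ there is $N'$ with $k(n)>k+1$ for all $n\ge N'$, and then the pigeonhole principle gives two branches $i(n)\neq j(n)$, both $\neq k(n)$, on which $\sigma$ reaches the merge state $c_n$ in the same memory mode. Here reward-implicitness is the point: after every branch the reward, hence the state, hence any reward counter, is the same, so the same conclusion holds for reward-counter-plus-finite-memory strategies. From that shared configuration $\sigma$'s (possibly randomised) response mismatches at least one of $i(n),j(n)$, which is hit with probability $\ge\delta_{\min\{i(n),j(n)\}}(n)$, producing a local error (a $\bot$-hit or a mean payoff $\le-1$) with probability $\ge e_n$, independently of the past. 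As in \cref{inflose} and \cref{convdiv}, $\sum_{n\ge N'}e_n=\infty$, so $\prod_{n\ge N'}(1-e_n)=0$ by \cref{prop:product-sum}, giving $\probm_{\mdp_{\mathrm{RI}},(s_0,0),\sigma}(\liminfmpobj)\le\prod_{n\ge N'}(1-e_n)=0$. By \cref{rem:lowerbonds} this shows that a reward counter plus arbitrary finite memory does not suffice.

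\textbf{Main obstacle.} The crux is the gadget design. One must encode the branch index so that it is invisible to a reward counter — hence through a reward-neutral part, with the step count carrying the information — while keeping the accumulated reward a function of the state \emph{everywhere}, arranging that a step-counter strategy can still recover the correct response at $c_n$, and, simultaneously, that \emph{every} mismatched response (including the $\bot$-free top response) provably forces the mean payoff below $-1$. Since no reward-based penalty is available at the merge state, this has to be obtained from a carefully scaled interplay between the within-gadget reward profiles (with $m_n$ dominating everything earlier) and the $\bot$-risks $\varepsilon_j(n)$, dualising the tradeoff of \cref{sec:liminfreward}. The auxiliary verifications — well-definedness of the transition probabilities (as in \cref{lem:welldefined}) and the convergence/divergence of the series involved (as in \cref{convdiv}) — then go through as before.
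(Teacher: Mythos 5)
Your high-level plan coincides with the paper's: make the accumulated reward implicit in the state, encode the random branch index in the number of zero-reward steps rather than in a reward $+i\,m_n$, and then rerun \cref{infwin} (mimic, skip a long prefix) for the value-$1$ bullet and \cref{inflose} (pigeonhole on memory modes at the merge state, confusion of two branches) for the FR lower bound; the paper does exactly this, chaining the gadget of \cref{stepcounter} as in \cref{chain} and proving the two bullets by arguments it declares identical to \cref{infwin} and \cref{inflose}. The gap is in the one place you yourself flag as the crux: the gadget. As you describe it, ``any response $j\neq i$ is forced through a detour that drives the reward down by $\approx m_n$ and then refunds it''. In a reward-implicit MDP this is not implementable: all random branches merge at the same controlled state $c_n$ with the same accumulated reward, so the reward profile following a given controlled choice $j$ is a fixed function of $j$ and the state, and cannot depend on the hidden index $i$; no transition structure can apply a penalty ``only when $j\neq i$''. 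Relatedly, your obstacle paragraph asserts both that every mismatched response (including the $\bot$-free top one) forces the mean payoff below $-1$ and that ``no reward-based penalty is available at the merge state''; neither is true of the construction that actually works, and together they describe a gadget that cannot exist.

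The paper's resolution, absent from your sketch, is an asymmetric, scale-based tradeoff: branch $i$ is realized by $n\,m_n^{\,i}$ zero-reward steps; \emph{every} controlled response $j$ (correct or not) dips by $-m_n^{\,j}$ and is immediately refunded $+m_n^{\,j}$, so all states stay reward-implicit; non-top responses carry the $\bot$-risk $\eps_j(n)$; and $m_n$ is redefined as $\sum_{i=N^{*}}^{n-1} m_i^{k(n)}$ (not the $m_n$ of \cref{def:kn}, which you reuse), so that $m_n^{\,j}$ with $j>i$ dominates the total number of steps elapsed so far. Then $j=i$ gives a harmless mean-payoff dip of about $-1/n$; $j>i$ (including the top response) drives the mean payoff below $-1$ because the dip is exponentially larger than the elapsed time; and $j<i$ is punished not through the mean payoff at all but through the excessive $\bot$-risk, i.e.\ the divergence of $\sum_n \delta_{j(n)}(n)\,\eps_{i(n)}(n)$ — exactly the two-sided tradeoff that the $e_n$ bound of \cref{inflose}/\cref{alglose} requires. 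Your intermediate claim that the confused pair yields a local error with probability at least $\delta_{\min\{i(n),j(n)\}}(n)$ fails for the same reason: an undershooting response is only an $\eps$-risk, not a sure error, so the divergence argument needs the case split of \cref{alglose} rather than a uniform bound. Without the concrete reward/step scaling, both bullets of the theorem are argued about a gadget that does not exist, so the essential content of the statement remains unproved in your proposal.
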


\begin{restatable}{theorem}{thmmpstepoptlower}\label{mpstepoptlower}
There exists a countable, finitely branching and acyclic MDP $\mdp_{\text{\emph{Restart}}}$
whose total reward is implicit in the state where, for the initial state $s_0$,
\begin{itemize}
\item
there exists an HD strategy $\sigma$ s.t.\
$\probm_{\mdp_{\text{\emph{Restart}}}, s_{0}, \sigma}(\liminfmpobj)=1$.
\item
for every FR strategy $\sigma$,
$\probm_{\mdp_{\text{\emph{Restart}}}, s_{0}, \sigma}(\liminfmpobj)=0$.
\end{itemize}
\end{restatable}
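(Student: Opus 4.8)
The plan is to obtain $\mdp_{\text{Restart}}$ from the MDP $\mdp_{\text{RI}}$ of \cref{mpstepepslower} by the same ``restart'' construction that produces the MDP of \cref{almostsummary} from the one of \cref{infinitesummary} (cf.\ \cref{restart}). Concretely, I would keep the states of $\mdp_{\text{RI}}$ (so the total reward is still a function of the state) but replace each losing sink part (the $\perp$-chains) by a finite \emph{restart gadget}: a deterministic path that first appends negative-reward transitions driving the running mean payoff strictly below $-1$, then appends matching positive-reward transitions that refund exactly the incurred penalty, and finally enters a fresh disjoint copy of $\mdp_{\text{RI}}$. Using one fresh copy per restart, and distinguishing within a copy which branch the restart was taken from (the indices $i,j$ of the states $r_{i,j}$ in \cref{restart}), keeps the graph acyclic and keeps the total reward implicit in the state; inserting dummy states where necessary keeps every gadget finitely branching. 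The first step is then just to record that $\mdp_{\text{Restart}}$ is countable, finitely branching, acyclic, and has the total reward implicit in the state.

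For the positive direction I would reuse the concatenation idea of \cref{almostwin}. The proof of \cref{mpstepepslower} provides (as the analogues of \cref{infwin}) a deterministic strategy that, inside a single copy of $\mdp_{\text{RI}}$, eventually mimics the preceding random choice at every controlled state, leaves that copy without ever restarting with probability $\ge 1/2$, and keeps the running total payoff eventually always $\ge 0$ if no further restart occurs. Let $\hat\sigma$ be the HD strategy (deterministic, and history-dependent since it must know which copy it is currently in) that plays this strategy afresh from the entry of every copy reached along the run. The probability that a $\hat\sigma$-run restarts at least $i$ times is then at most $2^{-i}$, so by continuity of measure from above, $\hat\sigma$-runs restart only finitely often almost surely; and a run with finitely many restarts eventually stays in one final copy of $\mdp_{\text{RI}}$ where $\hat\sigma$ mimics, so from then on the total payoff is always $\ge 0$ and hence its $\liminf$ mean payoff is $\ge 0$. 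Thus $\probm_{\mdp_{\text{Restart}},s_0,\hat\sigma}(\liminfmpobj)=1$.

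For the negative direction I would mirror \cref{almostlose}. Let $\sigma$ be an arbitrary FR strategy and partition its runs by the number of restarts. Runs with infinitely many restarts are losing by construction, since each restart drives the mean payoff below $-1$, so this dip occurs infinitely often. A run with exactly $i$ restarts eventually stays inside one final copy of $\mdp_{\text{RI}}$, and the sub-strategy that $\sigma$ induces there is again FR; by the FR lower bound underlying \cref{mpstepepslower} (the analogue of \cref{inflose}), almost every such run either restarts once more or has its mean payoff dip below $-1$ infinitely often, and since by assumption it does not restart again it must do the latter. Summing the contributions of the $i$-restart classes and of the infinite-restart class gives $\probm_{\mdp_{\text{Restart}},s_0,\sigma}(\liminfmpobj)=0$, which together with the previous paragraph proves the statement.

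I expect the main obstacle to be the restart gadget itself, rather than the two probabilistic arguments, which are essentially those of \cref{almostwin,almostlose}. The difficulty is that the gadget must force the mean payoff below $-1$ while keeping the \emph{total} reward --- not the step count --- implicit in the state, so the size and shape of the penalty must be chosen from information actually available at the restart point (the reward value, together with the finite structural data of $\mdp_{\text{RI}}$ such as the gadget and branch indices), and the subsequent refund must restore exactly the total reward that a fresh copy of $\mdp_{\text{RI}}$ expects at its entry. A secondary point to check is the internal claim used in the positive direction, namely that the mimicking strategy of \cref{mpstepepslower} keeps the running total payoff eventually nonnegative within a copy, since this is what turns ``finitely many restarts'' into ``winning''.
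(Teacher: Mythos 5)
Your proposal follows essentially the same route as the paper: the paper constructs $\mdp_{\text{Restart}}$ by chaining the reward-implicit gadgets of \cref{stepcounter} according to the restart scheme of \cref{restart}, and proves the two bullet points by arguments identical to \cref{almostwin} and \cref{almostlose} (stated as \cref{liminfmpstepam1} and \cref{liminfmpstepam0}), which is exactly your plan. On the point you flagged for checking: in the reward-implicit gadget, mimicking does \emph{not} keep the running total payoff nonnegative (it dips to $-m_n^i$ before the refund); the invariant actually used (\cref{liminfmpstepval1}) is that mimicking keeps the mean payoff in the $n$-th gadget above $-1/n$, which still yields $\liminf$ mean payoff $\ge 0$ on runs with finitely many restarts.
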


\begin{restatable}{theorem}{thminfbranchsteplower}\label{infbranchsteplower}
There exists an infinitely branching MDP $\mdp$ as in \cref{infinitebranchtp} with reward implicit in the state and initial state $s$ such that
\begin{itemize}
\item every FR strategy $\sigma$ is such that $\probm_{\mdp, s, \sigma} (\liminftpobj) = 0$ and $\probm_{\mdp, s, \sigma} (\liminfppobj) = 0$
\item there exists an HD strategy $\sigma$ s.t.\ $\probm_{\mdp, s, \sigma} (\liminftpobj) = 1$ and $\probm_{\mdp, s, \sigma} (\liminfppobj) = 1$.
\end{itemize}
Hence, optimal (and even almost-surely winning) strategies and $\eps$-optimal
strategies for $\liminftpobj$ and $\liminfppobj$ require infinite memory
beyond a reward counter.
\end{restatable}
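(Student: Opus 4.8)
The plan is to mirror the construction and the two-sided argument of \cref{sec:liminfreward}, exchanging the roles of the step counter and the reward counter. Recall that the MDP of \cref{infinitesummary} has the \emph{step} counter implicit in the state, so no finite memory on top of a step counter helps; here we want the MDP $\mdp$ of \cref{infinitebranchtp} to have the \emph{reward} implicit in the state, so that a reward counter carries no information beyond the current state. Concretely, I would build a chain of ``mimicry gadgets'' in the spirit of \cref{infinitegadget,chain}: a random branching of unbounded degree followed by a controlled state at which the player must reproduce the outcome of the preceding random choice, the bad outcome being an absorbing chain $\perp$ of transitions with reward $-1$ (entering it drives the partial sums to $-\infty$ and the $\liminf$ of the point payoffs to $-1$, so it simultaneously kills $\liminftpobj$ and $\liminfppobj$). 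The one essential change is that the branch index is encoded so that it is \emph{not} recoverable from the accumulated reward — all transition rewards inside the gadget are $0$, so every partial run reaching the choice point has the same total reward, making a reward-counter strategy exactly as uninformed there as a memoryless one — whereas the branch index \emph{is} recoverable from the run itself, which is why a step counter suffices for the matching upper bounds (\cref{infpointpayoff} and \cref{sec:upper}). Genuine infinite branching is needed because in a finitely branching MDP the same requirement can be met with a reward counter alone (\cref{sec:upper}): to outrun a reward counter, the degree of a single random choice must itself be infinite.

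For the upper bound I would exhibit the HD strategy $\sigma$ that in each gadget plays the move matching the branch actually taken; since $\sigma$ sees the whole history it always knows that branch. As in \cref{infwin}, playing this way every gadget finishes with total reward $0$ and no reward ever drops below $0$, so the only way to lose is to fall into a $\perp$ chain. The gadget probabilities are tuned exactly as in \cref{convdiv} so that, by \cref{prop:product-sum}, the matching strategy survives all gadgets with positive probability; and since, as in \cref{restart}, each $\perp$ state is replaced by a fresh copy of the MDP reached after a transient, refunded dip, the concatenation argument of \cref{almostwin} upgrades this to almost-sure winning: with probability $1$ only finitely many restarts occur, the run then spends an infinite tail in one final copy in which $\sigma$ keeps the total payoff eventually $\ge 0$, so $\probm_{\mdp,s,\sigma}(\liminftpobj)=\probm_{\mdp,s,\sigma}(\liminfppobj)=1$.

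For the lower bound I would rerun the Pigeonhole argument of \cref{inflose,almostlose}. Fix an FR strategy $\sigma$ with $k$ memory modes. Because of the infinite branching, \emph{every} random choice has more than $k+1$ branches, so two distinct non-top branches are led by $\sigma$ into the same memory mode at the following controlled state; $\sigma$ then plays the same (possibly randomised) move in both, so for at least one of the two that move is ``wrong'' and incurs a local error — a transition into a $\perp$ chain — with probability at least some $e_n>0$ that is independent of the past, of the incoming memory mode, and of how $\sigma$ randomises its updates. A Cauchy-condensation estimate as in \cref{convdiv} shows $\sum_n e_n=\infty$, so by \cref{prop:product-sum} the probability of never making a local error is $\prod_n(1-e_n)=0$; combined with the fact that any run with infinitely many restarts is losing by construction (the refund dip forces $\liminf$ of both payoffs negative), this gives $\probm_{\mdp,s,\sigma}(\liminftpobj)=\probm_{\mdp,s,\sigma}(\liminfppobj)=0$. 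The closing ``hence'' is then immediate: since the reward is implicit in the state, a reward counter tells a strategy nothing it cannot already read off its state, so any $\eps$-optimal (for $\eps<1$), optimal, or almost-surely winning strategy using only a reward counter plus finite memory would be an FR strategy achieving more than $0$ — contradiction.

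The main obstacle is the gadget design: simultaneously arranging that (i) all partial runs reaching the controlled choice point carry the same accumulated reward, so that a reward counter is useless there, (ii) there is nonetheless a well-defined ``correct'' move that a history-dependent player can always determine, and (iii) playing a wrong move is punishable, even though the MDP's transition structure at the choice point cannot itself depend on the hidden branch index. As in \cref{sec:liminfreward}, (iii) is the subtle point and is expected to be handled not by a deterministic trap but probabilistically — every move at the choice point carries its own small $\perp$-probability $\eps_\ell(n)$, and the branch probabilities $\delta_i(n)$ together with the $\eps_\ell(n)$ are chosen so that only the matching move has summable total risk while any move that a bounded memory is forced to reuse across branches accumulates divergent risk. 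Getting these two families of series to behave as required — convergence of the relevant tail products for the HD strategy and divergence of $\sum_n e_n$ for the FR lower bound — is the quantitative heart of the proof, and is where the bulk of the appendix calculation goes.
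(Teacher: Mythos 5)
Your lower-bound argument has a genuine gap exactly at the point you flag as ``the subtle point''. In the mimicry gadgets of \cref{sec:liminfreward}, the per-move $\perp$-probabilities $\eps_\ell(n)$ depend only on the move chosen at $c_n$, never on the hidden random branch; the dependence of the local risk on the \emph{pair} (branch, move) --- which is what drives \cref{alglose} and hence \cref{inflose} --- comes entirely from the payoff-based penalty for overshooting: playing a branch $j$ higher than the random branch $i$ costs $-jm_n$ against only $+im_n$ received, so the payoff dips below $-1$, and this is the only thing stopping the player from always playing the top move, whose $\perp$-risk is $\eps_{k(n)}(n)=0$. If, as you propose, all rewards inside the gadget are $0$ (so that the reward counter is uninformative), this penalty disappears; since the controlled state and its outgoing transition structure cannot depend on the hidden branch, the risk of a move $\ell$ is just the fixed number $\eps_\ell(n)$, and a \emph{memoryless} strategy that always plays the move of minimal risk incurs summable (with the paper's $\eps$'s, zero) total risk and wins with positive probability. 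So the pigeonhole/confusion argument collapses: there is no well-defined ``correct'' move, and one cannot arrange ``only the matching move has summable risk'' without making the risk branch-dependent, which would require leaking the branch either into the payoff (then a reward counter decodes it, contradicting reward-implicitness, as in \cref{sec:liminfreward}) or into the step count (which helps only for mean payoff, as in \cref{stepcounter}, not for $\liminftpobj$/$\liminfppobj$). This obstruction is precisely why the finitely branching case of total/point payoff is genuinely easier (\cref{fintpepsupper}), and why the paper does not use mimicry here.

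The paper's actual proof is a short reduction rather than a new construction: the MDP of \cref{infinitebranchtp} is the known infinitely branching co-B\"uchi example of \cite[Figure~3, Theorem~4]{KMSW2017}, augmented with reward $-1$ on the edges into $t$ and $+1$ on the edge $t\to s$ (all other rewards $0$), so the accumulated reward is $0$ at $s$ and the $r_i$, i.e., implicit in the state, and a run satisfies $\liminftpobj$ (equivalently $\liminfppobj$) iff it visits $t$ only finitely often; thus both objectives coincide with co-B\"uchi and the cited theorem transfers verbatim. Note that the source of the memory requirement there is not remembering a branch index but the need to pick ever larger (more conservative) branches $r_i$ on successive visits to $s$: an FR strategy has, in each of its finitely many modes, a fixed positive probability of reaching $t$ from $s$, hence visits $t$ infinitely often almost surely, while an HD strategy that restarts with an increasingly conservative choice after each visit to $t$ wins almost surely. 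Also, the infinite branching sits at the \emph{controlled} state $s$, not at a random choice as your sketch assumes; your upper-bound plan (restart/concatenation as in \cref{almostwin}) is fine in spirit but is built on the flawed gadget, so as written neither half of the proposal goes through.
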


\begin{remark}\label{rem:glue}
The MDPs from \cref{sec:liminfreward}
and \cref{sec:liminfstep}
show that good strategies for $\liminfmpobj$ require at least
(in the sense of \cref{rem:lowerbonds})
a reward counter and a step counter, respectively.
There does, of course, exist a \emph{single MDP}
where good strategies for $\liminfmpobj$
require at least both a step
counter and a reward counter. We construct such an MDP by `gluing' the two
different MDPs together via an initial random state which points to each with
probability $1/2$.
\end{remark}

\section{Upper bounds}\label{sec:upper}
We establish upper bounds on the strategy complexity
of $\liminf$ threshold objectives for mean payoff, total payoff and point payoff. 
It is noteworthy that once the reward structure of an MDP has been encoded into the states,
then these threshold objectives take on a qualitative flavor not
dissimilar to Safety or co-B\"{u}chi (cf.~\cite{KMSW2017}).
Indeed, if the transition rewards are restricted to integer values,
then $\liminftpobj$ boils down to eventually avoiding all transitions with
negative reward (since negative rewards would be $\le -1$).
This is a co-B\"{u}chi objective.
However, if the rewards are not restricted to integers, then the picture is not so simple.

For \emph{finitely branching} MDPs, we show that there exist $\eps$-optimal MD
strategies for $\liminfppobj$.
In turn, this yields the requisite upper bound for finitely branching
$\liminftpobj$, i.e., using just a reward counter.

For \emph{infinitely branching} MDPs, a step counter suffices in
order to achieve $\liminfppobj$ $\eps$-optimally.
Then, by encoding the total reward into the states, this will also give us
SC+RC upper bounds for $\liminfmpobj$ as well as infinitely branching $\liminftpobj$
(i.e., using both a step counter and a reward counter).

First we show how to encode the total reward level into the state in
a given MDP.

\begin{remark}
Given an MDP $\mdp$ and initial state $s_{0}$, we can construct
an MDP $R(\mdp)$ with initial state $(s_{0},0)$ and with the reward counter
implicit in the state such that strategies in $R(\mdp)$ can be translated back
to $\mdp$ with an extra reward counter; cf.~\cref{def:encodereward} for a
formal definition.
\end{remark}

\newcommand{\defencodereward}{
\begin{definition}\label{def:encodereward}
Let $\mdp$ be an MDP. From a given initial state $s_{0}$,
the reward level in each state $s \in S$ can be any of the
countably many values $r_{1}, r_{2}, \dots$
corresponding to the rewards accumulated along all the possible paths
leading to $s$ from $s_{0}$.
We then construct the MDP $R(\mdp) \eqdef (S', \zstates', \rstates', \longrightarrow_{R(\mdp)}, P')$ as follows:
\begin{itemize}
\item
The state space of $R(\mdp)$ is 
$S' \eqdef \{ (s,r) \mid s \in S \text{ and } r \in \mathbb{R} \text{ is a reward level attainable at } s \}$.
Note that $S'$ is countable.
We write $s_{0}'$ for the initial state $(s_{0},0)$.
\item
$\zstates' \eqdef \{ (s,r) \in S' \mid s \in \zstates \}$
and $\rstates' \eqdef S' \setminus \zstates'$.
\item
The set of transitions in $R(\mdp)$ is 
\begin{align*}
\longrightarrow_{R(\mdp)} \eqdef 
\{ &
\left( (s,r),(s',r') \right) \mid (s,r),(s',r') \in S', \\
& s \longrightarrow s' \text{ in } \mdp \text{ and }  r' \eqdef
r+r(s \to s')
\}.
\end{align*}

\item $P': \rstates' \to \mathcal{D}(S')$ is defined such that 
\[
P'(s,r)(s',r') \eqdef  
    \begin{cases}
    P(s)(s') & \text{ if } (s,r) \longrightarrow_{R(\mdp)} (s',r') \\
    0 & \text{ otherwise }
    \end{cases}
\]

\item The reward for taking transition $ (s,r) \longrightarrow (s',r')$ is $r'$.
\end{itemize}
\end{definition}
}

By labeling transitions in $R(\mdp)$ with the state encoded total reward of the
target state, we ensure that the
point rewards in $R(\mdp)$ correspond exactly to the total rewards in $\mdp$.

\begin{restatable}{lemma}{lemmatotaltopoint}\label{totaltopoint}
Let $\mdp$ be an MDP with initial state $s_{0}$. Then given an MD
(resp.\ Markov) strategy $\sigma'$ in $R(\mdp)$ attaining $c \in [0,1]$ for
$\liminfppobj$ from $(s_{0},0)$, there exists a strategy
$\sigma$ attaining $c$ for $\liminftpobj$ in $\mdp$ from $s_{0}$ which uses the same memory as $\sigma'$ plus a reward counter.
\end{restatable}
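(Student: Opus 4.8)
## Proof proposal for Lemma \ref{totaltopoint}

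The plan is to show that $\sigma$ can be obtained from $\sigma'$ by simulating, alongside the run in $\mdp$, the position in $R(\mdp)$, which is exactly the pair consisting of the current state together with the running total reward. Since $R(\mdp)$ is defined so that its point reward at a transition $(s,r)\to(s',r')$ equals $r'$, and $r' = r + \reward(s\to s')$ is the total reward accumulated in $\mdp$ up to and including that transition, the point-reward sequence of a run in $R(\mdp)$ is literally the total-payoff sequence of the corresponding run in $\mdp$. First I would make this correspondence precise: define a bijection $\Phi$ between runs $\rho$ from $s_0$ in $\mdp$ and runs $\rho'$ from $(s_0,0)$ in $R(\mdp)$, sending $\rho = s_0 s_1 s_2 \cdots$ to $\rho' = (s_0,R_0)(s_1,R_1)(s_2,R_2)\cdots$ where $R_n = \sum_{j=0}^{n-1}\reward(\rho_e(j))$. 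This $\Phi$ preserves the branching structure and, because $P'$ copies $P$ on the induced transitions, it also preserves the probability-measure-theoretic structure.

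Next I would define $\sigma$ from $\sigma'$. The strategy $\sigma$, given a partial run $\rho' s$ in $\mdp$, computes the total reward $r$ accumulated along $\rho' s$ (this is precisely what a reward counter provides), forms the corresponding state $(s,r)$ in $R(\mdp)$, and plays whatever $\sigma'$ would play there — lifting the successor choice $(s',r')$ back to the choice $s'$ in $\mdp$. If $\sigma'$ is MD, then $\sigma'(s,r)$ depends only on $(s,r)$, so $\sigma$ needs only the reward counter (no further memory); if $\sigma'$ is Markov (step-counter based), then $\sigma$ needs a step counter plus a reward counter; in general $\sigma$ uses whatever memory $\sigma'$ uses plus a reward counter. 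Then I would check that $\Phi$ pushes the probability measure $\probm_{\mdp,s_0,\sigma}$ forward to $\probm_{R(\mdp),(s_0,0),\sigma'}$, which follows by matching cylinders: a cylinder in $\mdp$ consistent with $\sigma$ maps under $\Phi$ to a cylinder in $R(\mdp)$ consistent with $\sigma'$ with the same product of transition probabilities, and inconsistent partial runs map to inconsistent ones.

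Finally I would conclude the payoff equality. Because the point-reward sequence of $\Phi(\rho)$ equals the total-payoff sequence of $\rho$, we have $\Phi^{-1}(\denotationof{\liminfppobj}{(s_0,0)}) = \denotationof{\liminftpobj}{s_0}$ as measurable sets. Combining this with the measure-preservation gives
\[
\probm_{\mdp,s_0,\sigma}(\liminftpobj) \;=\; \probm_{R(\mdp),(s_0,0),\sigma'}(\liminfppobj) \;=\; c,
\]
which is the claim. The only genuinely delicate point is the measure-theoretic bookkeeping — verifying that $\Phi$ is a bijection respecting the $\sigma$-algebras and that it transports the Carathéodory-extended measures correctly — but this is routine once the cylinder-level correspondence is set up. Everything else (the reward-counter implementation of $\sigma$, the point/total reward identification) is immediate from the construction of $R(\mdp)$ in \cref{def:encodereward}.
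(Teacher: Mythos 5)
Your proposal is correct and follows essentially the same route as the paper's proof: define $\sigma$ to simulate $\sigma'$ by keeping the accumulated reward in a counter rather than in the state, and observe that point rewards along runs of $R(\mdp)$ coincide with total rewards along the corresponding runs of $\mdp$, so the two probabilities agree. Your explicit run-bijection and cylinder-level measure-matching is just a more detailed spelling-out of the correspondence the paper states directly.
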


\newcommand{\lemmatotaltopointproof}{
\begin{proof}
Let $\sigma'$ be an MD (resp.\ Markov) strategy in $R(\mdp)$ attaining $c \in [0,1]$ for
$\liminfppobj$ from $(s_{0},0)$.
We define a strategy $\sigma$ on $\mdp$ from $s_0$ that uses the same memory
as $\sigma'$ plus a reward counter. Then $\sigma$ plays on $\mdp$ exactly like
$\sigma'$ plays on $R(\mdp)$, keeping the reward counter in its memory instead
of in the state.
I.e., at a given state $s$ (and step counter value $m$, in case $\sigma'$ was a
Markov strategy) and reward level $r$, $\sigma$ plays exactly as $\sigma'$
plays in state $(s,r)$ (and step counter value $m$, in case $\sigma'$ was a
Markov strategy).
By our construction of $R(\mdp)$ and the definition of $\sigma$,
the sequences of point rewards seen by $\sigma'$ in runs on $R(\mdp)$
coincide with the sequences of total rewards seen by $\sigma$ in runs in $\mdp$.
Hence we obtain
$\probm_{R(\mdp),(s_0,0),\sigma'}(\liminfppobj)
= \probm_{\mdp,s_0,\sigma}(\liminftpobj)$
as required.
\end{proof}
}

\begin{remark}\label{steptomarkov}
Given an MDP $\mdp$ and initial state $s_{0}$, we can construct
an acyclic MDP $S(\mdp)$ with initial state
$(s_{0},0)$ and with the step counter implicit in the state such that
MD strategies in $S(\mdp)$ can be translated back to $\mdp$ with the
use of a step counter to yield deterministic Markov strategies in $\mdp$;
cf.~\cite[Lemma 4]{KMST2020c}.
\end{remark}

\begin{remark}\label{remdefam}
In order to tackle the mean payoff objective $\liminfmpobj$ on $\mdp$, we
define a new acyclic MDP $A(\mdp)$ which encodes both the step counter and the
average reward into the state. 
However, since we want the point rewards in $A(\mdp)$ to coincide
with the \emph{mean payoff} in the original MDP $\mdp$, the transition rewards in $A(\mdp)$ are
given as the encoded rewards divided by the step counter (unlike in
$R(\mdp)$); cf.~\cref{def:encodeam} for a formal definition.
\end{remark}

\newcommand{\defencodeam}{
\begin{definition}\label{def:encodeam}
Given an MDP $\mdp$ with initial state $s_{0}$, we define the new MDP $A(\mdp)$.
From the initial state $s_{0}$, the reward level in each state $s \in S$
can be any of the countably many values $r_{1}, r_{2}, \dots$
corresponding to the rewards accumulated along all the possible paths leading to $s$ from $s_{0}$.

We then construct $A(\mdp) \eqdef (S', \zstates', \rstates', \longrightarrow_{A(\mdp)}, P')$ as follows:
\begin{itemize}
\item
The state space of $A(\mdp)$ is 
\[
S' \eqdef \{(s,n,r) \mid s \in S, n \in \mathbb{N}
\text{ and } r \in \!\mathbb{R}\, \text{ is a reward level attainable at $s$ at step
$n$}\}
\]
Note that $S'$ is countable.
We write $s_{0}'$ for the initial state $(s_{0},0,0)$ of $A(\mdp)$.
\item
$ \zstates' \eqdef 
\{(s,n,r) \in S' \mid s \in \zstates \}$ and
$\rstates' \eqdef S' \setminus \zstates'$.
\item
The set of transitions in $A(\mdp)$ is 
\begin{align*}
\longrightarrow_{A(\mdp)} \eqdef 
\{ &
\left( (s,n,r),(s',n+1,r') \right) \mid \\
& (s,n,r),(s',n+1,r') \in S',\\
& s \longrightarrow s' \text{ in } \mdp \text{ and } 
r'= r+r(s \rightarrow s')
\}.
\end{align*}
\item $P': \rstates' \to \mathcal{D}(S')$ is defined such that 
\[
P'(s,n,r)(s',n',r') \eqdef  
    \begin{cases}
    P(s)(s') & \text{if } (s,n,r) \!\rightarrow_{A(\mdp)}\! (s',n',r') \\
    0 & \!\!\text{otherwise}
    \end{cases}
\]

\item The reward for taking transition $ (s,n,r) \longrightarrow (s',n',r')$ is $r' / n'$.

\end{itemize}
\end{definition}
}


\begin{lemma}\label{meantopoint}
Let $\mdp$ be an MDP with initial state $s_{0}$.
Then given an MD strategy $\sigma'$ in $A(\mdp)$ attaining $c \in [0,1]$ for
$\liminfppobj$ from $(s_{0},0,0)$, there exists a strategy $\sigma$ attaining
$c$ for $\liminfmpobj$ in $\mdp$ from $s_{0}$ which uses just a reward counter
and a step counter.
\end{lemma}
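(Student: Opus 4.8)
The plan is to reproduce the argument of \cref{totaltopoint}, but with the construction $R(\mdp)$ replaced by the step-and-average encoding $A(\mdp)$ of \cref{def:encodeam}. Let $\sigma'$ be the given MD strategy on $A(\mdp)$. Every state of $A(\mdp)$ is a triple $(s,n,r)$ recording the current $\mdp$-state $s$, the number $n$ of steps taken, and the accumulated reward $r$, and the successors of $(s,n,r)$ in $A(\mdp)$ are in natural bijection with the successors of $s$ in $\mdp$. So I would define $\sigma$ on $\mdp$ to be the strategy that carries a step counter and a reward counter in its memory and, whenever it is at $\mdp$-state $s$ with step-counter value $n$ and reward-counter value $r$, plays exactly the (Dirac) move that $\sigma'$ plays at $(s,n,r)$. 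By construction $\sigma$ consults nothing beyond a step counter and a reward counter, which is exactly the memory claimed in the statement.

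Next I would fix the canonical correspondence between runs. The map $\pi$ sending a run $(s_0,0,0)(s_1,1,r_1)(s_2,2,r_2)\cdots$ of $A(\mdp)$ to the run $s_0 s_1 s_2\cdots$ of $\mdp$ is a bijection from $\Runs{A(\mdp)}{(s_0,0,0)}$ onto $\Runs{\mdp}{s_0}$, because the extra coordinates of an $A(\mdp)$-run are forced by its $\mdp$-projection: $n$ is the step index and $r_k=\sum_{j=0}^{k-1}\reward(\rho_e(j))$. From the definitions of $\sigma$ and of the probability function $P'$ of $A(\mdp)$, $\pi$ sends cylinders to cylinders of equal probability, so $\pi$ pushes $\probm_{A(\mdp),(s_0,0,0),\sigma'}$ forward to $\probm_{\mdp,s_0,\sigma}$; this is the same routine Carath\'eodory/cylinder argument already used for $R(\mdp)$ and $S(\mdp)$ (cf.~\cref{steptomarkov}).

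The key step is the identification of objectives under $\pi$: a run of $A(\mdp)$ lies in $\liminfppobj$ iff its $\pi$-image lies in $\liminfmpobj$. Indeed, by \cref{def:encodeam} the reward attached to the $k$-th transition $(s_{k-1},k-1,r_{k-1})\to(s_k,k,r_k)$ of $A(\mdp)$ is $r_k/k=\frac1k\sum_{j=0}^{k-1}\reward(\rho_e(j))$, so the sequence of point rewards seen along an $A(\mdp)$-run is literally the sequence of mean payoffs seen along its $\pi$-image, and therefore the two $\liminf$'s coincide. Combining the three steps gives $\probm_{\mdp,s_0,\sigma}(\liminfmpobj)=\probm_{A(\mdp),(s_0,0,0),\sigma'}(\liminfppobj)=c$, as required.

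I do not expect a real obstacle here: the lemma is the mean-payoff analogue of \cref{totaltopoint}, and $A(\mdp)$ was designed precisely so that its point rewards track the mean payoff of $\mdp$ on the nose. The only points that need a little care rather than being purely mechanical are the off-by-one bookkeeping matching the $k$-th point reward of $A(\mdp)$ with the term $\frac1k\sum_{j=0}^{k-1}\reward(\rho_e(j))$ of the mean-payoff sequence (and noting that the $\liminf$ is a tail quantity, so the precise starting index is irrelevant), and verifying that the synthesized $\sigma$ stays within the step-counter-plus-reward-counter class — which it does, since $\sigma'$ is memoryless on $A(\mdp)$ and the only state information it uses, namely $n$ and $r$, is exactly what those two counters supply.
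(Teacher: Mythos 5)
Your proposal is correct and follows essentially the same route as the paper, which proves \cref{meantopoint} by noting it is "very similar" to \cref{totaltopoint}: the strategy $\sigma$ mimics the MD strategy $\sigma'$ using only the step counter and reward counter to reconstruct the state $(s,n,r)$ of $A(\mdp)$, and the point rewards $r'/n'$ in $A(\mdp)$ coincide by construction with the mean payoffs in $\mdp$, so the attainment values are equal. Your additional details (the run bijection, cylinder-level measure identification, and the off-by-one bookkeeping) just make explicit what the paper leaves implicit.
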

\begin{proof}
The proof is very similar to that of \cref{totaltopoint}.
\end{proof}

\begin{lemma}(\cite[Lemma 23]{KMST2020c})
\label{acyclicsafety}
For every acyclic MDP with a safety objective and every $\eps > 0$,
there exists an MD strategy that is uniformly $\eps$-optimal.
\end{lemma}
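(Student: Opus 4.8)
The plan is to construct a single memoryless deterministic strategy by a greedy ``error-budget'' choice, and then prove its $\eps$-optimality with a potential-function (sub-martingale) argument in which acyclicity is exactly what keeps the accumulated error below $\eps$.

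Write $X$ for the set of states to be avoided (transition-safety is analogous) and $v(s)\eqdef\valueof{\mdp,\safety{X}}{s}$. First I would recall the standard Bellman characterization of the safety value: $v(s)=0$ for $s\in X$; $v(s)=\sum_{s\to s'}P(s)(s')\,v(s')$ for random $s\notin X$; and $v(s)=\sup_{s\to s'}v(s')$ for controlled $s\notin X$. Since the MDP is countable, enumerate the controlled states outside $X$ as $s_1,s_2,\dots$ and define the MD strategy $\zstrat$ by letting $\zstrat(s_i)$ be any successor $s'$ of $s_i$ with $v(s')\ge v(s_i)-\eps 2^{-i}$; such a successor exists because $v(s_i)$ is a supremum. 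On controlled states in $X$ let $\zstrat$ be arbitrary. The claim is that this $\zstrat$ is uniformly $\eps$-optimal, i.e.\ $\probm_{\mdp,s,\zstrat}(\always\neg X)\ge v(s)-\eps$ for every $s$.

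For the analysis, fix any start state $s$ and a run $\rho$ under $\zstrat$ from $s$. Let $B_n$ be the event that none of $\rho_s(0),\dots,\rho_s(n)$ lies in $X$, put $Y_n\eqdef v(\rho_s(n))\cdot\mathbf 1_{B_n}$, let $c_n$ be the charge $\eps 2^{-i}$ when $\rho_s(n)=s_i$ and $B_n$ holds (and $c_n\eqdef 0$ otherwise), and $C_n\eqdef\sum_{j<n}c_j$. Using that $v\equiv 0$ on $X$, one checks that $\expectval[Y_{n+1}\mid\mathcal F_n]=Y_n$ at random states and $Y_{n+1}\ge Y_n-c_n$ at controlled states, so $Z_n\eqdef Y_n+C_n$ is a sub-martingale with values in $[0,1+\eps]$. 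Here acyclicity enters: every state, hence every $s_i$, occurs at most once along $\rho$, so $C_\infty=\sum_n c_n\le\sum_{i\ge 1}\eps 2^{-i}=\eps$ surely; thus $C_n$ increases to a limit and $Y_n=Z_n-C_n$ converges almost surely. By the bounded sub-martingale convergence theorem, $\expectval[\lim_n Y_n]=\expectval[Z_\infty]-\expectval[C_\infty]\ge\expectval[Z_0]-\eps=v(s)-\eps$. Finally $\lim_n Y_n\le\mathbf 1_{\always\neg X}$ pointwise — the limit is $0$ on runs that ever enter $X$ and at most $1$ otherwise — so $\probm_{\mdp,s,\zstrat}(\always\neg X)\ge v(s)-\eps$, and as $s$ was arbitrary, $\zstrat$ is uniformly $\eps$-optimal.

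The hard part is precisely controlling the compounded $\eps 2^{-i}$ errors in an infinite MDP: a naive finite-horizon or backward-induction argument does not apply directly, because runs are infinite and distinct paths to a state may have distinct lengths (so the value-improving choices cannot be reorganised by a step counter into a memoryless strategy). Acyclicity is what makes each controlled state chargeable at most once per run, and the sub-martingale packaging is what turns this observation into the clean bound $v(s)-\eps$. A secondary, routine point is to cite or verify the Bellman characterization of the safety value in countable MDPs.
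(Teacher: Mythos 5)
Your proof is correct. Note that the paper itself does not prove this lemma at all — it imports it verbatim from \cite[Lemma~23]{KMST2020c} — so there is no in-paper argument to compare against; measured against the cited proof, your core idea is the same one: fix an enumeration $s_1,s_2,\dots$ of the controlled states, give state $s_i$ an error budget $\eps 2^{-i}$ when choosing a value-near-optimal successor, and use acyclicity to guarantee that each state (hence each budget item) is charged at most once along any run, so the total loss is at most $\eps$ uniformly in the start state. Your packaging of the bookkeeping as a bounded submartingale $Z_n = v(\rho_s(n))\mathbf{1}_{B_n} + C_n$ is a clean and valid way to turn the per-step inequalities into the bound $\probm_{\mdp,s,\zstrat}(\always\neg X) \ge \valueof{\mdp,\safety{X}}{s} - \eps$; the steps check out, including the treatment of successors inside $X$ (where $v=0$ absorbs the charge) and the use of dominated convergence for $\expectval[Z_\infty]\ge\expectval[Z_0]$. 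Two minor points worth making explicit if this were written out: verify (or cite) the Bellman fixed-point equations for safety values in countable MDPs, and note that ``safety objective'' here means avoiding a fixed set of states or transitions — which is exactly the form in which the lemma is applied in this paper (the bad set being transitions with too-negative reward taken too late, in an MDP with the step counter implicit in the state).
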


\begin{theorem}(\cite[Theorem 7]{KMST:Transient-arxiv})
\label{epsilontooptimal}
Let $\mdp=\mdptuple$ be a countable MDP, and let $\formula$ be an event that is tail in~$\mdp$.
Suppose for every $s \in S$ there exist $\eps$-optimal MD strategies for~$\formula$.
Then:
\begin{description}
\item[1.] There exist uniform $\eps$-optimal MD strategies for~$\formula$.
\item[2.] There exists a single MD strategy that is optimal from every state that has an optimal strategy.
\end{description}
\end{theorem}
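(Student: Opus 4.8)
The plan is to derive both parts from a single \emph{combination principle} for tail objectives: since whether a run lies in a tail event is unaffected by any finite prefix, memoryless strategies that are individually good from various states can be pasted together along an enumeration of the countable state space with only a prescribed total loss of value. (The objectives $\liminfppobj$ and $\liminfmpobj$ of this paper are tail, so the theorem applies to them.) Concretely I would first record the following. If $\formula$ is tail in $\mdp$, $\tau$ is MD, $T\subseteq S$ is closed under $\tau$, and $\probm_{\mdp,t,\tau}(\formula)\ge c$ for all $t\in T$, then for any MD strategy $\sigma$ agreeing with $\tau$ on $T$ and any state $s$, $\probm_{\mdp,s,\sigma}(\formula)\ge c\cdot\probm_{\mdp,s,\sigma}(\eventually T)$: condition on the first visit to $T$, from which point $\sigma$ follows $\tau$ and never leaves $T$, while by tail-ness the prefix leading into $T$ is irrelevant to $\formula$. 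I would also use the companion fact, proved by the standard ``if $\tau$ were suboptimal from a positively-reached state $u$, switch to a near-optimal strategy at $u$ and strictly improve'' argument, that an MD strategy $\eps$-optimal from $s$ is $(\eps/p)$-optimal from every state reached from $s$ under it with probability $p>0$; in particular an optimal MD strategy stays optimal from every state it reaches with positive probability.

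For Part~1, fix $\eps>0$, write $v(\cdot)$ for the value of $\formula$, enumerate $S=\{s_0,s_1,\dots\}$, and fix weights $\eps_i>0$ with $\sum_i\eps_i<\eps$. Processing $s_0,s_1,\dots$ in turn, I would maintain a partial MD strategy whose domain is a disjoint union of ``regions'' $R_{s_j}$; at stage $i$, take an MD strategy $\tau_i$ that is $\eps_i$-optimal from $s_i$ and commit $\tau_i$ on a suitable set $R_{s_i}$ of states reachable from $s_i$ under $\tau_i$ that are not yet committed. A committed choice is never revised, so the pointwise limit is a well-defined total MD strategy $\sigma$. The structural observation is that a $\sigma$-run leaving a region $R_{s_i}$ must enter a region $R_{s_j}$ with $j<i$ (the exit state is reachable from $s_i$ under $\sigma$, hence was committed no later than stage $i$, but lies outside $R_{s_i}$), so every run visits a strictly decreasing sequence of regions and eventually stays inside one region forever, following a single $\tau_j$; a downward induction on region indices, invoking the combination lemma at each region change and summing the $\eps_j$, then bounds the loss from every $s_i$ by $\sum_j\eps_j<\eps$. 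Applying this for each $\eps$, and combining over a sequence $\eps_k\to0$ by the same scheme, yields the uniform statement.

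For Part~2, let $O$ be the set of states from which an optimal strategy exists. I would first upgrade ``optimal strategy'' to ``optimal \emph{MD} strategy'' from each $s\in O$: in finitely branching MDPs by taking a subsequential limit of $(1/k)$-optimal MD strategies (the MD strategy space being compact, using tail-ness on the relevant events to pass the value to the limit), and in the general case by running the commitment bookkeeping of Part~1 with the target $v(s_i)-\eps_i$ replaced by $v(s_i)$ itself. Then I would stitch the per-state optimal MD strategies $\sigma_s$ ($s\in O$) by the same region construction: since each $\sigma_s$ stays optimal from every state it reaches with positive probability, and since leaving a region strictly decreases its index so that every run eventually commits to one region in which it follows a single $\sigma_{s_j}$ forever, a downward induction on region indices shows the stitched MD strategy attains value $v(s)$ from every $s\in O$.

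The main obstacle is the error control hidden in the phrase ``a suitable set $R_{s_i}$'': committing $\tau_i$ on \emph{all} states reachable from $s_i$ makes $\sigma$ good only from states that the regions reach with non-negligible probability, whereas we need $\sigma$ to be $\eps$-optimal from \emph{every} state, and a state can be reached with tiny probability from one state and probability one from another. Designing the region definition and the induction so that every state eventually sits in a region where the locally committed strategy is genuinely $\eps$-optimal \emph{from that state}, while keeping the cumulative loss below $\eps$ and preserving the strictly-decreasing-index property, is the technical heart of the argument; the analogous upgrade to optimal MD strategies in the infinitely branching case in Part~2 carries the same difficulty, since there the straightforward compactness argument is unavailable.
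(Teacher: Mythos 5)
First, note that this paper does not prove \cref{epsilontooptimal} at all: it is imported verbatim from \cite{KMST:Transient-arxiv} (Theorem~7 there), so there is no in-paper proof to compare against; your proposal has to be measured against the cited plastering-style argument, whose general shape (enumerate the countable state space, commit an $\eps_i$-optimal MD strategy on a fresh region at stage $i$, use tail-ness to splice across the first entry into a region, observe that region indices can only decrease along a run) you have correctly identified.

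The problem is that the one step you defer --- the choice of ``a suitable set $R_{s_i}$'' --- is not a technicality but the entire content of the theorem, and with the regions as you actually define them (all $\tau_i$-reachable, not-yet-committed states) the downward induction fails, even from the region roots. When a run exits $R_{s_j}$ it enters an older region at some state $u$ that is merely $\tau_{j'}$-reachable from $s_{j'}$; your combination lemma then only gives attainment $\probm_{\mdp,u,\tau_{j'}}(\formula)$ from $u$, which can be arbitrarily far below $\valueof{\mdp,\formula}{u}$, because $\eps_{j'}$-optimality from $s_{j'}$ says nothing about states reached with small probability (your own $\eps/p$ fact degrades without bound as $p\to 0$). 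So the cumulative loss is not $\sum_j\eps_j$, and likewise a state $s_i$ swallowed into an earlier region never gets served. The cited proof closes exactly this hole: each region is restricted to those states $u$ where the chosen MD strategy's \emph{conditional attainment} is still within a prescribed tolerance of $\valueof{\mdp,\formula}{u}$, and an optional-stopping/martingale argument (using that for a tail objective and an MD strategy the conditional probability of $\formula$ depends only on the current state) shows that an $\eps_i$-optimal strategy leaves this ``good set'' only with small probability, so that entries into old regions occur only at states where the old committed strategy is still nearly optimal \emph{from that state}. Without this ingredient the induction does not go through, as you concede. Part~2 has the same unresolved core, plus two additional weak points: the hypothesis only provides $\eps$-optimal MD strategies, so ``replace $v(s_i)-\eps_i$ by $v(s_i)$'' is not available, and pointwise limits of $(1/k)$-optimal MD strategies need not preserve attainment for tail objectives even in finitely branching MDPs, so the compactness shortcut is not sound; in \cite{KMST:Transient-arxiv} item~2 is obtained by a more careful multi-level version of the same plastering with errors tending to $0$, not by first producing an optimal MD strategy per state.
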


\subsection{Finitely Branching Case}\label{subsec:upper-fb}

In order to prove the main result of this section, we use the following result
on the $\transience$ objective, which is the set of runs that do not visit any state infinitely often.
Given an MDP $\mdp=\mdptuple$, 
$\transience \eqdef \bigwedge_{s \in S} \eventually \always \neg s$.

\begin{theorem}(\cite[Theorem 8]{KMST:Transient-arxiv})
\label{epstransience}
In every countable MDP there exist uniform $\eps$-optimal MD strategies for
$\transience$.
\end{theorem}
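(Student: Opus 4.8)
First, $\transience=\bigwedge_{s\in S}\eventually\always\neg s$ is a tail objective: prepending a finite prefix to a run changes the number of visits to each state only by a finite amount, so it cannot affect whether every state is visited finitely often. Hence, by \cref{epsilontooptimal}, it suffices to prove the pointwise version --- for every state $s$ and every $\eps>0$ there exists \emph{some} $\eps$-optimal MD strategy for $\transience$ from $s$ --- and the uniform statement, together with the single-MD-strategy statement about optimal strategies, then follows automatically. It is also harmless to assume the MDP is finitely branching: replacing each infinitely branching random state by a chain of fresh random states reproducing its distribution preserves the transience value exactly, since each fresh state is visited at most once along any run.

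\textbf{The construction.}
Fix $s$ and $\eps$, and fix an exhaustion $F_0\subsetneq F_1\subsetneq\cdots$ of $S$ by finite sets with $s\in F_0$. A run is transient iff it eventually leaves every $F_i$, so $\transience=\bigwedge_i\eventually\always\neg F_i$, a countable conjunction of co-B\"uchi objectives for finite sets. The first lemma I would prove is the value identity
\[
\valueof{\mdp,\transience}{s}\;=\;\inf_i\,\valueof{\mdp,\eventually\always\neg F_i}{s},
\]
whose ``$\le$'' direction is immediate and whose ``$\ge$'' direction concatenates near-optimal strategies for ``leave $F_i$ forever'' over successive $i$ with arbitrarily small total loss --- legitimate precisely because, once a run has left $F_i$ for good, the residual task is again of the same shape (tail-ness). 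The second ingredient turns near-optimal escape behaviour into a \emph{memoryless} object: from the values $\valueof{\mdp,\eventually\always\neg F_i}{\cdot}$ (or directly by value iteration) I would build an ``escape certificate'' $g:S\to[0,1]$ that is excessive at random states, can be decreased by the controller at controlled states, satisfies $g(s)\ge\valueof{\mdp,\transience}{s}-\eps$, and whose low sub-level sets meet each $F_i$ only in states of negligible escape weight. The MD strategy that at each controlled state moves to a successor not increasing $g$ then makes $g$ a bounded supermartingale along the run, hence $g$ converges almost surely, and the certificate property forces the run out of every $F_i$; so the run is transient with probability at least $g(s)\ge\valueof{\mdp,\transience}{s}-\eps$.

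\textbf{Main obstacle.}
The crux is the tension between $\transience$ being an infinite conjunction of co-B\"uchi constraints and the demand for a single memoryless decision per state: near-optimal strategies for the individual constraints $\eventually\always\neg F_i$ may well use memory (or a step counter), and the work lies in exhibiting one state-based rule --- the downhill-in-$g$ rule --- that simultaneously discharges all of them without ever getting trapped revisiting states of small but strictly positive $g$. Establishing the value identity, and above all constructing a certificate $g$ whose induced MD strategy provably attains the transience value up to $\eps$, is where the real effort goes; the auxiliary facts about the co-B\"uchi sub-objectives (for which one can, e.g., appeal to \cref{acyclicsafety} on a suitable acyclic product) and the reduction to finite branching are comparatively routine.
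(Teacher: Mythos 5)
This theorem is not proved in the paper at all: it is imported verbatim from \cite[Theorem 8]{KMST:Transient-arxiv}, so there is no in-paper proof to compare against, and your proposal has to stand on its own. It does not: the ``first lemma'' on which the whole construction rests, the value identity $\valueof{\mdp,\transience}{s}=\inf_i\valueof{\mdp,\eventually\always\neg F_i}{s}$, is false. Take $s$ controlled with successors $h_1,h_2,\dots$, where each $h_j$ has only a self-loop. Every run from $s$ is absorbed in some $h_j$ and visits it infinitely often, so $\valueof{\mdp,\transience}{s}=0$; yet for every finite $F_i$ (whatever exhaustion you fix) there is some $h_j\notin F_i$, so $\valueof{\mdp,\eventually\always\neg F_i}{s}=1$ for all $i$. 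Your justification of the ``$\ge$'' direction --- concatenating near-optimal strategies for the successive co-B\"uchi constraints --- breaks exactly here: once the run has secured $\eventually\always\neg F_i$ (by entering some $h_j$), the value of $\eventually\always\neg F_{i+1}$ from the state it now occupies may have collapsed to $0$, and tail-ness gives no control over the residual values of the \emph{later} conjuncts. The whole difficulty of $\transience$ is that it is a countable conjunction whose value can be strictly below the infimum of the values of its conjuncts, so no correct proof can factor through this identity. The ``escape certificate'' $g$ is then built from these (too large) co-B\"uchi values and is never actually constructed; in the counterexample any such $g$ is identically $1$, is a bounded supermartingale under every strategy, and certifies nothing about leaving the sets $F_i$.

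Two further points. First, your reduction to finite branching only treats infinitely branching \emph{random} states; for controlled states the analogous chain/tree replacement changes the transience value (walking forever down the fresh chain is a new transient run: a controlled state all of whose infinitely many successors lead back to it has transience value $0$, while its unfolding has value $1$), and it is precisely infinitely branching controlled states that are delicate for MD strategies, so ``harmless to assume finitely branching'' is unjustified as stated. Second, the reduction of the uniform statement to the pointwise one via \cref{epsilontooptimal} is legitimate, since $\transience$ is indeed a tail objective and that theorem is available independently; but the pointwise existence of $\eps$-optimal MD strategies is the actual content of \cite[Theorem 8]{KMST:Transient-arxiv}, and it requires an argument that does not go through the infimum of the individual co-B\"uchi values.
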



\newcommand{\lemfbavoidstatement}{
\begin{restatable}{lemma}{lemfbavoid}\label{lem:fbavoid}
Given a finitely branching countable MDP $\mdp$, a subset $T \subseteq \to$ of
the transitions and a state $\state$, we have
\[
\valueof{\mdp,\neg\eventually T}{\state} < 1
\ \Rightarrow\ \exists k \in \N.\,
\valueof{\mdp,\neg\eventually^{\le k} T}{\state} < 1 
\]
i.e., if it is impossible to completely avoid $T$ then
there is a bounded threshold $k$ and a fixed nonzero
chance of seeing $T$ within $\le k$ steps, regardless of the strategy.
\end{restatable}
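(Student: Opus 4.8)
The plan is to prove the contrapositive together with a König-style compactness argument. Suppose that for every $k \in \N$ we have $\valueof{\mdp,\neg\eventually^{\le k} T}{\state} = 1$; I want to conclude $\valueof{\mdp,\neg\eventually T}{\state} = 1$. Fix $\eps > 0$. For each $k$ choose a strategy $\sigma_k$ with $\probm_{\mdp,\state,\sigma_k}(\neg\eventually^{\le k} T) \ge 1 - \eps \cdot 2^{-(k+1)}$. The idea is to combine these into a single strategy that avoids $T$ forever with probability $\ge 1-\eps$, by switching from the "avoid for $k$ more steps" behaviour to the "avoid for $k+1$ more steps" behaviour at the right moments. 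A clean way to organize this: after the partial run of length $k$, the strategy should already be playing consistently with some $\sigma_m$ for large $m$ on a set of partial runs of probability close to $1$; then extend. Concretely, I would build a strategy $\sigma$ inductively on the length of the partial run, at each length $n$ picking the continuation so that the probability of having avoided $T$ so far stays $\ge 1 - \eps\sum_{i \le n} 2^{-(i+1)} \ge 1-\eps$, using at stage $n$ a witness strategy $\sigma_{k}$ for a sufficiently large horizon $k \gg n$ restricted to the relevant cone. Since $\bigcap_n \{\neg\eventually^{\le n} T\} = \{\neg\eventually T\}$, continuity of measure from above gives $\probm_{\mdp,\state,\sigma}(\neg\eventually T) \ge 1-\eps$, and as $\eps$ was arbitrary, $\valueof{\mdp,\neg\eventually T}{\state} = 1$.

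The place where finite branching is essential, and where I expect the main obstacle, is making the "stitching" of the $\sigma_k$'s actually well-defined and measurable. Without finite branching the supremum defining $\valueof{}$ over horizon $k$ need not be attained in a way that is coherent across $k$, and the naive diagonal construction can fail: the strategies $\sigma_k$ may disagree wildly on short prefixes, so there is no limit strategy. With finite branching I would instead argue via a weak-$*$/compactness argument on the behaviour at each finite depth: the set of (finite-depth) strategic choices is compact (each $\dist(\{s' \mid s \to s'\})$ is a simplex in finite dimension, and there are only countably many states, so a diagonalization over states and depths extracts a convergent subsequence of the $\sigma_k$). The limit strategy $\sigma$ then satisfies $\probm_{\mdp,\state,\sigma}(\neg\eventually^{\le k} T) \ge \limsup_j \probm_{\mdp,\state,\sigma_{k_j}}(\neg\eventually^{\le k} T) = 1$ for every fixed $k$, because the event $\neg\eventually^{\le k} T$ depends only on the first $k$ steps and hence is continuous in the strategy under this convergence. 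Then continuity of measure from above finishes as before.

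An alternative, possibly cleaner route that sidesteps the limit-strategy issue: appeal to determinacy/optimality structure of the safety objective $\neg\eventually T$ directly. The event $\neg\eventually T$ is a safety (closed) objective, and for such objectives in countable MDPs the value equals $\inf_k$ of the value of the finite-horizon approximations $\neg\eventually^{\le k} T$ — this is a standard monotone-limit fact since $\neg\eventually^{\le k} T \downarrow \neg\eventually T$ and one direction of the inequality is immediate while the other uses finite branching to push the optimal "avoid forever" strategy through the limit (cf.\ the treatment of safety in \cite{KMSW2017} and \cref{acyclicsafety}). Under that fact the lemma is essentially immediate: if $\valueof{\mdp,\neg\eventually^{\le k} T}{\state} = 1$ for all $k$ then $\valueof{\mdp,\neg\eventually T}{\state} = \inf_k \valueof{\mdp,\neg\eventually^{\le k} T}{\state} = 1$, contradiction. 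I would present the argument in this second form if the required "value $=$ limit of finite-horizon values for safety in finitely branching MDPs" statement is available to cite; otherwise I would carry out the compactness argument of the previous paragraph in full, and that diagonalization is the step I expect to take the most care.
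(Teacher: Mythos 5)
Your proposal is correct in substance but proves the lemma by a genuinely different route than the paper. The paper argues directly on states: assuming $\valueof{\mdp,\neg\eventually^{\le k} T}{\state}=1$ for all $k$, it uses finite branching and a K\"onig-style pigeonhole over the finitely many successors to show that $\state$ has a successor (reached without crossing $T$) all of whose finite-horizon values are again $1$ (for a random state, all successors have this property), and iterating this choice yields an explicit strategy that surely avoids $T$ forever --- no limit of strategies is ever taken. You instead work at the level of strategies: pick near-optimal strategies $\sigma_k$ for each horizon, use finite branching to make the strategy space compact (each decision lives in a finite-dimensional simplex; diagonalize over the countably many histories), extract a limit strategy, and transfer the horizon-$k$ guarantees to the limit; you also rightly discard the naive ``stitching'' idea of your first paragraph, which founders exactly because the $\sigma_k$ need not be coherent on prefixes. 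The one step you state too casually is that the probability of $\neg\eventually^{\le k}T$ is ``continuous'' in the strategy under this convergence: in a countable-state MDP this event is a countably \emph{infinite} union of cylinders, so cylinderwise convergence plus Fatou only gives lower semicontinuity directly; the inequality you need follows by applying the same Fatou argument to the complementary event $\eventually^{\le k}T$ (also a countable union of length-$\le k$ cylinders), whose probability under $\sigma_{k_j}$ tends to $0$ and hence is $0$ under the limit strategy, after which continuity of measure finishes the proof. Your second route --- citing ``the safety value equals the limit of the finite-horizon values in finitely branching MDPs'' --- would make the lemma immediate, but that statement is not among the results the paper cites (\cref{acyclicsafety} and the optimal-MD-safety results are different), so you would have to prove it, which essentially reduces to either your compactness argument or the paper's successor-picking argument. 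In exchange, the paper's proof is more elementary and produces a concrete deterministic choice at each state, while your compactness argument is more generic and would apply verbatim to other closed objectives approximated by finite-horizon events.
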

}
\newcommand{\lemfbavoidproof}{
\begin{proof}
If suffices to show that
$\forall k \in \N.\, \valueof{\mdp,\neg\eventually^{\le k} T}{\state} =1$
implies $\valueof{\mdp,\neg\eventually T}{\state} = 1$.
Since $\mdp$ is finitely branching, the state $\state$ has only finitely many
successors $\{\state_1,\dots,\state_n\}$.

Consider the case where $\state$ is a controlled state.
If we had the property $\forall {1 \le i\le n}\,\exists k_i \in \N.\,
\valueof{\mdp,\neg\eventually^{\le k_i} T}{\state_{i}} < 1$
then we would have
$\valueof{\mdp,\neg\eventually^{\le k} T}{\state} < 1$
for $k=(\max_{1 \le i \le n} k_i)+1$
which contradicts our assumption.
Thus there must exist an $i \in \{1,\dots,n\}$ with
$\forall k \in \N.\, \valueof{\mdp,\neg\eventually^{\le k} T}{\state_i} =1$.
We define a strategy $\zstrat$ that chooses the successor state $s_i$ when in
state $\state$.

Similarly, if $\state$ is a random state, we must have
$\forall k \in \N.\, \valueof{\mdp,\neg\eventually^{\le k} T}{\state_i} =1$
for all its successors $s_i$.

By using our constructed strategy $\zstrat$, we obtain
$\probm_{\mdp,\state,\zstrat}(\neg\eventually T)=1$ and thus
$\valueof{\mdp,\neg\eventually T}{\state} = 1$ as required.
\end{proof}
}

\begin{theorem}\label{finpointpayoff}
Consider a finitely branching MDP $\mdp =\mdptuple$ with initial state $s_{0}$ and a $\liminfppobj$ objective. Then there exist $\eps$-optimal MD strategies.
\end{theorem}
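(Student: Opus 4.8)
The plan is to reduce the $\liminfppobj$ objective to a safety objective on an acyclic MDP and then invoke the known MD-strategy results for that setting. First I would observe that $\liminfppobj$ is a tail objective, so by \cref{epsilontooptimal} it suffices to produce, for every state $s$, an $\eps$-optimal MD strategy from $s$; uniformity and the treatment of optimal strategies then come for free. Fix $\eps>0$. The key point is that for a run $\rho$, $\liminf_n r(\rho_e(n))\ge 0$ holds iff for every $\eta>0$ the run eventually avoids all transitions with reward $<-\eta$. So pick a summable sequence $\eta_1>\eta_2>\cdots\to 0$ (say $\eta_j=2^{-j}$) and think of $\liminfppobj$ as $\bigcap_{j}\eventually\always\neg T_j$ where $T_j$ is the set of transitions with reward $<-\eta_j$.

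Next I would pass to the acyclic step-counter MDP $S(\mdp)$ from \cref{steptomarkov}, so that every state records the number of steps taken; an MD strategy there corresponds to a deterministic Markov strategy on $\mdp$. In $S(\mdp)$ the events ``eventually always avoid $T_j$'' become, because the graph is acyclic and the step counter is in the state, genuine safety events of the form ``from some point on, never take a $T_j$-transition'' — and on an acyclic MDP one can further fold this into a pure safety objective $\neg\eventually T'$ for a suitable target set of transitions $T'$ (namely: reach a ``bad'' region from which avoiding $T_j$ forever is impossible). Here is where \cref{lem:fbavoid} does the real work: finite branching guarantees that if you cannot avoid $T_j$ with certainty from a state, then there is a uniform bounded horizon $k$ and a uniform nonzero probability of hitting $T_j$ within $k$ steps, which lets me bound the probability that the liminf objective fails when the run drifts into the bad region. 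Combining a layered/diagonal choice of the thresholds $\eta_j$ with \cref{acyclicsafety} (uniform $\eps$-optimal MD strategies for safety on acyclic MDPs), I would build a single MD strategy on $S(\mdp)$ whose failure probability, summed over the levels $j$, is at most $\eps$; translating back through \cref{steptomarkov} gives an $\eps$-optimal deterministic Markov strategy on $\mdp$.

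The last step is to remove the step counter, i.e. to upgrade the Markov strategy to a genuinely memoryless one. Since the target event is tail, I would again appeal to \cref{epsilontooptimal}: having exhibited, for every starting state, an $\eps$-optimal strategy witnessing the value, I need the $\eps$-optimal \emph{MD} strategies that feed that theorem. For this I would argue directly on the finitely branching MDP that the value function $\mathtt{val}(s)$ for $\liminfppobj$ can be attained $\eps$-optimally by always moving to a successor that does not decrease the value too much while keeping the reward of the chosen transition bounded below by a vanishing threshold, using \cref{lem:fbavoid} to control the loss — the finite branching is exactly what makes such a ``greedy in value, cautious in reward'' memoryless choice well-defined and safe. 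I expect the main obstacle to be precisely this convergence bookkeeping: because rewards are real-valued rather than integer, there is no single bad set $T$ to avoid, and one must diagonalize over the shrinking thresholds $\eta_j$ while ensuring the accumulated risk across all levels stays below $\eps$ and that the resulting choice still depends only on the current state. Handling that tradeoff cleanly, rather than the reductions themselves, is the crux.
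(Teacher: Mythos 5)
There is a genuine gap, and it sits exactly at the step you yourself flag as the crux. Your first half (characterize $\liminfppobj$ as $\bigcap_j \eventually\always\neg T_j$, pass to the acyclic step-counter MDP $S(\mdp)$, fold into a safety objective and apply \cref{acyclicsafety}) is essentially the paper's argument for the \emph{infinitely branching} case (\cref{infpointpayoff}); it only yields an $\eps$-optimal \emph{deterministic Markov} strategy, i.e.\ one that still uses a step counter. The theorem here asserts MD strategies, so everything hinges on removing that counter, and your plan for doing so does not work as described. Appealing to \cref{epsilontooptimal} is circular: that theorem takes the existence of $\eps$-optimal MD strategies as a \emph{hypothesis} and only upgrades them to uniform/optimal ones, so it cannot convert a Markov strategy into an MD one. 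Your fallback — a ``greedy in value, cautious in reward'' memoryless rule justified via \cref{lem:fbavoid} — is precisely the unproven hard part: in countable MDPs, an MD strategy that at each step merely avoids losing much value (and avoids too-negative rewards) need not attain a tail objective at all, since value can be preserved forever while the event is postponed indefinitely; no bookkeeping over the thresholds $\eta_j$ alone fixes this, because the failure is not a union bound issue but a ``procrastination'' phenomenon.

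The paper closes this gap by a different mechanism that your proposal never invokes. After extracting the horizons $n_i$ from an arbitrary $\eps$-optimal strategy, it does surgery on the MDP itself: all transitions with reward $<-2^{-i}$ that originate outside the finite set $\mathrm{Bubble}_{n_i}(s_0)$ (finiteness uses finite branching) are redirected to a losing sink, at a value loss of at most $2\eps$. In the surgered MDP, under \emph{every} strategy the attainment of $\liminfppobj$ coincides with that of $\transience$ (this is where \cref{lem:fbavoid} is actually used: a state visited infinitely often forces, with a fixed positive probability per visit, a reward below a fixed negative bound). One then invokes the nontrivial external result \cref{epstransience} — uniform $\eps$-optimal MD strategies exist for $\transience$ — and glues this MD strategy with the optimal MD safety strategy on $S_{\text{safe}}$. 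Without the reduction to $\transience$ (or some substitute argument producing MD strategies directly), your proof cannot deliver the memoryless conclusion.
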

\begin{proof}
Let $\eps >0$.
We begin by partitioning the state space into two sets, $S_{\text{safe}}$ and
$S \setminus S_{\text{safe}}$.
The set $S_{\text{safe}}$ is the subset of states which is surely winning for
the safety objective of only using transitions with non-negative rewards
(i.e., never using transitions with negative rewards at all).
Since $\mdp$ is finitely branching, there exists a uniformly optimal MD
strategy $\sigma_{\text{safe}}$ for this safety objective
\cite{Puterman:book,KMSW2017}.

We construct a new MDP $\mdp'$ by modifying $\mdp$. We create a gadget
$G_{\text{safe}}$ composed of a sequence of new controlled states
$x_{0}, x_{1}, x_{2}, \dots$ where all  transitions $x_{i} \to x_{i+1}$
have reward $0$. Hence any run entering $G_{\text{safe}}$ is winning for $\liminfppobj$. 
We insert $G_{\text{safe}}$ into $\mdp$ by replacing all incoming transitions
to $S_{\text{safe}}$ with transitions that lead to $x_{0}$. The idea behind
this construction is that when playing in $\mdp$, once you hit a state in
$S_{\text{safe}}$, you can win surely by playing an optimal MD strategy for
safety. So we replace $S_{\text{safe}}$ with the surely winning gadget
$G_{\text{safe}}$.
Thus
\begin{equation}\label{eq:valuesmmprime}
\valueof{\mdp,\liminfppobj}{s_0} = \valueof{\mdp',\liminfppobj}{s_0} 
\end{equation}
and if an $\eps$-optimal MD strategy exists in $\mdp$,
then there exists a corresponding one in $\mdp'$, and vice-versa.

We now consider a general (not necessarily MD) $\eps$-optimal strategy $\sigma$
for $\liminfppobj$ from $s_{0}$ on $\mdp'$, i.e.,
\begin{equation}\label{eq:fbsigma-eps-opt}
\probm_{\mdp',s_0,\sigma}(\liminfppobj) \ge \valueof{\mdp',\liminfppobj}{s_0} - \eps.
\end{equation}
Define the safety objective $\text{Safety}_{i}$ which is the objective of
never seeing any point rewards $< -2^{-i}$.
This then allows us to characterize $\liminfppobj$ in terms of safety objectives.
\begin{equation} \label{eq:fbliminfppissafety}
\liminfppobj = \bigcap_{i \in \mathbb{N}} \eventually(\text{Safety}_{i}).
\end{equation}

Now we define the safety objective $\text{Safety}_{i}^{k} \eqdef \eventually^{\leq k}( \text{Safety}_{i} )$ to attain $\text{Safety}_{i}$ within at most $k$ steps. This allows us to write 
\begin{equation} \label{eq:fbsafetyisunion}
\eventually(\text{Safety}_{i}) = \bigcup_{k \in \mathbb{N}} \text{Safety}_{i}^{k}.
\end{equation}
By continuity of measures from above we get 
\[
0 = \mathcal{P}_{\mdp',s_0,\sigma} \left( \eventually(\text{Safety}_{i}) \cap \bigcap_{k \in \N} \overline{\text{Safety}^{k}_{i}} \right)
  = \lim_{k \to \infty} \mathcal{P}_{\mdp',s_0,\sigma} \left( \eventually(\text{Safety}_{i}) \cap \overline{\text{Safety}^{k}_{i}}
\right).
\]
Hence for every $i \in \mathbb{N}$ and
$\eps_{i} \defeq \eps \cdot 2^{-i}$
there exists $n_{i}$ such that
\begin{equation}\label{eq:fbni}
\mathcal{P}_{\mdp',s_0,\sigma} \left( \eventually(\text{Safety}_{i}) \cap
\overline{\text{Safety}^{n_{i}}_{i}} \right) \leq \eps_{i}.
\end{equation}

Now we can show the following claim (proof in \cref{app:upper-fb}).
\begin{restatable}{claim}{claimfblosetwoeps}\label{claim:fblose2eps}
\[
\probm_{\mdp',s_0,\sigma} \left( \bigcap_{i \in \N} \text{Safety}^{n_{i}}_{i}  \right) 
\geq
\valueof{\mdp',\liminfppobj}{s_0} - 2\eps.
\]
\end{restatable}

Since $\mdp'$ does not have an implicit step counter, we use the following
construction to approximate one.
We define the distance $d(s)$ from $s_{0}$ to a state $s$ as the length of the shortest path from $s_{0}$ to $s$. 
Let $\text{Bubble}_{n}(s_{0}) \defeq \{s \in S \mid d(s) \leq n\}$
be those states that can be reached within $n$ steps from $s_{0}$.
Since $\mdp'$ is finitely branching, $\text{Bubble}_{n}(s_{0})$ is finite for
every fixed $n$.
Let 
\[
\text{Bad}_{i} \defeq \{  t \in \longrightarrow_{\mdp'} \mid t =
s \longrightarrow_{\mdp'} s',
s \notin \text{Bubble}_{n_{i}}(s_{0}) \text{ and } r(t) < -2^{-i}
\}
\]
be the set of transitions originating outside $\text{Bubble}_{n_{i}}(s_{0})$ whose reward is too negative. 
Thus a run from $s_0$ that satisfies $\text{Safety}_{i}^{n_i}$ cannot
use any transition in $\text{Bad}_{i}$, since (by definition of
$\text{Bubble}_{n_{i}}(s_{0})$) they would come after the $n_i$-th step.

Now we create a new state $\perp$ whose only outgoing transition is a self
loop with reward $-1$.
We transform $\mdp'$ into $\mdp''$ by re-directing all transitions in
$\text{Bad}_{i}$ to the new target state $\perp$ for every $i$.
Notice that any run visiting $\perp$ must be losing for $\liminfppobj$ due to
the negative reward on the self loop, but it must also be losing for $\transience$ because of the self loop.

We now show that the change from $\mdp'$ to $\mdp''$
has decreased the value of $s_0$ for $\liminfppobj$ by at most $2\eps$, i.e.,
\begin{equation}\label{eq:fblose2eps}
\valueof{\mdp'',\liminfppobj}{s_0} \ge \valueof{\mdp',\liminfppobj}{s_0} - 2\eps.
\end{equation}
\cref{eq:fblose2eps} follows from the following steps.
\begin{align*}
\valueof{\mdp'',\liminfppobj}{s_0} 
& \ge \probm_{\mdp'',s_0,\sigma} \left( \bigcap_{i \in \N} \text{Safety}^{n_{i}}_{i}  \right) \\
&
= \probm_{\mdp',s_0,\sigma} \left( \bigcap_{i \in \N} \text{Safety}^{n_{i}}_{i}  \right)
& \ \text{by def. of $\mdp''$}\\
& \ge \valueof{\mdp',\liminfppobj}{s_0} - 2\eps & \ \text{by \cref{claim:fblose2eps}}
\end{align*}

In the next step
(proof in \cref{app:upper-fb})
we argue that under \emph{every} strategy
$\sigma''$ from $s_0$ in $\mdp''$ the attainment for $\liminfppobj$ and
$\transience$ coincide, i.e.,
\begin{restatable}{claim}{eqliminfpptransience}\label{eqliminfpptransience}
\[
\forall \sigma''.\, \probm_{\mdp'',s_0,\sigma''}(\liminfppobj) = \probm_{\mdp'',s_0,\sigma''}(\transience).
\]
\end{restatable}

By \cref{epstransience}, there exists a uniformly $\eps$-optimal MD
strategy $\widehat{\sigma}$ from $s_0$ for $\transience$ in $\mdp''$, i.e.,
\begin{equation}\label{eq:transience-eps}
\probm_{\mdp'',s_0,\hat{\sigma}}(\transience) \ge \valueof{\mdp'',\transience}{s_0}
- \eps.
\end{equation}
We construct an MD strategy $\sigma^{*}$ in $\mdp$ which plays
like $\sigma_{\text{safe}}$ in $S_{\text{safe}}$ and plays like $\widehat{\sigma}$ everywhere else.
\begin{align*}
\probm_{\mdp,\state_0,\zstrat^{*}}(\liminfppobj) 
&=  \probm_{\mdp',\state_0,\hat{\zstrat}}(\liminfppobj) & \text{def. of $\zstrat^{*}$ and $\sigma_{\text{safe}}$}\\
& \ge \probm_{\mdp'',\state_0,\hat{\zstrat}}(\liminfppobj) & \text{new losing sink in $\mdp''$}\\
& = \probm_{\mdp'',\state_0,\hat{\zstrat}}(\transience)  & \text{by \cref{eqliminfpptransience}}\\
& \ge \valueof{\mdp'',\transience}{s_0} - \eps  & \text{by \eqref{eq:transience-eps}}\\
& = \valueof{\mdp'',\liminfppobj}{s_0} - \eps   & \text{by \cref{eqliminfpptransience}}\\
& \ge \valueof{\mdp',\liminfppobj}{s_0} - 2\eps -\eps & \text{by \eqref{eq:fblose2eps}}\\
& = \valueof{\mdp,\liminfppobj}{s_0} - 3\eps & \text{by \eqref{eq:valuesmmprime}}
\end{align*}
Hence $\sigma^{*}$ is a $3\eps$-optimal MD strategy for $\liminfppobj$ from
$s_0$ in $\mdp$ as required.
\end{proof}

\begin{corollary}\label{fintpepsupper}
Given a finitely branching MDP $\mdp$, there exist $\eps$-optimal strategies
for $\liminftpobj$ which use just a reward counter.
\end{corollary}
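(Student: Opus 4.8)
The plan is to reduce the total payoff objective on $\mdp$ to a point payoff objective on the reward-encoded MDP $R(\mdp)$ and then invoke \cref{finpointpayoff}.

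First I would note that $R(\mdp)$ is again finitely branching: by construction the successors of a state $(s,r)$ are exactly the states $(s',r+r(s\to s'))$ for $s'$ ranging over the successors of $s$ in $\mdp$, so the out-degree of $(s,r)$ equals that of $s$. Recall also that, by the labelling of transitions in $R(\mdp)$, the sequence of point rewards along any run of $R(\mdp)$ equals the sequence of total rewards along its projection to $\mdp$. Hence every strategy $\tau$ in $\mdp$ lifts to a strategy $\tau'$ in $R(\mdp)$ (apply $\tau$ to the $\mdp$-projection of the current $R(\mdp)$-history) with $\probm_{R(\mdp),(s_0,0),\tau'}(\liminfppobj)=\probm_{\mdp,s_0,\tau}(\liminftpobj)$. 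Taking the supremum over all $\tau$ yields $\valueof{R(\mdp),\liminfppobj}{(s_0,0)}\ge\valueof{\mdp,\liminftpobj}{s_0}$ (the reverse inequality also holds, via \cref{totaltopoint} applied to arbitrary strategies, but is not needed here).

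Next I would apply \cref{finpointpayoff} to the finitely branching MDP $R(\mdp)$: for every $\eps>0$ there is an MD strategy $\sigma'$ with $\probm_{R(\mdp),(s_0,0),\sigma'}(\liminfppobj)\ge\valueof{R(\mdp),\liminfppobj}{(s_0,0)}-\eps\ge\valueof{\mdp,\liminftpobj}{s_0}-\eps$. Finally I would feed $\sigma'$ into \cref{totaltopoint}: since $\sigma'$ is memoryless, the translated strategy $\sigma$ in $\mdp$ uses only a reward counter, and it satisfies $\probm_{\mdp,s_0,\sigma}(\liminftpobj)\ge\probm_{R(\mdp),(s_0,0),\sigma'}(\liminfppobj)\ge\valueof{\mdp,\liminftpobj}{s_0}-\eps$, which is exactly the claimed $\eps$-optimality.

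I do not expect a genuine obstacle here: the only point requiring a little care is the direction of the value comparison, and the simple strategy-lifting argument above supplies the inequality we need. Everything else is routine bookkeeping around the already-established \cref{finpointpayoff} and \cref{totaltopoint}.
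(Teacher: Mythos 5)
Your proposal is correct and follows essentially the same route as the paper, whose proof is exactly ``by \cref{finpointpayoff} and \cref{totaltopoint}''; you merely spell out the details the paper leaves implicit (that $R(\mdp)$ is finitely branching and the value comparison via lifting strategies from $\mdp$ to $R(\mdp)$), and these details are right.
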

\begin{proof}
By \cref{finpointpayoff} and \cref{totaltopoint}.
\end{proof}

\begin{corollary}\label{finoptupper}
Given a finitely branching MDP $\mdp$ and initial state $s_{0}$, optimal strategies, where they exist, 
\begin{itemize}
\item for $\liminfppobj$ can be chosen MD.
\item for $\liminftpobj$ can be chosen with just a reward counter.
\end{itemize}
\end{corollary}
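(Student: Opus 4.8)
The plan is to derive both bullets from \cref{finpointpayoff}: the point payoff statement via the tail-objective bootstrapping of \cref{epsilontooptimal}, and the total payoff statement by applying the point payoff result to the reward-encoded MDP $R(\mdp)$ and pulling it back through \cref{totaltopoint}.

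For $\liminfppobj$: I would first invoke \cref{finpointpayoff} with each state $s\in S$ playing the role of the initial state, so that for every $s\in S$ and every $\eps>0$ there is an $\eps$-optimal MD strategy for $\liminfppobj$ from $s$. Since $\liminfppobj$ is a tail objective it is in particular tail in $\mdp$, so the hypotheses of \cref{epsilontooptimal} are satisfied; its second part then yields a single MD strategy that is optimal from every state that admits an optimal strategy for $\liminfppobj$. In particular, whenever $s_0$ has an optimal strategy for $\liminfppobj$, it has an MD one, which is the first bullet.

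For $\liminftpobj$ (which is not a tail objective, so \cref{epsilontooptimal} cannot be applied to it directly): I would pass to the reward-encoded MDP $R(\mdp)$ with initial state $(s_0,0)$ from \cref{def:encodereward}. The relevant facts are: (i) $R(\mdp)$ is again finitely branching, since each state $(s,r)$ has exactly one successor $(s',\,r+r(s\to s'))$ per successor $s'$ of $s$ in $\mdp$; (ii) the partial runs of $R(\mdp)$ from $(s_0,0)$ are in bijection with the partial runs of $\mdp$ from $s_0$ (the reward components are determined along the path), this bijection transports the strategy-induced measures, and — because the point rewards of $R(\mdp)$ equal the total rewards of $\mdp$ — it carries the event $\liminftpobj$ to $\liminfppobj$. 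Hence $\valueof{R(\mdp),\liminfppobj}{(s_0,0)}=\valueof{\mdp,\liminftpobj}{s_0}$, and $\mdp$ has an optimal strategy for $\liminftpobj$ from $s_0$ exactly when $R(\mdp)$ has an optimal strategy for $\liminfppobj$ from $(s_0,0)$. When such strategies exist, I would apply the already-established first bullet to the finitely branching MDP $R(\mdp)$ to obtain an MD strategy $\sigma'$ optimal for $\liminfppobj$ from $(s_0,0)$, i.e.\ attaining $c=\valueof{R(\mdp),\liminfppobj}{(s_0,0)}=\valueof{\mdp,\liminftpobj}{s_0}$, and then apply \cref{totaltopoint} to convert $\sigma'$ into a strategy $\sigma$ on $\mdp$ from $s_0$ attaining the same value $c$ for $\liminftpobj$ using the memory of $\sigma'$ (none) plus a reward counter — that is, just a reward counter — so that $\sigma$ is optimal.

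The only step that needs genuine care is the $\mdp$/$R(\mdp)$ correspondence in (ii): one has to verify that the path bijection pushes the cylinder probabilities (hence the Borel measures $\probm$) forward and that it maps $\liminftpobj$ onto $\liminfppobj$. I expect this to be routine straight from the definitions of $R(\mdp)$ and of $\probm$, so there is no real obstacle here; everything else is a direct invocation of \cref{finpointpayoff}, \cref{epsilontooptimal}, and \cref{totaltopoint}.
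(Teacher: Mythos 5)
Your proposal is correct and follows essentially the same route as the paper: the first bullet via \cref{finpointpayoff} plus \cref{epsilontooptimal} (using that $\liminfppobj$ is tail), and the second bullet by working in the finitely branching reward-encoded MDP $R(\mdp)$, where total payoff becomes point payoff, and translating the resulting MD optimal strategy back with \cref{totaltopoint}. The only difference is presentational: you make explicit the run/strategy bijection and value correspondence between $\mdp$ and $R(\mdp)$, which the paper treats as immediate from \cref{def:encodereward}.
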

\begin{proof}
Since $\liminfppobj$ is tail, the first claim follows
from \cref{finpointpayoff} and \cref{epsilontooptimal}.

Towards the second claim, we place ourselves in $R(\mdp)$ where
$\liminftpobj$ is tail. Moreover, in $R(\mdp)$ the objectives
$\liminftpobj$ and $\liminfppobj$ coincide.
Thus we can apply \cref{finpointpayoff} to obtain $\eps$-optimal MD
strategies for $\liminftpobj$ from every state of $R(\mdp)$.
From \cref{epsilontooptimal} we obtain a single MD
strategy that is optimal from every state of $R(\mdp)$ that has an optimal
strategy. By \cref{totaltopoint} we can translate this MD strategy on $R(\mdp)$ back to
a strategy on $\mdp$ with just a reward counter.
\end{proof}

\subsection{Infinitely Branching Case}\label{subsec:upper-ib}

For infinitely branching MDPs, $\eps$-optimal strategies
for $\liminfppobj$ require more memory than in the finitely branching case.
However, the proofs are similar to those in \cref{subsec:upper-fb}
and can be found in \cref{app:upper-ib}.


\begin{restatable}{theorem}{thminfpointpayoff}\label{infpointpayoff}
Consider an MDP $\mdp$ with initial state $s_{0}$ and a $\liminfppobj$
objective. For every $\eps >0$ there exist
\begin{itemize}
\item $\eps$-optimal MD strategies in $S(\mdp)$.
\item $\eps$-optimal deterministic Markov strategies in $\mdp$.
\end{itemize}
\end{restatable}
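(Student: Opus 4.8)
The plan is to prove the first bullet --- existence of $\eps$-optimal MD strategies in the acyclic MDP $S(\mdp)$ --- by mirroring the proof of \cref{finpointpayoff}, exploiting that in $S(\mdp)$ the step counter is \emph{implicit in the state}. This makes the argument cleaner than in the finitely branching case: instead of approximating a step counter by finite ``bubbles'' and invoking the $\transience$ theorem, one reduces directly to a \emph{safety} objective on an acyclic MDP and applies \cref{acyclicsafety}. The second bullet then follows by translating the resulting MD strategy on $S(\mdp)$ into a deterministic Markov strategy on $\mdp$ via \cref{steptomarkov}, using that $\valueof{S(\mdp),\liminfppobj}{(s_{0},0)} = \valueof{\mdp,\liminfppobj}{s_{0}}$ (strategies of $\mdp$ and of $S(\mdp)$ are in an obvious probability-preserving bijection, since the full history always determines the step count).

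Fix $\eps>0$ and pick a general strategy $\sigma$ in $S(\mdp)$ with $\probm_{S(\mdp),(s_{0},0),\sigma}(\liminfppobj) \ge \valueof{S(\mdp),\liminfppobj}{(s_{0},0)} - \eps$. As in \cref{finpointpayoff}, for $i \in \N$ let $\text{Safety}_{i}$ be the objective of never again seeing a transition reward $<-2^{-i}$, so that $\liminfppobj = \bigcap_{i\in\N}\eventually(\text{Safety}_{i})$; this identity already encodes the convergence effect that, e.g., the reward sequence $-1/2,-1/3,-1/4,\dots$ is winning although it never becomes non-negative. Writing $\text{Safety}_{i}^{k}$ for $\eventually^{\le k}(\text{Safety}_{i})$ and using continuity of measures from above, choose for each $i$ a step threshold $n_{i}$ with $\probm_{S(\mdp),(s_{0},0),\sigma}(\eventually(\text{Safety}_{i}) \cap \overline{\text{Safety}_{i}^{n_{i}}}) \le \eps\cdot 2^{-i}$; a union bound over $i$ then gives $\probm_{S(\mdp),(s_{0},0),\sigma}\big(\bigcap_{i\in\N}\text{Safety}_{i}^{n_{i}}\big) \ge \valueof{S(\mdp),\liminfppobj}{(s_{0},0)} - 2\eps$.

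Now build an acyclic MDP $\mdp''$ from $S(\mdp)$ by adding a losing gadget $\perp$ (an infinite chain $\perp_{0}\to\perp_{1}\to\cdots$ with reward $-1$ on every transition, which keeps $\mdp''$ acyclic) and, for each $i$, redirecting to $\perp_{0}$ every transition that leaves a state at step $>n_{i}$ with reward $<-2^{-i}$; call these the \emph{bad} transitions. The key point of working in $S(\mdp)$ is that a run of $\mdp''$ avoiding $\perp$ has, for every $i$, all rewards after step $n_{i}$ at least $-2^{-i}$, hence satisfies $\liminf \ge 0$; thus every $\perp$-avoiding run of $\mdp''$ is winning for $\liminfppobj$. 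Moreover $\bigcap_{i}\text{Safety}_{i}^{n_{i}}$ implies that no bad transition is ever taken, so, reading $\sigma$ as a strategy on $\mdp''$, we obtain $\valueof{\mdp'',\always(\neg\perp)}{(s_{0},0)} \ge \probm_{\mdp'',(s_{0},0),\sigma}(\always(\neg\perp)) \ge \valueof{S(\mdp),\liminfppobj}{(s_{0},0)} - 2\eps$. Since $\mdp''$ is acyclic, \cref{acyclicsafety} yields a uniformly $\eps$-optimal MD strategy $\widehat{\sigma}$ for the safety objective $\always(\neg\perp)$ in $\mdp''$. Let $\sigma^{*}$ be the MD strategy on $S(\mdp)$ that agrees with $\widehat{\sigma}$ wherever $\widehat{\sigma}$ does not move to $\perp$, and plays some non-bad successor (if one exists, else anything) otherwise. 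Coupling the two processes, the $\perp$-avoiding runs of $\widehat{\sigma}$ are faithfully reproduced by $\sigma^{*}$ and are all winning, whence $\probm_{S(\mdp),(s_{0},0),\sigma^{*}}(\liminfppobj) \ge \probm_{\mdp'',(s_{0},0),\widehat{\sigma}}(\always(\neg\perp)) \ge \valueof{S(\mdp),\liminfppobj}{(s_{0},0)} - 3\eps$. Rescaling $\eps$ proves the first bullet, and \cref{steptomarkov} then gives the second.

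The main obstacle is the second step: the thresholds $n_{i}$ must be chosen \emph{for the single strategy $\sigma$} so that cutting away all ``too negative, too late'' transitions into $\perp$ costs only $O(\eps)$ of value, while simultaneously making ``avoid $\perp$'' imply the full $\liminf \ge 0$ condition. This is exactly where the non-integer / convergence behaviour has to be handled --- a run may be winning only because its rewards tend to $0$ from below, so $\text{Safety}_{i}$ holds later and later as $i$ grows --- and it is why the decomposition $\liminfppobj = \bigcap_{i}\eventually(\text{Safety}_{i})$ together with the implicit step counter of $S(\mdp)$ is essential. A minor additional check is that $\widehat{\sigma}$ really can be read back as an MD strategy on $S(\mdp)$ without loss of value, which is immediate from the coupling.
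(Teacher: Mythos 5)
Your proof is correct and follows essentially the same route as the paper's: the decomposition $\liminfppobj = \bigcap_{i}\eventually(\text{Safety}_{i})$, the choice of step thresholds $n_{i}$ by continuity of measures with a union bound costing $O(\eps)$, reduction to a safety objective on the acyclic MDP $S(\mdp)$ via \cref{acyclicsafety}, and translation to a deterministic Markov strategy on $\mdp$ via \cref{steptomarkov}. The only difference is cosmetic: the paper applies \cref{acyclicsafety} directly to $\bigcap_{i}\text{Safety}_{i}^{n_{i}}$, which is already a safety objective on $S(\mdp)$ because the step counter is implicit in the state, whereas you take a detour (inherited from the finitely branching case) through a modified MDP $\mdp''$ with bad transitions redirected to a losing $\perp$ chain and the objective $\always(\neg\perp)$ --- this extra construction is sound but unnecessary here.
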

\newcommand{\thminfpointpayoffproof}{
\begin{proof}
Let $\eps >0$.
We work in $S(\mdp)$ by encoding the step counter into the states of $\mdp$.
Thus $S(\mdp)$ is an acyclic MDP with implicit step counter and corresponding
initial state $s_0' = (s_0,0)$.

We consider a general (not necessarily MD) $\eps$-optimal strategy $\sigma$
for $\liminfppobj$
from $s_0'$ on $S(\mdp)$, i.e.,
\begin{equation}\label{eq:sigma-eps-opt}
\probm_{S(\mdp),s_0',\sigma}(\liminfppobj) \ge \valueof{S(\mdp),\liminfppobj}{s_0'} - \eps.
\end{equation}
Define the safety objective $\text{Safety}_{i}$ which is the objective of
never seeing any point reward $< -2^{-i}$.
This then allows us to characterize $\liminfppobj$ in terms of safety objectives.
\begin{equation} \label{eq:liminfppissafety}
\liminfppobj = \bigcap_{i \in \mathbb{N}} \eventually(\text{Safety}_{i})
\end{equation}

Now we define the safety objective $\text{Safety}_{i}^{k} \eqdef \eventually^{\leq k}( \text{Safety}_{i} )$ to attain $\text{Safety}_{i}$ within at most $k$ steps. This allows us to write 
\begin{equation} \label{eq:safetyisunion}
\eventually(\text{Safety}_{i}) = \bigcup_{k \in \mathbb{N}} \text{Safety}_{i}^{k}.
\end{equation}
By continuity of measures from above we get 
\begin{align*}
0 & = \mathcal{P}_{S(\mdp),s_0',\sigma} \left( \eventually(\text{Safety}_{i}) \cap \bigcap_{k \in \N} \overline{\text{Safety}^{k}_{i}}\right)\\
  & = \lim_{k \to \infty} \mathcal{P}_{S(\mdp),s_0',\sigma} \left( \eventually(\text{Safety}_{i}) \cap \overline{\text{Safety}^{k}_{i}}\right).
\end{align*}
Hence for every $i \in \mathbb{N}$ and
$\eps_{i} \defeq \eps \cdot 2^{-i}$
there exists $n_{i}$ such that
\begin{equation}\label{eq:ni}
\mathcal{P}_{S(\mdp),s_0',\sigma} \left( \eventually(\text{Safety}_{i}) \cap
\overline{\text{Safety}^{n_{i}}_{i}} \right) \leq \eps_{i}.
\end{equation}

Now we can show the following claim. 
\begin{restatable}{claim}{claimlosetwoeps}\label{claim:lose2eps}
\[
\probm_{S(\mdp),s_0',\sigma} \left( \bigcap_{i \in \N} \text{Safety}^{n_{i}}_{i}  \right) 
\geq
\valueof{S(\mdp),\liminfppobj}{s_0'} - 2 \eps.
\]
\end{restatable}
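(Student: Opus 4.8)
The plan is to control the portion of the $\liminfppobj$-mass (under $\sigma$) that escapes $\bigcap_{i\in\N}\text{Safety}_i^{n_i}$ by a countable union bound over $i$ using the per-level estimates~\eqref{eq:ni}, and then to fold in the $\eps$-optimality of $\sigma$ from~\eqref{eq:sigma-eps-opt}.

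First I would record the inclusion $\bigcap_{i\in\N}\text{Safety}_i^{n_i}\subseteq\liminfppobj$: indeed $\text{Safety}_i^{n_i}=\eventually^{\le n_i}(\text{Safety}_i)\subseteq\eventually(\text{Safety}_i)$ for every $i$, so intersecting over $i$ and invoking~\eqref{eq:liminfppissafety} gives it. Consequently
\[
\probm_{S(\mdp),s_0',\sigma}\!\left(\bigcap_{i\in\N}\text{Safety}_i^{n_i}\right)
=\probm_{S(\mdp),s_0',\sigma}(\liminfppobj)
-\probm_{S(\mdp),s_0',\sigma}\!\left(\liminfppobj\cap\overline{\bigcap_{i\in\N}\text{Safety}_i^{n_i}}\right),
\]
so it suffices to bound the last term by $\eps$.

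To do so I would use $\overline{\bigcap_{i\in\N}\text{Safety}_i^{n_i}}=\bigcup_{i\in\N}\overline{\text{Safety}_i^{n_i}}$ together with the fact that $\liminfppobj\subseteq\eventually(\text{Safety}_i)$ for every $i$ (again by~\eqref{eq:liminfppissafety}), which yields the inclusion
\[
\liminfppobj\cap\overline{\bigcap_{i\in\N}\text{Safety}_i^{n_i}}
\ \subseteq\ \bigcup_{i\in\N}\Bigl(\eventually(\text{Safety}_i)\cap\overline{\text{Safety}_i^{n_i}}\Bigr).
\]
A countable union bound together with~\eqref{eq:ni} bounds the probability of the right-hand side by $\sum_{i\in\N}\eps_i$, a geometric series that is at most $\eps$. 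Plugging this back in and applying~\eqref{eq:sigma-eps-opt} gives
$\probm_{S(\mdp),s_0',\sigma}(\bigcap_{i\in\N}\text{Safety}_i^{n_i})\ge\probm_{S(\mdp),s_0',\sigma}(\liminfppobj)-\eps\ge\valueof{S(\mdp),\liminfppobj}{s_0'}-2\eps$, which is the claim.

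I do not anticipate a real obstacle here: the only points to be careful about are that the two displayed inclusions genuinely follow from the definition of $\eventually^{\le k}$ and from the reward-level characterization~\eqref{eq:liminfppissafety} (which itself encodes the non-integer convergence subtlety: a run has $\liminf$ point payoff $\ge 0$ iff for each $i$ it eventually never drops below $-2^{-i}$), and that the thresholds $n_i$ supplied by continuity of measures from above are all taken with respect to the \emph{same} fixed strategy $\sigma$. Everything else is a routine union-bound estimate; the substantive work of \cref{infpointpayoff} lies in the later steps (approximating a step counter by bubbles, re-routing too-negative transitions into a losing sink, and passing from $\transience$ to an MD strategy), not in this claim.
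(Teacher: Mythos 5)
Your proposal is correct and follows essentially the same route as the paper's proof: the paper likewise reduces the claim to the characterization~\eqref{eq:liminfppissafety}, a countable union bound over $i$ using the per-level estimates~\eqref{eq:ni}, and the $\eps$-optimality~\eqref{eq:sigma-eps-opt}, merely organizing the set algebra as a chain of inequalities starting from $\probm_{S(\mdp),s_0',\sigma}\bigl(\bigcap_{k}\eventually(\text{Safety}_k)\cap\bigcap_i\text{Safety}^{n_i}_i\bigr)$ rather than via your cleaner inclusion $\bigcap_i\text{Safety}^{n_i}_i\subseteq\liminfppobj$ and subtraction of the escaping mass. The only cosmetic point is the same one the paper glosses over, namely that $\sum_{i\in\N}\eps_i$ should be at most $\eps$ (which holds with the indexing/normalization intended there), so no gap.
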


\newcommand{\claimlosetwoepsproof}{
\begin{proof}
\begin{align*}
& \mathcal{P}_{S(\mdp),s_0',\sigma} \left( \bigcap_{i \in \N} \text{Safety}^{n_{i}}_{i} \right) 
\\
& \geq
\mathcal{P}_{S(\mdp),s_0',\sigma} \left(
\bigcap_{k \in \N}  \eventually(\text{Safety}_{k}) \cap \bigcap_{i \in \N} \text{Safety}^{n_{i}}_{i} 
\right) \\
& =  \mathcal{P}_{S(\mdp),s_0',\sigma} \left( \left( \bigcap_{k \in \N}  \eventually(\text{Safety}_{k}) \cap \bigcap_{i \in \N} \text{Safety}^{n_{i}}_{i} \right)
\cup \left(\overline{\bigcap_{k \in \N}  \eventually(\text{Safety}_{k})} \cap \bigcap_{k \in \N}  \eventually(\text{Safety}_{k})\right)
\right) \\
& = \mathcal{P}_{S(\mdp),s_0',\sigma} \left(  \bigcap_{k \in \N}  \eventually(\text{Safety}_{k}) \cap 
\left(\bigcap_{i \in \N} \text{Safety}^{n_{i}}_{i} \cup \overline{\bigcap_{k \in \N}  \eventually(\text{Safety}_{k})} \right)
\right) \\
& = 1 - \mathcal{P}_{S(\mdp),s_0',\sigma} \left(  \overline{ \bigcap_{k \in \N}  \eventually(\text{Safety}_{k})} \cup 
\left(\overline{\bigcap_{i \in \N} \text{Safety}^{n_{i}}_{i}} \cap \bigcap_{k \in \N}  \eventually(\text{Safety}_{k})\right)
\right) \\
& \geq 1 - \mathcal{P}_{S(\mdp),s_0',\sigma} \left(  \overline{ \bigcap_{k \in \N}  \eventually(\text{Safety}_{k})} \right)
 -
\mathcal{P}_{S(\mdp),s_0',\sigma} \left(\overline{\bigcap_{i \in \N} \text{Safety}^{n_{i}}_{i}} \cap \bigcap_{k \in \N}  \eventually(\text{Safety}_{k})\right) \\
& = \mathcal{P}_{S(\mdp),s_0',\sigma} \left(\liminfppobj\right)
 -
\mathcal{P}_{S(\mdp),s_0',\sigma} \left(\overline{\bigcap_{i \in \N} \text{Safety}^{n_{i}}_{i}} \cap \bigcap_{k \in \N}  \eventually(\text{Safety}_{k})\right)
& \ \text{by \eqref{eq:liminfppissafety}}\\
& \geq \valueof{S(\mdp),\liminfppobj}{s_0'} - \eps -
\mathcal{P}_{S(\mdp),s_0',\sigma} \left(\bigcup_{i \in \N} \overline{\text{Safety}^{n_{i}}_{i}} \cap \bigcap_{k \in \N}  \eventually(\text{Safety}_{k})\right)
& \ \text{by \eqref{eq:sigma-eps-opt}}\\
& \geq \valueof{S(\mdp),\liminfppobj}{s_0'} - \eps
 - \sum_{i \in \N} \mathcal{P}_{S(\mdp),s_0',\sigma} \left( \overline{\text{Safety}^{n_{i}}_{i}} \cap \bigcap_{k \in \N} \eventually(\text{Safety}_{k})
\right) \\
& \geq \valueof{S(\mdp),\liminfppobj}{s_0'} - \eps - \sum_{i \in \N} \eps_{i}
& \ \text{by \eqref{eq:ni}}\\
& = \valueof{S(\mdp),\liminfppobj}{s_0'} - 2 \eps
\end{align*}
\end{proof}
}
\claimlosetwoepsproof

Let $\formula \eqdef \bigcap_{i \in \N} \text{Safety}^{n_{i}}_{i} \subseteq \liminfppobj$.
It follows from \cref{claim:lose2eps} that
\begin{equation}\label{eq:lose2eps}
 \valueof{S(\mdp),\formula}{s_0'} \ge
 \valueof{S(\mdp),\liminfppobj}{s_0'} - 2\eps.
\end{equation}
The objective $\formula$ 
is a safety objective on $S(\mdp)$. Therefore, since $S(\mdp)$ is acyclic,
we can apply \cref{acyclicsafety} to obtain a uniformly $\eps$-optimal MD
strategy $\sigma'$ for $\formula$. Thus
\begin{align*}
& \probm_{S(\mdp),s_0',\sigma'}(\liminfppobj) \\
& \ge \probm_{S(\mdp),s_0',\sigma'}(\formula)   & \ \text{set inclusion} \\
& \ge \valueof{S(\mdp),\formula}{s_0'} - \eps  & \ \text{$\sigma'$ is $\eps$-opt.}\\
& \ge \valueof{S(\mdp),\liminfppobj}{s_0'} - 3\eps. & \ \text{by \eqref{eq:lose2eps}} 
\end{align*}
Thus $\sigma'$ is a $3\eps$-optimal MD strategy for $\liminfppobj$ in $S(\mdp)$.

By \cref{steptomarkov} this then yields a $3\eps$-optimal Markov strategy for
$\liminfppobj$ from $s_{0}$ in $\mdp$, since runs in $\mdp$ and $S(\mdp)$
coincide wrt.\ $\liminfppobj$.
\end{proof}
}

\begin{restatable}{corollary}{cormpepsupper}\label{mpepsupper}
Given an MDP $\mdp$ and initial state $s_{0}$, there exist $\eps$-optimal strategies $\sigma$ for $\liminfmpobj$ which use just a step counter and a reward counter.
\end{restatable}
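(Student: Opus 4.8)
The plan is to reduce the mean payoff objective on $\mdp$ to a point payoff objective on the auxiliary MDP $A(\mdp)$ of \cref{def:encodeam} (see also \cref{remdefam}), to solve the latter with an MD strategy via \cref{infpointpayoff}, and then to transfer the strategy back to $\mdp$ using \cref{meantopoint}.

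First I would record the relevant features of $A(\mdp)$. By construction it is acyclic and each of its states has the form $(s,n,r)$ where $n$ is exactly the number of steps taken from the initial state $(s_0,0,0)$; in particular the step counter is already implicit in the state. Moreover the reward of a transition $(s,n,r)\to(s',n+1,r')$ is $r'/(n+1)$, so along any run of $A(\mdp)$ the point rewards are precisely $\frac1n\sum_{j=0}^{n-1}\reward(\rho_e(j))$ for the corresponding run $\rho$ of $\mdp$. Since partial runs of $\mdp$ from $s_0$ are in bijection with partial runs of $A(\mdp)$ from $(s_0,0,0)$, and this bijection respects the controlled/random partition as well as the transition probabilities, each strategy in $\mdp$ from $s_0$ is, via the bijection, a strategy in $A(\mdp)$ from $(s_0,0,0)$ realizing the same probability of $\liminfmpobj$, resp.\ $\liminfppobj$, and conversely. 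Hence
\[
\valueof{A(\mdp),\liminfppobj}{(s_0,0,0)} = \valueof{\mdp,\liminfmpobj}{s_0},
\]
of which only the inequality ``$\ge$'' is actually used below.

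Next I would apply \cref{infpointpayoff} to the MDP $A(\mdp)$ to get, for the given $\eps>0$, an $\eps$-optimal deterministic Markov strategy for $\liminfppobj$ from $(s_0,0,0)$; because the step counter is already part of the state of $A(\mdp)$, such a strategy is in fact MD --- call it $\sigma'$. Then \cref{meantopoint} turns $\sigma'$ into a strategy $\sigma$ on $\mdp$ from $s_0$ that uses only a step counter and a reward counter and satisfies
\[
\probm_{\mdp,s_0,\sigma}(\liminfmpobj) = \probm_{A(\mdp),(s_0,0,0),\sigma'}(\liminfppobj) \ge \valueof{A(\mdp),\liminfppobj}{(s_0,0,0)} - \eps \ge \valueof{\mdp,\liminfmpobj}{s_0} - \eps,
\]
so $\sigma$ is $\eps$-optimal, as claimed. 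There is no genuine obstacle here: the whole argument is bookkeeping built on top of \cref{infpointpayoff} and \cref{meantopoint}, and the only step needing a little care is verifying that the point rewards of $A(\mdp)$ really compute the partial mean payoffs of $\mdp$ (so that the objectives, and hence the two optimal values, match up under the run bijection).
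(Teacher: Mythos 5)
Your proposal is correct and follows essentially the same route as the paper: pass to $A(\mdp)$ (where step counter and reward counter are implicit and the point rewards are the partial mean payoffs of $\mdp$), invoke \cref{infpointpayoff} to get an $\eps$-optimal MD strategy there, and translate it back via \cref{meantopoint}. Your extra bookkeeping — the run bijection and the value comparison between $\valueof{A(\mdp),\liminfppobj}{(s_0,0,0)}$ and $\valueof{\mdp,\liminfmpobj}{s_0}$ — is exactly what the paper's shorter proof leaves implicit.
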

\newcommand{\cormpepsupperproof}{
\begin{proof}
We consider the encoded system $A(\mdp)$ in which both step counter and reward counter are implicit in the state. Recall that the partial mean payoffs in $\mdp$ correspond exactly to point rewards in $A(\mdp)$. Since $A(\mdp)$ has an encoded step counter, \cref{infpointpayoff} gives us $\eps$-optimal MD strategies for $\liminfppobj$ in $A(\mdp)$. \cref{meantopoint} allows us to translate these strategies back to $\mdp$ with a memory overhead of just a reward counter and a step counter as required.
\end{proof}
}

\begin{restatable}{corollary}{corinftpepsupper}\label{inftpepsupper}
Given an MDP $\mdp$ with initial state $s_{0}$,
\begin{itemize}
\item there exist $\eps$-optimal MD strategies for $\liminftpobj$ in $S(R(\mdp))$,
\item there exist $\eps$-optimal strategies for $\liminftpobj$ which use a step counter and a reward counter.
\end{itemize}
\end{restatable}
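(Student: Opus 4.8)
The plan is to reduce the total payoff objective on $\mdp$ to a point payoff objective on the reward-encoded MDP $R(\mdp)$ (cf.~\cref{def:encodereward}), and then invoke the infinitely branching point payoff bound from \cref{infpointpayoff}. Recall that in $R(\mdp)$ the reward counter is implicit in the state and each transition $(s,r)\to(s',r')$ is labeled with the accumulated total reward $r'$ of its target. Hence, under the natural bijection between runs of $\mdp$ and runs of $R(\mdp)$, the point rewards seen in $R(\mdp)$ are exactly the total rewards seen in $\mdp$, so the objective $\liminfppobj$ on $R(\mdp)$ coincides (as a set of runs) with $\liminftpobj$ on $\mdp$. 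In particular it is tail, which is consistent with the reductions used below.

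First I would apply \cref{infpointpayoff} to the MDP $R(\mdp)$ with initial state $(s_0,0)$ and the $\liminfppobj$ objective. For every $\eps>0$ this yields (i) an $\eps$-optimal MD strategy for $\liminfppobj$ in $S(R(\mdp))$, and (ii) an $\eps$-optimal deterministic Markov strategy for $\liminfppobj$ in $R(\mdp)$. Since runs of $R(\mdp)$ and of $S(R(\mdp))$ coincide wrt.~$\liminfppobj$, and since $\liminfppobj$ on $R(\mdp)$ is the objective $\liminftpobj$ on $\mdp$, the strategy from (i) is an $\eps$-optimal MD strategy for $\liminftpobj$ in $S(R(\mdp))$; this gives the first item.

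For the second item I would translate the Markov strategy from (ii) back to $\mdp$ using \cref{totaltopoint}: since that lemma applies to MD (resp.~Markov) strategies in $R(\mdp)$, and a Markov strategy is exactly one that uses a step counter, the resulting strategy $\sigma$ on $\mdp$ attains the same value for $\liminftpobj$ from $s_0$ and uses the same memory as the Markov strategy plus a reward counter, i.e.~a step counter together with a reward counter. As the Markov strategy was $\eps$-optimal for $\liminfppobj$ in $R(\mdp)$, $\sigma$ is $\eps$-optimal for $\liminftpobj$ in $\mdp$.

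I do not expect a real obstacle here: the corollary is essentially the composition of the already-established reductions (the $R(\cdot)$ and $S(\cdot)$ encodings, \cref{totaltopoint}, \cref{steptomarkov}) with \cref{infpointpayoff}. The only points needing a moment's care are that $R(\mdp)$ may be infinitely branching even when $\mdp$ is (so a reward counter, rather than finite memory, is genuinely required), which is harmless since \cref{infpointpayoff} holds for arbitrary countable MDPs, and that the labeling convention of $R(\mdp)$ really makes the point payoff in $R(\mdp)$ equal to the total payoff in $\mdp$ along corresponding runs, which is immediate from \cref{def:encodereward}.
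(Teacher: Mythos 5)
Your proof is correct and follows essentially the same route as the paper: apply \cref{infpointpayoff} to $R(\mdp)$ (whose point payoffs are exactly the total payoffs of $\mdp$) to obtain $\eps$-optimal MD strategies in $S(R(\mdp))$, then return to $\mdp$ via \cref{steptomarkov} and \cref{totaltopoint}; your use of the second bullet of \cref{infpointpayoff} merely packages the paper's explicit \cref{steptomarkov} step. (Your parenthetical remark that $R(\mdp)$ may be infinitely branching even when $\mdp$ is finitely branching is inaccurate---the $R(\cdot)$ encoding preserves the branching degree---but this plays no role in the argument, since \cref{infpointpayoff} holds for all countable MDPs.)
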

\newcommand{\corinftpepsupperproof}{
\begin{proof}
We consider the encoded system $R(\mdp)$ in which the reward counter is implicit in the state. Recall that total rewards in $\mdp$ correspond exactly to point rewards in $R(\mdp)$. We then apply \cref{infpointpayoff} to $R(\mdp)$ to obtain  $\eps$-optimal MD strategies for $\liminfppobj$ in $S(R(\mdp))$. 
\cref{steptomarkov} allows us to translate these MD strategies back to $R(\mdp)$ with a memory overhead of just a step counter. Then we apply
\cref{totaltopoint} to translate these Markov strategies back to $\mdp$ with a memory overhead of just a reward counter. Hence $\eps$-optimal strategies for $\liminftpobj$ in $\mdp$ just use a step counter and a reward counter as required.
\end{proof}
}


\begin{restatable}{corollary}{corinfoptupper}\label{infoptupper}
Given an MDP $\mdp$ and initial state $s_{0}$, optimal strategies, where they exist, 
\begin{itemize}
\item for $\liminfppobj$ can be chosen with just a step counter.
\item for $\liminfmpobj$ and $\liminftpobj$ can be chosen with just a reward counter and a step counter.
\end{itemize}
\end{restatable}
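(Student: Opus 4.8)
The plan is to derive this corollary by combining \cref{infpointpayoff}, the tail-objective transfer \cref{epsilontooptimal}, and the state-encoding translations \cref{steptomarkov}, \cref{totaltopoint}, \cref{meantopoint}, exploiting in each case that $\liminfppobj$ is a tail objective. For $\liminfppobj$ I would work in the acyclic MDP $S(\mdp)$ (\cref{steptomarkov}), in which the step counter is implicit in the state and whose runs coincide with those of $\mdp$ with respect to $\liminfppobj$. The encoding $S(\cdot)$ gives a bijective, measure-preserving correspondence between strategies of $\mdp$ from $s_0$ and strategies of $S(\mdp)$ from $(s_0,0)$ that preserves the attainment of $\liminfppobj$, so $(s_0,0)$ has an optimal strategy for $\liminfppobj$ in $S(\mdp)$ precisely when $s_0$ has one in $\mdp$. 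By \cref{infpointpayoff} there are $\eps$-optimal MD strategies for $\liminfppobj$ from every state of $S(\mdp)$, and $\liminfppobj$ is tail, so \cref{epsilontooptimal}(2) provides a single MD strategy $\hat\sigma'$ on $S(\mdp)$ that is optimal from every state having an optimal strategy, in particular from $(s_0,0)$. Translating $\hat\sigma'$ back via \cref{steptomarkov} yields a deterministic Markov strategy on $\mdp$ — using only a step counter — with the same attainment for $\liminfppobj$, hence optimal from $s_0$.

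For $\liminftpobj$ I would pass to $R(\mdp)$ (cf.~\cref{totaltopoint}), where the reward counter is implicit in the state and, via a measure-preserving correspondence of runs and strategies, $\liminfppobj$ on $R(\mdp)$ captures exactly $\liminftpobj$ on $\mdp$; in particular $(s_0,0)$ has an optimal strategy for $\liminfppobj$ in $R(\mdp)$ whenever $s_0$ has one for $\liminftpobj$ in $\mdp$. Applying the point payoff case above to the MDP $R(\mdp)$ gives a Markov strategy on $R(\mdp)$ optimal for $\liminfppobj$ from $(s_0,0)$, and \cref{totaltopoint} translates it to a strategy on $\mdp$ that additionally keeps a reward counter; this step counter plus reward counter strategy is optimal for $\liminftpobj$ from $s_0$.

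For $\liminfmpobj$ I would use $A(\mdp)$ (cf.~\cref{meantopoint}), which is acyclic with both the step counter and the accumulated reward implicit in the state and whose point rewards equal the partial mean payoffs of $\mdp$, so that $\liminfppobj$ on $A(\mdp)$ corresponds run-for-run to $\liminfmpobj$ on $\mdp$ and $(s_0,0,0)$ has an optimal strategy for $\liminfppobj$ in $A(\mdp)$ whenever $s_0$ has one for $\liminfmpobj$ in $\mdp$. Since $A(\mdp)$ is acyclic, the argument in the proof of \cref{infpointpayoff} (decomposing $\liminfppobj$ as a countable intersection of safety objectives and applying \cref{acyclicsafety}) yields $\eps$-optimal MD strategies for $\liminfppobj$ from every state of $A(\mdp)$; as $\liminfppobj$ is tail, \cref{epsilontooptimal}(2) gives a single MD strategy on $A(\mdp)$ optimal from every state with an optimal strategy, hence from $(s_0,0,0)$. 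Finally \cref{meantopoint} translates this MD strategy back to $\mdp$ using just a reward counter and a step counter, preserving optimality for $\liminfmpobj$.

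The only real obstacle is the bookkeeping around the encodings: one must verify that each of $S(\cdot)$, $R(\cdot)$, $A(\cdot)$ induces a bijective, measure-preserving correspondence between strategies (and runs) of $\mdp$ and of the encoded MDP under which the relevant objective of $\mdp$ maps to $\liminfppobj$, so that both the existence of an optimal strategy and the exact attainment value transfer in both directions, and that the memory overheads of the successive translations in \cref{steptomarkov}, \cref{totaltopoint} and \cref{meantopoint} add up exactly to the counters claimed in the statement. Everything else is a direct appeal to \cref{infpointpayoff} (or its proof) and the tail-objective lemma \cref{epsilontooptimal}.
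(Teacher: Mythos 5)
Your proposal is correct and follows essentially the same route as the paper: work in the encoded MDPs $S(\mdp)$, $R(\mdp)$ (equivalently $S(R(\mdp))$) and $A(\mdp)$, obtain $\eps$-optimal MD strategies via \cref{infpointpayoff} (or its proof), upgrade to a single optimal MD strategy with the tail-objective result \cref{epsilontooptimal}, and translate back using \cref{steptomarkov}, \cref{totaltopoint} and \cref{meantopoint}. The only differences are presentational (e.g.\ invoking the point-payoff case on $R(\mdp)$ instead of citing \cref{inftpepsupper}, and re-running the proof of \cref{infpointpayoff} inside the acyclic $A(\mdp)$), and your explicit flagging of the value/optimality transfer under the encodings matches what the paper also relies on implicitly.
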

\newcommand{\corinfoptupperproof}{
\begin{proof}
To obtain the result for $\liminfppobj$, we work in $S(\mdp)$ and we
apply \cref{infpointpayoff} to obtain $\eps$-optimal MD strategies from every
state of $S(\mdp)$.
Since $\liminfppobj$ is a tail objective, \cref{epsilontooptimal} yields an MD
strategy that is optimal from every state of $S(\mdp)$ that has an optimal
strategy. By \cref{steptomarkov} we can translate
this MD strategy on $S(\mdp)$ back to a Markov strategy in $\mdp$,
which is optimal for $\liminfppobj$ from $s_{0}$ (provided that $s_0$ admits
any optimal strategy at all).

Consider the case for $\liminfmpobj$. First we place ourselves in $A(\mdp)$
and apply \cref{infpointpayoff} to obtain  $\eps$-optimal MD strategies from
every state of $A(\mdp)$. From \cref{epsilontooptimal} we obtain a single MD
strategy that is optimal from every state of $A(\mdp)$ that has an optimal
strategy.
By \cref{meantopoint} we can translate this MD strategy on $A(\mdp)$ back to
a strategy on $\mdp$ with a step counter and a reward counter.
Provided that $s_{0}$ admits any optimal strategy at all, we obtain
an optimal strategy for $\liminfmpobj$ from $s_{0}$ that uses only a step counter and a reward counter.

The case for $\liminftpobj$ is similar. We place ourselves in
$S(R(\mdp))$ and apply \cref{inftpepsupper} to obtain $\eps$-optimal MD
strategies for $\liminftpobj$ from every state of $S(R(\mdp))$.
While $\liminftpobj$ is not tail in $\mdp$, it is tail in $S(R(\mdp))$,
and thus we can apply \cref{epsilontooptimal} to obtain a single MD
strategy that is optimal from every state of $S(R(\mdp))$ that has an optimal
strategy. The result then follows from \cref{totaltopoint} and \cref{steptomarkov}.
\end{proof}
}

\bigskip
\section{Conclusion and Outlook}\label{sec:conclusion}
We have established matching lower and upper bounds on the strategy complexity
of $\liminf$ threshold objectives for point, total and mean payoff on countably infinite
MDPs; cf.~\cref{table:allresults}.

The upper bounds hold not only for integer transition rewards, but also
for rationals or reals, provided that the reward counter (in those cases where one
is required) is of the same type.
The lower bounds hold even for integer transition rewards, since all our
counterexamples are of this form.

Directions for future work include the corresponding questions for $\limsup$
threshold objectives. While the $\liminf$ point payoff objective generalizes co-B\"uchi
(see \cref{sec:prelim}), the $\limsup$ point payoff objective generalizes
B\"uchi. Thus the lower bounds for $\limsup$ point payoff are at least as high as
the lower bounds for B\"uchi objectives \cite{KMST:ICALP2019,KMST2020c}.

\newpage

\newpage
\appendix
\section{Introduction to Strategy Complexity}\label{app-def}
\noindent{\bf A simple example.}

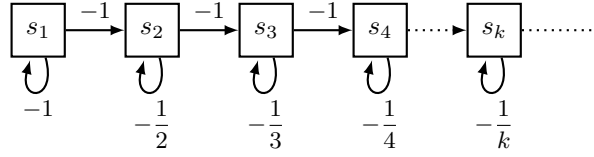
\begin{figure}[htbp]
\begin{center}
    \begin{tikzpicture}
    
    \node[draw, minimum height=0.7cm, minimum width=0.7cm] (S1) at (1,0) {$s_{1}$};
    \node[draw, minimum height=0.7cm, minimum width=0.7cm] (S2) at (2.5,0) {$s_{2}$};
    \node[draw, minimum height=0.7cm, minimum width=0.7cm] (S3) at (4,0) {$s_{3}$};
    \node[draw, minimum height=0.7cm, minimum width=0.7cm] (S4) at (5.5,0) {$s_{4}$};
    
    \node[draw, minimum height=0.7cm, minimum width=0.7cm] (SK) at (7,0) {$s_{k}$};
    \node (SI1) at (8.5,0) {};

    
    \draw[->,>=latex] (S1) -- (S2) node[above, midway]{\small $-1$};
    \draw[->,>=latex] (S2) -- (S3) node[above, midway]{\small $-1$};
    \draw[->,>=latex] (S3) -- (S4) node[above, midway]{\small $-1$};
    \draw[->,>=latex, dotted, thick] (S4) -- (SK) node[above, midway]{};
    \draw[dotted, thick] (SK) -- (SI1);
    
	\path (S1) edge[->,>=latex, loop below] node[below] {\small $-1$} (S1);
	\path (S2) edge[->,>=latex, loop below] node[below] {\small $-\dfrac{1}{2}$} (S2);
	\path (S3) edge[->,>=latex, loop below] node[below] {\small $-\dfrac{1}{3}$} (S3);
	\path (S4) edge[->,>=latex, loop below] node[below] {\small $-\dfrac{1}{4}$} (S4);
	\path (SK) edge[->,>=latex, loop below] node[below] {\small $-\dfrac{1}{k}$} (SK);

    \end{tikzpicture}
    \caption{Adapted from \cite[Example 8.10.2]{Puterman:book}. While there is
      no optimal MD (memoryless deterministic) strategy, the following
      strategy is optimal for lim inf/lim sup mean payoff:
      Loop $\exp(\exp(k))$ many
      times
      in state $s_k$ for all $k$.    
      In this particular example, this can be implemented with either
      just a step counter or just a reward counter, but
      in general both are needed; cf.~\cref{table:allresults}.}
    \label{putermanexample}
\end{center}    
    \end{figure}

\noindent{\bf Memory and strategies.}

We formalize the amount of \emph{memory} needed to implement strategies.
Let $\memory$ be a countable set of memory modes, and let $\tau: \memory\times \states \to \dist(\memory\times \states)$ be a function that meets the
following two conditions: for all modes $\memconf \in \memory$,
\begin{itemize}
	\item for all controlled states~$\state\in \zstates$, 
	the distribution $\tau(\memconf,\state)$ is   over 
	$\memory \times \{\state'\mid \state \transition{} \state'\}$.
	\item for all random states~$\state \in \rstates$,
        and $\state' \in \states$, we have $\sum_{\memconf'\in \memory} \tau(\memconf,\state)(\memconf',\state')=P(\state)(\state')$.
\end{itemize}

The function~$\tau$ together with an initial memory mode~$\memconf_0$
induce a strategy~$\zstrat_{\tau}$
as follows.
Consider the Markov chain with the set~$\memory \times \states$ of states
and the  probability function~$\tau$.
A sequence $\rho=s_0 \cdots s_i$ corresponds to a set 
$H(\rho)=\{(\memconf_0,s_0) \cdots (\memconf_i,s_i) \mid \memconf_0,\ldots, \memconf_i\in \memory\}$
of runs in this Markov chain.
Each $\rho s \in \state_0 \states^{*} \zstates$
induces a 
probability distribution~$\mu_{\rho \state}\in \dist(\memory)$, 
the  probability of   being in  state~$(\memconf,s)$
conditioned on having  taken some partial run
from~$H(\rho s)$.
We define~$\zstrat_{\tau}$ such that
$\zstrat_{\tau}(\rho \state)(\state')=\sum_{\memconf,\memconf'\in \memory} \mu_{\rho \state}(\memconf) \tau(\memconf,\state)(\memconf',\state') $
for all $\rho \state\in \states^{*} \zstates$ and all $\state' \in \states$.

We say that a strategy $\zstrat$ can be \emph{implemented} with
memory~$\memory$ if there exist~$\memconf_0 \in \memory$
and $\tau$ such that  $\zstrat_{\tau}=\zstrat$.


\newpage
\section{Missing proofs from Section~\ref{sec:liminfreward}}\label{sec:appreward}
\lemwelldefined*
\lemwelldefinedproof

\begin{proposition}\label{prop:product-sum}
Given an infinite sequence of real numbers $a_n$ with $0 \le a_n \le 1$, we
have
\[
\prod_{n=1}^\infty (1-a_n) > 0 \quad\Leftrightarrow\quad \sum_{n=1}^\infty a_n
< \infty.
\]
\end{proposition}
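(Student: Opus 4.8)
The plan is to reduce both implications to the elementary inequality $1-a \le e^{-a}$, valid for all real $a$, together with a matching lower bound $1-a \ge e^{-2a}$ valid on the range $a \in [0,\tfrac12]$. Write $P_N \defeq \prod_{n=1}^N (1-a_n)$ for the partial products; since $0 \le 1-a_n \le 1$, the sequence $(P_N)_N$ is non-increasing and bounded in $[0,1]$, so $\prod_{n=1}^\infty (1-a_n) = \lim_N P_N$ exists, and it is strictly positive exactly when the $P_N$ are bounded away from $0$.

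For the direction ``$\Rightarrow$'' I would argue contrapositively. If $\sum_{n} a_n = \infty$, then using $1-a_n \le e^{-a_n}$ term by term gives
\[
P_N \;\le\; \prod_{n=1}^N e^{-a_n} \;=\; \exp\Bigl(-\sum_{n=1}^N a_n\Bigr) \;\xrightarrow[N\to\infty]{}\; 0,
\]
so $\prod_{n=1}^\infty (1-a_n) = 0$. Hence $\prod_{n=1}^\infty (1-a_n) > 0$ forces $\sum_n a_n < \infty$.

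For ``$\Leftarrow$'', suppose $\sum_n a_n < \infty$. We may assume $a_n < 1$ for every $n$ (this is the only case invoked in the paper; note that if some $a_n=1$ the product is trivially $0$). Choose $N_0$ with $\sum_{n>N_0} a_n \le \tfrac12$; then in particular $a_n \le \tfrac12$ for all $n > N_0$, so the lower bound $1-a_n \ge e^{-2a_n}$ applies, and for every $N > N_0$,
\[
\frac{P_N}{P_{N_0}} \;=\; \prod_{n=N_0+1}^{N} (1-a_n) \;\ge\; \exp\Bigl(-2\sum_{n=N_0+1}^{N} a_n\Bigr) \;\ge\; e^{-1}.
\]
Since $P_{N_0} = \prod_{n=1}^{N_0}(1-a_n) > 0$ (a finite product of positive factors), we get $P_N \ge e^{-1} P_{N_0} > 0$ for all $N \ge N_0$, hence $\prod_{n=1}^\infty (1-a_n) > 0$.

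The only mildly delicate point is the auxiliary lower bound $1-a \ge e^{-2a}$ on $[0,\tfrac12]$, which I would justify by setting $g(a) \defeq \ln(1-a) + 2a$ and checking $g(0)=0$ and $g'(a) = (1-2a)/(1-a) \ge 0$ for $a \in [0,\tfrac12)$, so $g \ge 0$ there. I do not expect a genuine obstacle here; the one thing to be careful about is choosing $N_0$ small-tailed enough that the remaining terms lie in the range where this lower bound holds, and keeping track of the (harmless) edge case $a_n = 1$.
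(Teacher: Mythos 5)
Your proof is correct, and it takes a somewhat different route from the paper's. The paper splits into cases: when $a_n \not\to 0$ it declares the equivalence trivial, and when $a_n \to 0$ it passes to the series $\sum_n -\ln(1-a_n)$ and invokes the limit comparison test, using L'H\^opital's rule to get $\lim_{x\to 0}\frac{-\ln(1-x)}{x}=1$. You instead avoid any case distinction and any asymptotic comparison by sandwiching each factor with the explicit inequalities $1-a \le e^{-a}$ (giving the divergence direction directly, via the contrapositive) and $1-a \ge e^{-2a}$ on $[0,\tfrac12]$ (giving the convergence direction after splitting off a finite head with tail sum at most $\tfrac12$). This is more elementary and more quantitative: it yields the concrete lower bound $P_N \ge e^{-1}P_{N_0}$ rather than a bare convergence statement, at the small cost of verifying the auxiliary inequality, which your derivative computation does correctly. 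Your explicit handling of the edge case $a_n=1$ is also a slight improvement in rigor: as stated, the ``$\Leftarrow$'' direction fails if some $a_n=1$ (product $0$, sum possibly finite), and the paper's proof silently assumes $a_n<1$ when taking logarithms; you at least flag the restriction, which is harmless for the way the proposition is used in the paper. No gaps.
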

\begin{proof}
In the case where $a_n$ does not converge to zero, the property is trivial.
In the case where $a_n \rightarrow 0$, it is shown
by taking the logarithm of the product and using the limit comparison test as follows.

Taking the logarithm of the product gives the series
\[
\sum_{n=1}^{\infty} \ln(1 - a_n)
\]
whose convergence (to a finite number $\le 0$) is equivalent to the positivity of the product.
It is also equivalent to the convergence (to a number $\ge 0$) of its negation
$\sum_{n=1}^{\infty} -\ln(1 - a_n)$.
But observe that (by L'H\^{o}pital's rule)
\[
\lim_{x \rightarrow 0} \frac{-\ln(1-x)}{x} = 1.
\]
Since $a_n \rightarrow 0$ we have
\[
\lim_{n \rightarrow \infty} \frac{-\ln(1-a_n)}{a_n} = 1.
\]
By the limit comparison test, the series
$\sum_{n=1}^{\infty} -\ln(1 - a_n)$ converges if and only if the series
$\sum_{n=1}^\infty a_n$ converges.
\end{proof}

\begin{proposition}\label{prop:tail-product}
Given an infinite sequence of real numbers $a_n$ with $0 \le a_n \le 1$,
\[
\prod_{n=1}^\infty a_n > 0 \quad\Rightarrow\quad \forall\eps>0\,\exists N.\, \prod_{n=N}^\infty a_n \ge (1-\eps).
\]
\end{proposition}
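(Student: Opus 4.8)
The plan is to write the tail product $\prod_{n=N}^\infty a_n$ as the quotient of the full product by a finite partial product, and then let $N\to\infty$.

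First I would set $P \eqdef \prod_{n=1}^\infty a_n$ and, for each $m\ge 0$, let $P_m \eqdef \prod_{n=1}^m a_n$ (with the empty product $P_0 = 1$). Since $0\le a_n\le 1$, the sequence $(P_m)_{m\ge 0}$ is non-increasing, and by hypothesis it converges to $P>0$; hence $P_m\ge P>0$ for every $m$. In particular every $a_n$ is strictly positive and every $P_m$ is strictly positive, so all the quotients below are well defined.

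Next, for every $N\ge 1$ and every $m\ge N$ we have $P_m = P_{N-1}\cdot\prod_{n=N}^m a_n$. Letting $m\to\infty$ on both sides yields $P = P_{N-1}\cdot\prod_{n=N}^\infty a_n$, and since $P_{N-1}>0$ this gives $\prod_{n=N}^\infty a_n = P/P_{N-1}$. Since $P_{N-1}\to P$ as $N\to\infty$, it follows that $\prod_{n=N}^\infty a_n = P/P_{N-1}\to 1$. So, given $\eps>0$, it suffices to choose $N$ large enough that $P_{N-1}\le P/(1-\eps)$ — possible because $P_{N-1}$ converges to $P< P/(1-\eps)$ — which gives $\prod_{n=N}^\infty a_n\ge 1-\eps$, as required.

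There is no real obstacle here; the only point requiring a word of care is that the partial products $P_{N-1}$ remain strictly positive so that the quotient $P/P_{N-1}$ makes sense, and this is exactly what the assumption $\prod_{n=1}^\infty a_n>0$ guarantees. (One could alternatively argue via $\sum_{n\ge N}(1-a_n)$ together with \cref{prop:product-sum}, but the direct quotient argument is shorter.)
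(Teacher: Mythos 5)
Your proof is correct. It takes a slightly different route from the paper: the paper passes through logarithms, observing that $\prod_{n=1}^\infty a_n>0$ gives $\sum_{n=1}^\infty \ln(a_n)>-\infty$, so the tails $\sum_{n=N}^\infty \ln(a_n)$ can be made $\ge -\delta$, and then exponentiates with $\delta=-\ln(1-\eps)$; you instead work directly with the partial products, writing the tail as the quotient $P/P_{N-1}$ and using that the non-increasing sequence $P_{N-1}$ converges to $P>0$, so the quotient tends to $1$. The two arguments are close in spirit (both express that positivity of the full product forces the tails to converge to $1$), but yours is marginally more elementary: it never needs the logarithm to be defined (you justify directly that all $a_n$ and all $P_m$ are strictly positive, which the paper leaves implicit when it takes $\ln a_n$), and it avoids any appeal to properties of $\exp$/$\ln$; the paper's version is a bit shorter on the page and mirrors the logarithmic technique already used in \cref{prop:product-sum}. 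The only cosmetic point in yours is the division by $1-\eps$, which tacitly assumes $\eps<1$; for $\eps\ge 1$ the claim is trivial since the tail product is non-negative, so nothing is lost.
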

\begin{proof}
  Since $\prod_{n=1}^\infty a_n > 0$, by taking the logarithm we obtain
  $\sum_{n=1}^\infty \ln(a_n) > -\infty$.
  Thus for every $\delta>0$ there exists an $N$
  s.t.\ $\sum_{n=N}^\infty \ln(a_n) \ge -\delta$.
  By exponentiation we obtain
  $\prod_{n=N}^\infty a_n \ge \exp(-\delta)$.
  By picking $\delta = -\ln(1-\eps)$ the result follows.
\end{proof}

\lemmainfwin*
\lemmainfwinproof

\begin{restatable}{lemma}{lemmaalglose}\label{alglose}
For any sequence $\{\alpha_{n}\}$, where $\alpha_{n} \in [0,1]$ for all $n$, and any functions $i(n), j(n) : \mathbb{N} \to \N $ with $i(n), j(n) \in \{0, 1, ..., k(n)-1\}, i(n) < j(n)$ for all $n$, the following sum diverges:

\begin{equation}\label{ijlosingsum}
\sum_{n=k^{-1}(2)}^{\infty} \Big( \delta_{j(n)}(n)(\alpha_{n} \varepsilon_{j(n)}(n) + (1- \alpha_{n}) \varepsilon_{i(n)}(n)) + \delta_{i(n)}(n)(\alpha_{n} + (1- \alpha_{n}) \varepsilon_{i(n)}(n)) \Big).
\end{equation}
\end{restatable}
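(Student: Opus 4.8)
The plan is to bound the summand in \eqref{ijlosingsum} below by a single quantity that depends only on $n$ (not on $\alpha_n$, $i(n)$, or $j(n)$), and then to sum that quantity block by block over the ranges on which $k$ is constant.

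\textbf{Reducing the summand.} Expanding \eqref{ijlosingsum} yields four nonnegative terms, of which I keep only $\delta_{j(n)}(n)(1-\alpha_n)\varepsilon_{i(n)}(n) + \delta_{i(n)}(n)\alpha_n$. Since $i(n) < j(n) \le k(n)-1$ we have $i(n) \le k(n)-2$, so $k(n) \ge 2$ (which is exactly why the sum starts at $n = k^{-1}(2)$) and $i(n)+1$ is again a legal non-top choice. Because $\log_{\ell+1} n < \log_\ell n$, the value $\delta_\ell(n) = 1/\log_{\ell+1} n$ is increasing in $\ell$, so $\delta_{j(n)}(n) \ge \delta_{i(n)+1}(n)$; and directly from the recursions $\varepsilon_{m+1}(n) = \varepsilon_m(n)/\log_{m+2} n$ and $\delta_{m+1}(n) = 1/\log_{m+2} n$ we get $\delta_{i(n)+1}(n)\varepsilon_{i(n)}(n) = \varepsilon_{i(n)+1}(n)$. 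Hence the summand is at least $(1-\alpha_n)\varepsilon_{i(n)+1}(n) + \alpha_n\delta_{i(n)}(n)$. Now I bound both coefficients below by $\varepsilon_{k(n)-1}(n)$: on one side $i(n)+1 \le k(n)-1$ and $\varepsilon$ is decreasing in its index, so $\varepsilon_{i(n)+1}(n) \ge \varepsilon_{k(n)-1}(n)$; on the other $\delta_{i(n)}(n) \ge \delta_0(n) = 1/\log n \ge 1/n \ge \varepsilon_{k(n)-1}(n)$ (using that the Tower function in the construction keeps $n$ large enough that $\prod_{\ell=1}^{k(n)} \log_\ell n \ge 1$). Therefore each summand is at least $(1-\alpha_n)\varepsilon_{k(n)-1}(n) + \alpha_n\varepsilon_{k(n)-1}(n) = \varepsilon_{k(n)-1}(n)$, with the adversary's choices eliminated.

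\textbf{Summing over blocks.} Partition the index set $\{n \ge k^{-1}(2)\}$ into $S_i \eqdef \{n : k(n) = i\} = [h(i),h(i+1))$ for $i \ge 2$; this is a genuine partition since $k$ is non-decreasing and unbounded. On $S_i$ the previous bound reads $\varepsilon_{i-1}(n)$, so
\[
\sum_{n \ge k^{-1}(2)} (\text{summand}) \ \ge\ \sum_{i \ge 2} \ \sum_{n=h(i)}^{h(i+1)-1} \varepsilon_{i-1}(n) .
\]
By definition $h(i+1)$ is at least $\min\{m+1 : \sum_{n=h(i)}^{m}\varepsilon_{i-1}(n) \ge 1\}$ — a legitimate clause because $\sum_{n \ge h(i)}\varepsilon_{i-1}(n)$ diverges — so $h(i+1)-1 \ge \min\{m : \sum_{n=h(i)}^{m}\varepsilon_{i-1}(n) \ge 1\}$ and each inner block sum is $\ge 1$. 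Hence the right-hand side is $\ge \sum_{i \ge 2} 1 = \infty$.

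I expect the only place needing genuine care to be this last bookkeeping: unwinding the $\lceil\cdot\rceil$ and the $\max$ in the definition of $h(i+1)$, and the off-by-one between ``$m$'' and ``$m+1$'', so that $\sum_{n=h(i)}^{h(i+1)-1}\varepsilon_{i-1}(n) \ge 1$ really follows. Everything else is elementary monotonicity of the families $\delta_\cdot(n)$ and $\varepsilon_\cdot(n)$; the one conceptual step is the observation that the worst case over $\alpha_n$, $i(n)$, $j(n)$ collapses to the single value $\varepsilon_{k(n)-1}(n)$.
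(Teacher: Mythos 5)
Your proof is correct and follows essentially the same route as the paper's: you lower-bound the surviving terms by $\varepsilon_{k(n)-1}(n)$ via the identity $\delta_{m+1}(n)\varepsilon_{m}(n)=\varepsilon_{m+1}(n)$ and then sum block by block over $[h(i),h(i+1))$ using exactly the defining clause of $h$, which is the paper's Claim on divergence. The only notable (minor) difference is that you bound both coefficients of the $\alpha_n$-mixture by $\varepsilon_{k(n)-1}(n)$ directly, which replaces, and in fact slightly tightens, the paper's case split on $\alpha_n\ge 1/2$ together with its separate divergence claims for the two component series (the paper only spells out the $\delta_{j(n)}(n)\varepsilon_{i(n)}(n)$ case, whereas you handle $\delta_{i(n)}(n)\ge \varepsilon_{k(n)-1}(n)$ explicitly).
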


\newcommand{\lemmaalgloseproof}{
\begin{proof}
We can narrow our focus by noticing that 
\begin{align*}
& \sum_{n=k^{-1}(2)}^{\infty} \Big( \delta_{j(n)}(n)(\alpha_{n} \varepsilon_{j(n)}(n) + (1- \alpha_{n}) \varepsilon_{i(n)}(n)) + \delta_{i(n)}(n)(\alpha_{n} + (1- \alpha_{n}) \varepsilon_{i(n)}(n)) \Big) \\
& = \sum_{n=k^{-1}(2)}^{\infty} \alpha_{n} \delta_{j(n)}(n) \varepsilon_{j(n)}(n) + (1- \alpha_{n}) \delta_{i(n)}\varepsilon_{i(n)}(n) \qquad \text{Convergent by def. of $\delta_{i}(n), \varepsilon_{i}(n)$} \\ 
& + \sum_{n=k^{-1}(2)}^{\infty}(1- \alpha_{n}) \delta_{j(n)} \varepsilon_{i(n)}(n) + \alpha_{n} \delta_{i(n)}(n)
\end{align*}

Hence the divergence of \eqref{ijlosingsum} depends only on the divergence of $\sum_{n=k^{-1}(2)}^{\infty}(1- \alpha_{n}) \delta_{j(n)} \varepsilon_{i(n)}(n) + \alpha_{n} \delta_{i(n)}(n)$.
No matter how the sequence $\{ \alpha_{n} \}$ behaves, for every $n$ we have that either $\alpha_{n} \geq 1/2$ or $1- \alpha_{n} \geq 1/2$. Hence for every $n$ it is the case that 
\begin{align*}
(1-\alpha_{n}) \delta_{j(n)}(n) \varepsilon_{i(n)}(n) +
  \alpha_{n}\delta_{i(n)}(n) \,\geq\, & \dfrac{1}{2} \delta_{j(n)}(n) \varepsilon_{i(n)}(n) \\
\text{or } & \\ 
\,\geq\, & \dfrac{1}{2} \delta_{i(n)}(n)
\end{align*}

Define the function $f$ as follows:
 \[
    f(n)=\left\{
                \begin{array}{ll}
                \dfrac{1}{2} \delta_{i(n)}(n) \text{ if $\alpha_{n} \geq 1/2$} \\
                  \\
                  \dfrac{1}{2} \delta_{j(n)}(n) \varepsilon_{i(n)}(n) \text{ otherwise}
                \end{array}
              \right.
  \]

Hence no matter how $\{ \alpha_{n} \}$ behaves, we have that 
$$
\sum_{n= k^{-1}(2)}^{\infty} \Big( \delta_{j(n)}(n)(\alpha_{n} \varepsilon_{j(n)}(n) + (1- \alpha_{n}) \varepsilon_{i(n)}(n)) + \delta_{i(n)}(n)(\alpha_{n} + (1- \alpha_{n}) \varepsilon_{i(n)}(n)) \Big) \geq \sum_{n= k^{-1}(2)}^{\infty} f(n).
$$
We know that both 
$\sum_{n= k^{-1}(2)}^{\infty} \dfrac{1}{2} \delta_{j(n)}(n) \varepsilon_{i(n)}(n)$
and
$\sum_{n= k^{-1}(2)}^{\infty} \dfrac{1}{2} \delta_{i(n)}(n)$
diverge for all $i(n),j(n) \in \{0,1, ... , k(n)-1\}$,
$i(n) < j(n)$,
as shown in \cref{claim:divergence}. 

Thus 
$\sum_{n= k^{-1}(2)}^{\infty} \dfrac{1}{2} \delta_{j(n)}(n) \varepsilon_{i(n)}(n)$ 
and 
$\sum_{n=k^{-1}(2)}^{\infty} \dfrac{1}{2} \delta_{i(n)}(n)$ must also diverge no matter how $i(n)$ and $j(n)$ behave. As a result  it must be the case that
$\sum_{n=k^{-1}(2)}^{\infty} f(n)$ diverges.
Hence \eqref{ijlosingsum} must be divergent as desired as $i(n)$ and $j(n)$ vary for $n \geq k^{-1}(2)$.
\end{proof}
}
\lemmaalgloseproof

\begin{claim}\label{claim:divergence}
The sum $\sum_{n= k^{-1}(2)}^{\infty} \dfrac{1}{2} \delta_{j(n)}(n) \varepsilon_{i(n)}(n)$ diverges for all $i(n),j(n) \in \{0,1, ... , k(n)-1\}$ with $i(n) < j(n)$.
\end{claim}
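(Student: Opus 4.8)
The plan is to reduce the claim, via two monotonicity facts and one algebraic identity, to a ``telescoping'' divergence that is built directly into the definition of $h$ (and hence of $k$) in \cref{def:kn}. Fix arbitrary index functions with $i(n)<j(n)$ and $i(n),j(n)\in\{0,\dots,k(n)-1\}$; note that $i(n)<j(n)\le k(n)-1$ forces $j(n)\ge 1$ and $k(n)\ge 2$, so only indices $n\ge k^{-1}(2)$ are relevant, and recall that $k$ is the step function with $k(n)=i$ on the block $n\in[h(i),h(i+1))$, so $k^{-1}(2)=h(2)$.

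First I would record that for every $n\ge k^{-1}(2)$ the quantities $\varepsilon_0(n)\ge\varepsilon_1(n)\ge\cdots\ge\varepsilon_{k(n)-1}(n)$ are non-increasing in the index. This is immediate from $\varepsilon_{\ell+1}(n)=\varepsilon_\ell(n)/\log_{\ell+2}n$ together with the fact that all iterated logarithms occurring here exceed $1$: on the block where $k(n)=i$ (necessarily $i\ge 2$) we have $n\ge h(i)\ge\text{Tower}(i+1)>\text{Tower}(i)$, hence $\log_i n>1$ and a fortiori $\log_\ell n>1$ for all $\ell\le i$ (this is exactly the well-definedness condition guaranteed by the $\text{Tower}$ terms in \cref{def:kn}; cf.\ \cref{lem:welldefined}). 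From $i(n)\le j(n)-1$ this gives $\varepsilon_{i(n)}(n)\ge\varepsilon_{j(n)-1}(n)$, and from $j(n)\le k(n)-1$ it gives $\varepsilon_{j(n)}(n)\ge\varepsilon_{k(n)-1}(n)$.

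Next I would use the identity $\delta_j(n)\,\varepsilon_{j-1}(n)=\varepsilon_j(n)$, valid for $j\ge 1$, which is just an unfolding of the definitions: $\delta_j(n)=1/\log_{j+1}n$ and $\varepsilon_{j-1}(n)=(n\log n\cdots\log_j n)^{-1}$, whose product is $(n\log n\cdots\log_{j+1}n)^{-1}=\varepsilon_j(n)$. Chaining this with the two inequalities above yields, for every relevant $n$,
\[
\frac{1}{2}\,\delta_{j(n)}(n)\,\varepsilon_{i(n)}(n)\ \ge\ \frac{1}{2}\,\delta_{j(n)}(n)\,\varepsilon_{j(n)-1}(n)\ =\ \frac{1}{2}\,\varepsilon_{j(n)}(n)\ \ge\ \frac{1}{2}\,\varepsilon_{k(n)-1}(n),
\]
so it suffices to prove that $\sum_{n\ge k^{-1}(2)}\varepsilon_{k(n)-1}(n)=\infty$.

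Finally I would exploit that, by \cref{def:kn}, $h(i+1)$ is at least the first $m+1$ with $\sum_{n=h(i)}^{m}\varepsilon_{i-1}(n)\ge 1$, so that $\sum_{n=h(i)}^{h(i+1)-1}\varepsilon_{i-1}(n)\ge 1$. Since $k(n)=i$ (hence $\varepsilon_{k(n)-1}(n)=\varepsilon_{i-1}(n)$) exactly on $n\in[h(i),h(i+1))$, and $k^{-1}(2)=h(2)$, splitting the tail into the consecutive blocks $[h(i),h(i+1))$ for $i\ge 2$ gives
\[
\sum_{n\ge k^{-1}(2)}\varepsilon_{k(n)-1}(n)\ =\ \sum_{i\ge 2}\ \sum_{n=h(i)}^{h(i+1)-1}\varepsilon_{i-1}(n)\ \ge\ \sum_{i\ge 2}1\ =\ \infty,
\]
which finishes the argument. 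The only mildly delicate point is the bookkeeping --- checking that the $\varepsilon_\ell(n)$ really are monotone on each block (i.e.\ that $n\ge h(i)$ is large enough, which is exactly why the $\text{Tower}$ terms sit inside the definition of $h$) and that the shifted index $\varepsilon_{k(n)-1}$ lines up with the summand $\varepsilon_{i-1}$ appearing in the clause defining $h(i+1)$; everything else is a one-line computation.
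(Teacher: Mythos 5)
Your proof is correct and follows essentially the same route as the paper's: bound $\delta_{j(n)}(n)\varepsilon_{i(n)}(n)$ from below by $\varepsilon_{k(n)-1}(n)$ via monotonicity in the index and the identity $\delta_j(n)\varepsilon_{j-1}(n)=\varepsilon_j(n)$, then split the tail into the blocks $[h(a),h(a+1))$ on which $k(n)=a$ and use that the clause in \cref{def:kn} defining $h(a+1)$ makes each block sum at least $1$. The only difference is cosmetic (you invoke the telescoping identity before the monotonicity step and spell out why the iterated logarithms exceed $1$), so nothing further is needed.
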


\begin{proof}
This result is not immediate because the range of values the indexing functions $i(n)$ and $j(n)$ can take grows with $k(n)$ as $n$ increases.

Under the assumption that $i(n) < j(n)$ we have that $\delta_{j(n)}(n) \eps_{i(n)}(n) \geq \delta_{j(n)}(n) \eps_{j(n)-1}(n) \geq \delta_{k(n)-1}(n) \eps_{k(n)-2}(n) = \eps_{k(n)-1}(n)$.
Thus it suffices to show that $\sum_{n=k^{-1}(2)}^{\infty} \eps_{k(n)-1}(n)$ diverges:


\begin{align*}
\sum_{n=k^{-1}(2)}^{\infty} \eps_{k(n)-1}(n) & = \sum^{\infty}_{a=2} \quad \sum_{n= k^{-1}(a)}^{k^{-1}(a+1)-1} \eps_{a-1}(n) & \text{splitting the sum up}\\
& = \sum^{\infty}_{a=2} \quad \sum_{n= h(a)}^{h(a+1)-1} \eps_{a-1}(n) & \text{$k(n) = h^{-1}(n)$}\\
& \geq \sum^{\infty}_{a=2} 1 & \text{definition of $h(n)$}\\
\end{align*}

Note that the definition of $h(i)$ says exactly that a block of the form $\sum_{n= h(a)}^{h(a+1)-1} \eps_{a-1}(n)$ is at least $1$.
Hence $\sum_{n= k^{-1}(2)}^{\infty} \dfrac{1}{2} \delta_{j(n)}(n) \varepsilon_{i(n)}(n)$ diverges as required.

\end{proof}

\bigskip
\lemmainflose*
\begin{proof}
Let $\sigma$ be some FR strategy with $k$ memory modes.
Our MDP consists of a linear sequence of gadgets (\cref{infinitegadget}) and
is in particular acyclic.
The $n$-th gadget is entered at state $s_n$ and takes 4 steps.
Locally in the $n$-th gadget there are 3 possible scenarios:
\begin{description}
\item[(1)]
The random transition picks some branch $i$ at $s_n$ and the strategy then
picks a branch $j > i$ at $c_n$.

By the definition of the payoffs (multiples of $m_n$; cf.~\cref{def:kn}),
this means that we see a mean payoff $\le -1$, regardless of events in past
gadgets.
This is because the numbers $m_n$ grow so quickly
with $n$ that even the combined maximal possible rewards of all past gadgets
are so small in comparison that they do not matter for the outcome
in the $n$-th gadget, i.e.,
rewards from past gadgets cannot help to avoid seeing a mean payoff $\le -1$
in the above scenario.
\item[(2)]
We reach the losing sink $\bot$ (and thus will keep seeing a mean payoff $\le -1$
forever). This happens with probability $\eps_j(n)$ if the strategy picks
some branch $j$ at $c_n$, regardless of past events.
\item[(3)]
All other cases.
\end{description}
As explained above, due to the definition of the rewards (\cref{def:kn}),
events in past gadgets do not make the difference between (1),(2),(3) in the
current gadget. 
It just depends on the choices of the strategy $\sigma$ in the current gadget.

Let ${\it Bad}_n$ be the event of seeing either of the two unfavorable outcomes (1) or (2) in the $n$-th
gadget. Let $p_n$ be the probability of ${\it Bad}_n$ under strategy $\sigma$.
Since $\sigma$ has memory, the probabilities $p_n$ are not necessarily
independent.
However, we show \emph{lower bounds} $e_n \le p_n$ that hold universally for every
FR strategy $\sigma$ with $\le k$ memory modes and every $n$ such that
$k(n) > k+1$.
The lower bound $e_n$ will hold regardless of the memory mode of $\sigma$ upon
entering the $n$-th gadget.

{\bf\noindent Memory updates.}
First we show that $\sigma$ randomizing its memory update after observing
the random transition from state $s_n$ does \emph{not} help to reduce the
probability of event ${\it Bad}_n$.
I.e., we show that without restriction $\sigma$ can update its memory
deterministically after observing the transition from state $s_n$.

Once in the controlled state $c_n$, the strategy $\sigma$ can base its choice only on the
current state (always $c_n$ in the $n$-th gadget)
and on the current memory mode.
Thus, in state $c_n$, in each memory mode $\memconf$, the strategy has
to pick a distribution $\mathcal{D}^{c_n}_{\memconf}$
over the available transitions from $c_n$.
By the finiteness
of the number of memory modes of $\sigma$
(just $\le k$ by our assumption above),
for each possible
reward level $x$ (obtained in the step from the preceding random transition)
there is a best memory mode $\memconf(x)$ such that $\mathcal{D}^{c_n}_{\memconf(x)}$ is optimal
(in the sense of minimizing the probability of event ${\it Bad}_n$)
for that particular reward level $x$.
(In case of a tie, just use an arbitrary tie break, e.g.,
some pre-defined linear order on the memory modes.)

Therefore, upon witnessing a reward level $x$ in the random transition from
state $s_n$, the strategy $\sigma$ can minimize the probability of event
${\it Bad}_n$ by \emph{deterministically} setting its memory to $\memconf(x)$.
Thus randomizing its memory update does not help to reduce the probability
of ${\it Bad}_n$, and we may assume without restriction that $\sigma$ updates
its memory deterministically.

(Note that the above argument only works because it is local to the current
gadget where we have a finite number of decisions (here just one),
we have a finite number of memory modes, and a one-dimensional criterion for
local optimality (minimizing the probability of event ${\it Bad}_n$).
We do \emph{not} claim that randomized memory updates are useless for every strategy
in every MDP and every objective.)

{\bf\noindent The lower bounds $e_n$.}
Now we consider an FR strategy $\sigma$ that without restriction updates its memory
\emph{deterministically} after each random choice (from state $s_n$) in the $n$-th
gadget. It can still randomize its actions, however.

Let $N'$ be the minimal number such that for all $n \ge N'$ we
have $k(n) > k+1$. In particular, this implies $N' \ge k^{-1}(2)$,
and thus we can apply \cref{alglose} later.

Once $n \ge N'$, then by the Pigeonhole Principle there will always be a
memory mode confusing at least two different transitions $i(n),j(n) \neq k(n)$
from state $s_n$ to $c_n$.
Note that this holds regardless of the memory mode of $\sigma$ upon entering
the $n$-th gadget.
(The strategy might confuse many other scenarios, but just one confused pair
$i(n),j(n) \neq k(n)$ is enough for our lower bound.)
Without loss of generality, let $j(n)$ be larger of the two confused transitions, i.e., $i(n) < j(n)$.
Let $i(n)$ and $j(n)$ be two functions taking values in $\{0, 1, ..., k(n)-1\}$ where $i(n) < j(n)$ for all $n$.

Confusing two transitions $i(n)$ and $j(n)$ from $s_n$ to $c_n$
(where without restriction $i(n) < j(n)$), the strategy is in the same
memory mode afterwards. However, it can still randomize its choices in state
$c_n$.
To prove our lower bound on the probability of ${\it Bad}_n$,
it suffices to consider the case where the strategy
only randomizes over the outgoing transitions $i(n)$ and $j(n)$ from state $c_n$.
This is because, by \cref{claim:confusion-simple}, every other
behavior would perform even worse, in the sense of yielding a higher
probability of ${\it Bad}_n$.

That is to say that the strategy picks the higher $j(n)$-th branch with some probability $\alpha_n$
and the lower $i(n)$-th branch with probability $1-\alpha_n$.
(We leave the probabilities $\alpha_n$ unspecified here. Using \cref{alglose},
we'll show that our result holds regardless of their values.)

The local chance of 
the event ${\it Bad}_n$  is then lower bounded by 

$$e_n \eqdef \delta_{j(n)}(n)(\alpha_{n} \varepsilon_{j(n)}(n) + (1- \alpha_{n}) \varepsilon_{i(n)}(n)) + \delta_{i(n)}(n)(\alpha_{n} + (1- \alpha_{n}) \varepsilon_{i(n)}(n)).
$$

The term above just expresses a case distinction.
In the first scenario, the random transition chooses the $j(n)$-th branch
(with probability $\delta_{j(n)}(n)$) and then the strategy
chooses the $j(n)$-th branch with probability $\alpha_n$
and the lower $i(n)$-th branch with probability $1-\alpha_n$, and you obtain the
respective chances of reaching the sink $\bot$.
In the second scenario, the random transition chooses the $i(n)$-th branch
(with probability $\delta_{i(n)}(n)$). If the strategy then
chooses the higher $j(n)$-th branch (with probability $\alpha_n$) then we have
outcome (1), yielding a mean payoff $\le -1$.
If the strategy chooses the $i(n)$-th branch (with probability $1-\alpha_n$)
then we still have a chance of $\varepsilon_{i(n)}(n)$
of reaching the sink.

Since, as shown above, randomized memory updates do
not help to reduce the probability of ${\it Bad}_n$, the lower bound
$e_n$ for deterministic updates carries over to the general case.
Thus, even for general randomized FR strategies $\sigma$ with $k$ memory
modes, the probability of event ${\it Bad}_n$ in the $n$-th gadget
(for $n \ge N'$) is lower bounded by $e_n$,
regardless of the memory mode $\memconf$ upon entering the
gadget and regardless of events in past gadgets.
We write $\sigma[\memconf]$ for the strategy $\sigma$ in memory mode
$\memconf$ and obtain
\begin{equation}\label{eq:local-bad}
\forall n \ge N'.\ \forall \memconf.\ \probm_{\mathcal{M}, \sigma[m], s_{n}}({\it Bad}_n) \ge e_n
\end{equation}

{\bf\noindent The final step.}
Let ${\it Bad} \eqdef \cup_n {\it Bad}_n$.

Since $i(n), j(n) \neq k(n)$ and $N' \ge k^{-1}(2)$,
we apply \cref{alglose} to conclude that the
series $\sum_{n=N'}^{\infty} e_n =
\sum_{n=N'}^{\infty} \delta_{j(n)}(n)(\alpha_{n} \varepsilon_{j(n)}(n) +
(1- \alpha_{n}) \varepsilon_{i(n)}(n)) + \delta_{i(n)}(n)(\alpha_{n} +
(1- \alpha_{n}) \varepsilon_{i(n)}(n))$ 
is divergent, regardless of the behavior of $i(n), j(n)$ or the sequence
$\{\alpha_{n}\}$.

Finally, we obtain
\begin{align*}
& \probm_{\mathcal{M}, \sigma, s_{0}}(\liminfmpobj) \\
& \le \probm_{\mathcal{M}, \sigma, s_{0}}(\eventually\always \neg{\it Bad})
& \text{set inclusion}\\
& = \probm_{\mathcal{M}, \sigma, s_{0}}\left(\bigcup_l\eventually^{\le
l}\always \neg{\it Bad}\right) & \text{def. of $\eventually$}\\
& = \lim_{l \to \infty} \probm_{\mathcal{M}, \sigma, s_{0}}(\eventually^{\le l}\always \neg{\it Bad}) & \text{continuity of measures}\\
& \le \lim_{l \to \infty} \probm_{\mathcal{M}, \sigma, s_{0}}\left(\bigcap_{n \ge l/4} \neg{\it Bad}_n\right) & \text{4 steps per gadget}\\
& \le \lim_{4N' \le l \to \infty} \prod_{n \ge l/4 \ge
N'} (\max_\memconf\,\probm_{\mathcal{M}, \sigma[\memconf], s_{n}}(\neg{\it Bad}_n))
& \begin{tabular}{l}{linear sequence of gadgets, finite memory},\\ and past
events do not help to avoid ${\it Bad}_n$\end{tabular}\\
& \le \lim_{4N' \le l \to \infty} \prod_{n \ge l/4 \ge N'} (1-e_n) & \text{by \eqref{eq:local-bad}}\\
& = \lim_{4N' \le l \to \infty} 0 & \text{divergence of $\sum_{n = N'}^\infty e_n$ and \cref{prop:product-sum}}\\
& = 0
\end{align*}
\end{proof}

\begin{figure}
\begin{center}
    \begin{tikzpicture}
    
    \node[draw,circle, minimum height=1cm] (R) at (0,0) {$s_{n}$};
    \node[draw, minimum height=0.4cm, minimum width=0.4cm] (N) at (5,0) {$c_{n}$};
    \node[draw,circle] (S) at (10,0) {$s_{n+1}$};
    \node (M) at (6,0){};
    
    \node[] (Left) at (-1,0) {};
    \node[] (Right) at (11,0) {};
    \draw [dotted, ultra thick] (Left) -- (R);
    \draw [dotted, ultra thick] (S) -- (Right);

    \node[draw, circle] (Top) at (2.5,1) {};
    \node[draw, circle] (Bot) at (2.5,-1) {};
    
    \node[draw, circle] at (7.5,-1) (B) {};
    \node[draw, circle] (E) at (7.5,1) {};
    \node[draw, circle] at (7.5,0) (Deadup) {$\perp$};

    \draw[->,>=latex] (R) edge[bend left=20] node[above, midway, sloped]{$\delta_{j(n)}(n)$} (Top)
                      (Top) edge[bend left=20] node[above, midway]{$+j(n)m_{n}$} (N) 
                      (R) edge[bend right=20] node[below, midway, sloped]{$\delta_{i(n)}(n)$} (Bot)
                      (Bot) edge[bend right=20] node[below, midway]{$+i(n)m_{n}$} (N);

    \draw[->,>=latex] (N) edge[bend left=20] node[above, midway]{$ \alpha_{n}$} (E)
                      (E) edge[bend left=20] node[above, midway]{$-j(n)m_{n}$} (S)
                      (N) edge[bend right=20] node[below, midway]{$1- \alpha_{n}$} (B)
                      (B) edge[bend right=20] node[below, midway]{$-i(n)m_{n}$} (S)
                      (E) edge node[right, midway]{$\varepsilon_{j(n)}(n)$} (Deadup)
                      (B) edge node[right, midway]{$\varepsilon_{i(n)}(n)$} (Deadup);
    
    \end{tikzpicture}
    \caption{When transitions $i(n)$ and $j(n)$ are confused in the player's memory, the player's choice is at least as bad as the reduced play in this simplified gadget.}
    \label{ijcase}
    \end{center}
    \end{figure}
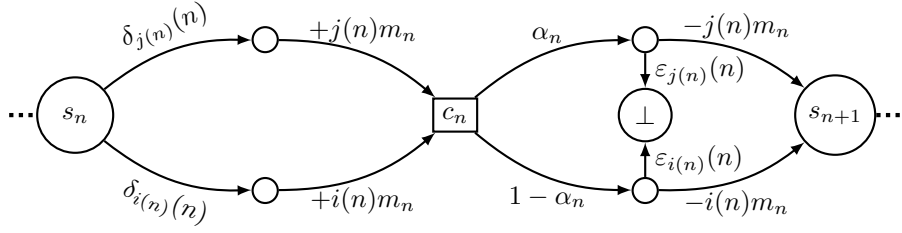
    

\begin{claim}\label{claim:confusion-simple}
Assume that the transitions $i(n)$ and $j(n)$
(with $i(n) < j(n)$) leading to state $c_n$ are confused in the memory of the
strategy. Then we can assume without restriction that the strategy 
only plays transitions $i(n)$ and $j(n)$ with nonzero probability
from state $c_n$, since every other behavior yields a higher probability of
the event ${\it Bad}_n$ (cf.~\cref{ijcase}).
\end{claim}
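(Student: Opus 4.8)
Here is the proof I would give. The plan is to express the part of $\probm_{\mathcal{M},\sigma,s_n}({\it Bad}_n)$ that is governed by the confused memory mode as a \emph{linear} function of the distribution that $\sigma$ plays at $c_n$, and then to show by a coefficient comparison that this linear function is not increased by collapsing the distribution onto the two transitions $i(n)$ and $j(n)$.

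Concretely, I would fix the memory mode $\memconf$ that $\sigma$ is in upon reaching $c_n$ after traversing either branch $i(n)$ or branch $j(n)$ from $s_n$ (these coincide by the confusion hypothesis), and write $\mathcal{D}=(p_0,\dots,p_{k(n)})$ for the distribution over the outgoing transitions of $c_n$ that $\sigma$ uses in mode $\memconf$; by the confusion this same $\mathcal{D}$ is played whether the random choice at $s_n$ was $i(n)$ or $j(n)$. Since ${\it Bad}_n$ is a union of events, $\probm_{\mathcal{M},\sigma,s_n}({\it Bad}_n)$ is at least the probability of ${\it Bad}_n$ intersected with the event that the random choice at $s_n$ selected branch $i(n)$ or $j(n)$, and conditioned on that event the behaviour of $\sigma$ inside the gadget is entirely determined by $\mathcal{D}$. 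Conditioned on branch $a\in\{i(n),j(n)\}$ (probability $\delta_a(n)$) and a subsequent choice $\ell$ at $c_n$: if $\ell>a$ then the net reward collected in the gadget is $(a-\ell)m_n<0$, which by the definition of the $m_n$ (cf.~\cref{def:kn}) drops the mean payoff $\le -1$, so the run is in ${\it Bad}_n$ irrespective of whether it hits $\bot$; if $\ell\le a$ the run is in ${\it Bad}_n$ precisely when it hits $\bot$, which has probability $\varepsilon_\ell(n)$. Hence the portion under consideration equals
\[
B(\mathcal{D})\;=\;\delta_{i(n)}(n)\Big(\textstyle\sum_{\ell>i(n)}p_\ell+\sum_{\ell\le i(n)}p_\ell\,\varepsilon_\ell(n)\Big)+\delta_{j(n)}(n)\Big(\textstyle\sum_{\ell>j(n)}p_\ell+\sum_{\ell\le j(n)}p_\ell\,\varepsilon_\ell(n)\Big),
\]
a linear functional $B(\mathcal{D})=\sum_\ell c_\ell p_\ell$ of $\mathcal{D}$.

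The heart of the argument is then the coefficient comparison, using only that $\varepsilon_0(n)>\varepsilon_1(n)>\dots>\varepsilon_{k(n)}(n)=0$, that every $\varepsilon_\ell(n)<1$, and that $\delta_{i(n)}(n),\delta_{j(n)}(n)>0$. Reading off the coefficients gives $c_{i(n)}=(\delta_{i(n)}(n)+\delta_{j(n)}(n))\varepsilon_{i(n)}(n)$ and $c_{j(n)}=\delta_{i(n)}(n)+\delta_{j(n)}(n)\varepsilon_{j(n)}(n)$, while for $\ell<i(n)$ we get $c_\ell=(\delta_{i(n)}(n)+\delta_{j(n)}(n))\varepsilon_\ell(n)>c_{i(n)}$, for $i(n)<\ell<j(n)$ we get $c_\ell=\delta_{i(n)}(n)+\delta_{j(n)}(n)\varepsilon_\ell(n)>c_{j(n)}$, and for $\ell>j(n)$ we get $c_\ell=\delta_{i(n)}(n)+\delta_{j(n)}(n)>c_{j(n)}$. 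Thus $c_\ell\ge c_{i(n)}$ for every $\ell\le i(n)$ and $c_\ell\ge c_{j(n)}$ for every $\ell>i(n)$, so with $\alpha_n\eqdef\sum_{\ell>i(n)}p_\ell$ (note $\alpha_n\in[0,1]$) we obtain
\[
\probm_{\mathcal{M},\sigma,s_n}({\it Bad}_n)\;\ge\;B(\mathcal{D})\;\ge\;(1-\alpha_n)\,c_{i(n)}+\alpha_n\,c_{j(n)},
\]
and a one-line expansion shows the right-hand side is exactly the quantity $e_n$ (for this value of $\alpha_n$) associated with the reduced play of \cref{ijcase}. This is the assertion of the claim: the confused strategy performs at least as badly, for ${\it Bad}_n$, as some play that only uses transitions $i(n)$ and $j(n)$ from $c_n$, so restricting attention to such plays is without loss for the lower bound.

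I expect the only delicate point to be the bookkeeping of the case split on the position of $\ell$ relative to $i(n)$ and $j(n)$ — in particular keeping straight that a choice $\ell$ strictly above the random branch already makes the run losing irrespective of $\bot$, so that treating ${\it Bad}_n$ as a union introduces no double counting — together with the (easy) observation that discarding all random outcomes other than $i(n)$ and $j(n)$ only decreases the probability and is therefore harmless for a lower bound.
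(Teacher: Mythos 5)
Your proposal is correct and essentially follows the paper's own argument: your coefficients $c_\ell$ are exactly the per-case probabilities of ${\it Bad}_n$ in the paper's five-case analysis (choice equal to $i(n)$, equal to $j(n)$, below $i(n)$, between, or above $j(n)$), and your comparisons $c_\ell\ge c_{i(n)}$ for $\ell\le i(n)$ and $c_\ell\ge c_{j(n)}$ for $\ell>i(n)$ are the same monotonicity-of-$\varepsilon$ inequalities the paper uses to rule cases 3--5 out. Packaging this as a linear functional and concluding by convexity that any distribution is dominated by a two-point mixture with $\alpha_n=\sum_{\ell>i(n)}p_\ell$ (recovering $e_n$ exactly) is a slightly more streamlined presentation of the same proof, and it is correct.
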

\begin{proof}
When confusing transitions $i(n)$ and $j(n)$ with $i(n) < j(n)$,
the player's choice of transition from $c_n$ can be broken down into 5 distinct cases.
The player can choose transition $x(n)$ as follows.
\begin{description}
\item[1.] $x(n) = i(n)$
\item[2.] $x(n) = j(n)$
\item[3.] $x(n) > j(n)$
\item[4.] $x(n) < i(n)$
\item[5.] $i(n) < x(n) < j(n)$
\end{description}

Case 1 leads to a probability of ${\it Bad}_n$ of $\delta_{j(n)}(n) \eps_{i(n)}(n) + \delta_{i(n)}(n) \eps_{i(n)}(n)$.

Case 2 leads to a probability of ${\it Bad}_n$ of $\delta_{j(n)}(n) \eps_{j(n)}(n) + \delta_{i(n)}(n)$.

Case 3 leads to a mean payoff $\le -1$ (and thus ${\it Bad}_n$) with probability $1$. 
This is the worst possible case.

Case 4 leads to a probability of ${\it Bad}_n$ of
$\delta_{j(n)}(n) \eps_{x(n)}(n) + \delta_{i(n)}(n) \eps_{x(n)}(n) >
\delta_{j(n)}(n) \eps_{i(n)}(n) + \delta_{i(n)}(n) \eps_{i(n)}(n)$, i.e.,
this is worse than Case 1.

Case 5 leads to a probability of ${\it Bad}_n$ of
$\delta_{j(n)}(n) \eps_{x(n)}(n) + \delta_{i(n)}(n) > \delta_{j(n)}(n) \eps_{j(n)}(n) + \delta_{i(n)}(n)$,
i.e., this is worse than Case 2.

Hence, without restriction we can assume that
only cases 1 and 2 will get played with positive probability,
that is to say that in state $c_n$ the strategy will only randomize over
the outgoing transitions $i(n)$ and $j(n)$.
\end{proof}

\bigskip
\lemmaalmostwin*
\lemmaalmostwinproof

\bigskip
\lemmaalmostlose*
\lemmaalmostloseproof

\newpage
\section{Missing proofs from Section~\ref{sec:liminfstep}}\label{app:step}
In this part we show that a reward counter plus arbitrary finite memory
does not suffice for ($\eps$-)optimal strategies for $\liminfmpobj$
or for infinitely branching $\liminftpobj$/$\liminfppobj$ in countable MDPs.

First we consider $\liminfmpobj$ by presenting an MDP adapted
from \cref{infinitegadget} that has the current total reward implicit in the
state and show that neither $\eps$-optimal nor almost-sure $\liminfmpobj$ can
be achieved by FR strategies (finite memory randomized).

\begin{figure*}
\begin{center}
\begin{tikzpicture}

\node[draw, circle, minimum width=1cm] (S1) at (0,0) {$s_{n}$};
\node[draw, circle, minimum width=1cm] (S2) at (12,0) {$s_{n+1}$};
\node[draw, minimum width=0.5cm, minimum height=0.5cm] (C) at (6,0) {$c_{n}$};

\node[draw, circle] (P1) at (2,2) {};
\node[draw, circle] (P2) at (2,0.67) {};
\node[draw, circle] (P3) at (2,-0.67) {};
\node[draw, circle] (P4) at (2,-2) {};

\node[draw, circle] (P5) at (4,2) {};
\node[draw, circle] (P6) at (4,0.67) {};
\node[draw, circle] (P7) at (4,-0.67) {};
\node[draw, circle] (P8) at (4,-2) {};

\node[draw, circle] (Q1) at (9,2) {};
\node[draw, circle] (Q2) at (9,0.67) {};
\node[draw, circle] (Q3) at (9,-0.67) {};
\node[draw, circle] (Q4) at (9,-2) {};

\node[draw, circle, minimum width=0.5cm] (Dead) at (7,-3) {$\perp$};

\node[] (I1) at (2,1.8) {};
\node[] (I2) at (2,0.87) {};
\node[] (I3) at (2,0.47) {};
\node[] (I4) at (2,-0.47) {};

\node[] (I5) at (4,1.8) {};
\node[] (I6) at (4,0.87) {};
\node[] (I7) at (4,0.47) {};
\node[] (I8) at (4,-0.47) {};

\node[] (I9) at (9,1.8) {};
\node[] (I10) at (9,0.87) {};
\node[] (I11) at (9,0.47) {};
\node[] (I12) at (9,-0.47) {};

\draw[dotted, thick] (P1) -- (P5);
\draw[dotted, thick] (P2) -- (P6);
\draw[dotted, thick] (P3) -- (P7);
\draw[dotted, thick] (P4) -- (P8);

\draw[dotted, thick] (I1) -- (I2);
\draw[dotted, thick] (I3) -- (I4);
\draw[dotted, thick] (I5) -- (I6);
\draw[dotted, thick] (I7) -- (I8);

\draw[->,>=latex] 
(S1) edge node[above, midway, sloped]{\scriptsize$\delta_{k(n)}(n)$} (P1)
(S1) edge node[above, midway, sloped]{\scriptsize$\delta_{i}(n)$} (P2)
(S1) edge node[above, midway, sloped]{\scriptsize$\delta_{1}(n)$} (P3)
(S1) edge node[below, midway, sloped]{\scriptsize$\delta_{0}(n)$} (P4);

\draw[->,>=latex] 
(P5) edge (C)
(P6) edge (C)
(P7) edge (C)
(P8) edge (C);

\draw [decorate, decoration={brace,amplitude=8pt},xshift=0pt,yshift=0pt]
(2.2,2) -- (3.8,2) node [black,midway,above, yshift=5pt] {\scriptsize $n m_{n}^{k(n)}$ steps};
\draw [decorate, decoration={brace,amplitude=8pt},xshift=0pt,yshift=0pt]
(2.2,0.67) -- (3.8,0.67) node [black,midway,above, yshift=5pt] {\scriptsize $nm_{n}^{i}$ steps};
\draw [decorate, decoration={brace,amplitude=8pt},xshift=0pt,yshift=0pt]
(2.2,-0.67) -- (3.8,-0.67) node [black,midway,above, yshift=5pt] {\scriptsize $nm_{n}$ steps};
\draw [decorate, decoration={brace,amplitude=8pt},xshift=0pt,yshift=0pt]
(2.2,-2) -- (3.8,-2) node [black,midway,above, yshift=5pt] {\scriptsize $n$ steps};

\draw[->,>=latex] 
(C) edge node [midway, above, sloped]{\scriptsize $-m_{n}^{k(n)}$} (Q1)
(C) edge node [midway, above, sloped]{\scriptsize $-m_{n}^{i}$} (Q2)
(C) edge node [midway, above, sloped]{\scriptsize $-m_{n}$} (Q3)
(C) edge node [midway, above, sloped, pos=0.4]{\scriptsize $-1$} (Q4);

\draw[->,>=latex] 
(Q1) edge node [midway, above, sloped]{\scriptsize $+m_{n}^{k(n)}$} (S2)
(Q2) edge node [midway, above, sloped]{\scriptsize $+m_{n}^{i}$} (S2)
(Q3) edge node [midway, above, sloped]{\scriptsize $+m_{n}$} (S2)
(Q4) edge node [midway, above, sloped, pos=0.6]{\scriptsize $+1$} (S2);

\draw[->,>=latex] 
(Q2) edge node [near end, left]{\scriptsize $\eps_{i}(n)$} (Dead)
(Q3) edge node [pos=0.65, right]{\scriptsize $\eps_{1}(n)$} (Dead)
(Q4) edge node [midway, below]{\scriptsize $\eps_{0}(n)$} (Dead);

\draw[dotted, thick] (I9) -- (I10);
\draw[dotted, thick] (I11) -- (I12);

\end{tikzpicture}
\caption{All transition rewards are $0$ unless specified. Recall that $\sum \delta_{i}(n) \cdot \eps_{i}(n)$ is convergent and $\sum \delta_{j}(n) \cdot \eps_{i}(n)$ is divergent for all $i,j$ with $j > i$. The negative reward incurred before falling into the $\perp$ state is reimbursed. We do not show it in the figure for readability.
In the state before $s_{n+1}$, if the correct transition was chosen, the mean payoff is $-1/n$. If the incorrect transition was chosen, then either the mean payoff is $<-m_{n}/n$, or the risk of falling into $\perp$ is too high.
}
\label{stepcounter}
\end{center}
\end{figure*}
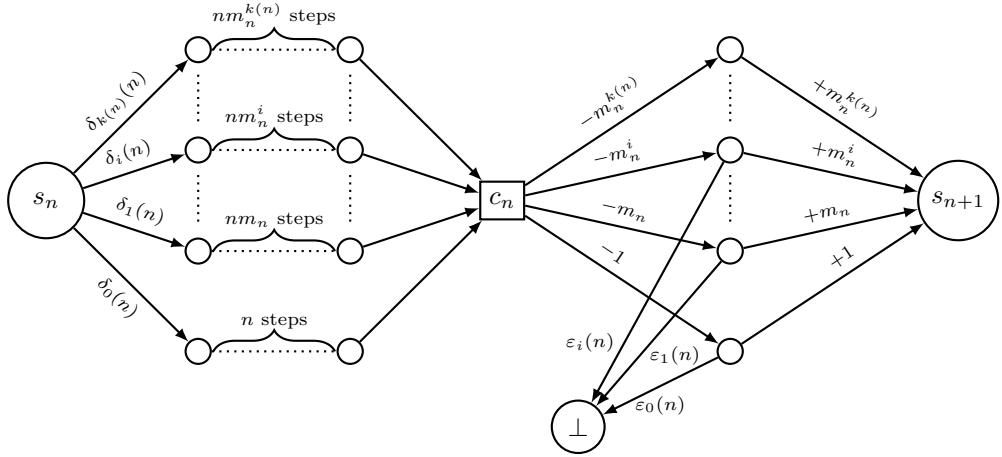

We use the example from \cref{stepcounter}. It is very similar to \cref{infinitegadget}, but differs in the following ways. 
\begin{itemize}
\item The current total reward level is implicit in each state. 
\item The step counter is no longer implicit in the state.
\item In the random choice, instead of changing the reward levels in each choice, it is the path length that differs.
\item The definition of $m_{n}$ is different, it is now $m_{n} \defeq \sum_{i = N^{*}}^{n-1} m_{i}^{k(n)}$ with $m_{N^{*}} \defeq 1$.
\end{itemize}

We construct a finitely branching acyclic MDP $\mdp_{\text{RI}}$ (Reward Implicit) which has the total reward implicit in the state. We do so by chaining together the gadgets from \cref{stepcounter} as is shown in \cref{chain}.

\thmmpstepepslower*
\begin{proof}
This follows from \cref{liminfmpstepval1} and \cref{liminfmpstepval0}.
\end{proof}

\begin{lemma}\label{liminfmpstepval1}
 $\valueof{\mdp_{\text{\emph{RI}}}, \liminfmpobj}{(s_{0},0)} = 1$.
\end{lemma}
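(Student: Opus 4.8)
The plan is to mirror the strategy-construction argument of \cref{infwin}, adapted to the new gadget of \cref{stepcounter} where it is the path \emph{length} rather than the reward level that varies across the random branches. First I would define the mimicking strategy $\sigma$: in each controlled state $c_n$, $\sigma$ plays the $i$-th outgoing transition exactly when the preceding random transition selected the $i$-th branch. Under $\sigma$, within any gadget that is not aborted into a $\bot$ state, the negative reward $-m_n^{i}$ (resp.\ $-1$) incurred at $c_n$ is exactly cancelled by the subsequent $+m_n^{i}$ (resp.\ $+1$), and before $s_{n+1}$ the accumulated reward is $0$ (the pre-$\bot$ penalty is reimbursed by construction). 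Hence every run consistent with $\sigma$ that never hits $\bot$ keeps its total reward bounded below (in fact the partial sums return to $0$ infinitely often), so its mean payoff has $\liminf \ge 0$; that is, such a run is winning for $\liminfmpobj$. The only way to lose under $\sigma$ is to fall into a $\bot$ state, and in the $n$-th gadget this happens with probability $\sum_{j=0}^{k(n)-1}\delta_j(n)\varepsilon_j(n)$.

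Next I would estimate the survival probability. As in \cref{infwin}, the probability that a run from $s_{N^{*}}$ played under $\sigma$ never hits $\bot$ is
\[
\prod_{n \ge N^{*}} \Bigl(1 - \sum_{j=0}^{k(n)-1}\delta_j(n)\varepsilon_j(n)\Bigr),
\]
which by \cref{prop:product-sum} is strictly positive iff $\sum_{n\ge N^{*}}\sum_{j=0}^{k(n)-1}\delta_j(n)\varepsilon_j(n)<\infty$. The latter finiteness is exactly the computation already carried out in \cref{infwin}, using the definition of $k(n)$ and the function $g$ from \cref{def:kn}: regrouping the double sum gives $\sum_{i\ge 1}\sum_{n=g(i)}^{\infty}\delta_{i-1}(n)\varepsilon_{i-1}(n) \le \sum_{i\ge 1}2^{-i}\le 1$. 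So $\sigma$ wins with positive probability from $(s_0,0)$.

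To get value $1$, I would invoke the skipping mechanism: the chain in \cref{chain} (used here to connect the gadgets of \cref{stepcounter}) lets a strategy skip an arbitrarily long prefix of gadgets while preserving path length, with the white-state rewards reimbursing all reward lost during skipping, so that a run which skips a prefix and then plays $\sigma$ never lets its total reward dip below $0$ and is therefore winning unless it later hits $\bot$. Defining $\sigma_\varepsilon$ to skip to the $N_\varepsilon$-th gadget (for $N_\varepsilon$ chosen via \cref{prop:tail-product} so that the tail product $\prod_{n\ge N_\varepsilon}(1-\sum_j\delta_j(n)\varepsilon_j(n)) \ge 1-\varepsilon$) yields $\probm_{\mdp_{\text{RI}},(s_0,0),\sigma_\varepsilon}(\liminfmpobj)\ge 1-\varepsilon$ for every $\varepsilon>0$, hence $\valueof{\mdp_{\text{RI}},\liminfmpobj}{(s_0,0)}=1$. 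The main point requiring care — the only genuine obstacle — is verifying that the redefined rewards $m_n \defeq \sum_{i=N^{*}}^{n-1} m_i^{k(n)}$ still dominate all rewards accumulatable in earlier gadgets strongly enough that a \emph{correct} (mimicking) choice in gadget $n$ genuinely returns the total reward to $0$ and that playing into the wrong branch is as damaging as claimed; this is a routine but slightly delicate bookkeeping check on the magnitudes, and once it is in place the rest of the argument is the same series-convergence reasoning as in \cref{infwin}.
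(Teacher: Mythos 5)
Your overall architecture matches the paper's (mimic in $c_n$, the only losing runs are those hitting $\perp$, the survival product and the prefix-skipping via \cref{prop:tail-product} are literally the same computation as in \cref{infwin}), but the one step that is genuinely new for $\mdp_{\text{RI}}$ is handled incorrectly. In the gadget of \cref{stepcounter}, unlike in \cref{infinitegadget}, the \emph{negative} reward $-m_n^{i}$ at $c_n$ comes \emph{before} the reimbursement $+m_n^{i}$. So under the mimicking strategy the total reward does dip below $0$ inside every gadget, and the size of the dip, $m_n^{i}$, is unbounded in $n$. Hence your claims that the run "keeps its total reward bounded below" and "never lets its total reward dip below $0$" are both false for this MDP; they are carried over from \cref{infwin}, where the positive reward is collected first. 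Moreover, the observation that the partial sums return to $0$ at every $s_{n+1}$ only yields $\limsup$ of the mean payoff $\ge 0$, not the required $\liminf \ge 0$, so as written the argument does not establish that non-$\perp$ mimicking runs satisfy $\liminfmpobj$.

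The missing ingredient is precisely the path-length padding that distinguishes this gadget: the random branch $i$ of the $n$-th gadget has length $n\,m_n^{i}$ with reward $0$, so at the moment the total reward dips to $-m_n^{i}$ the run has already taken at least $n\,m_n^{i}$ steps, and the mean payoff at the dip is at least $-m_n^{i}/(n\,m_n^{i}) = -1/n$. Since $-1/n \to 0$, every run that mimics forever and never hits $\perp$ has $\liminf$ mean payoff $\ge 0$; this is the paper's one-line justification ("the mean payoff in each gadget is lower bounded by $-1/n$"), after which the rest is indeed identical to \cref{infwin}. The "main point requiring care" you flag at the end (domination properties of the redefined $m_n$) concerns the lower-bound lemma for FR strategies, not this lemma; for the mimicking strategy the cancellation is exact and needs no domination argument, whereas the mean-payoff bound via the branch lengths is what actually needs to be said.
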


\begin{proof}
We define a strategy $\sigma$ which, in $c_{n}$ always mimics the random choice in $s_{n}$. 
Playing according to $\sigma$, the only way to lose is by dropping into the bottom state. This is because by mimicking, the mean payoff in each gadget is lower bounded by $-1/n$.
The rest of the proof is identical to \cref{infwin}.
\end{proof}

\begin{lemma}\label{liminfmpstepval0}
Any FR strategy $\sigma$ in $\mdp_{\text{\emph{RI}}}$ is such that $\probm_{\mdp_{\text{\emph{RI}}}, s_{0}, \sigma}(\liminfmpobj)=0$.
\end{lemma}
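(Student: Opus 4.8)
The plan is to follow the proof of \cref{inflose} closely: $\mdp_{\text{RI}}$ is, like $\mathcal M$, an acyclic linear chain of gadgets, now using the gadget of \cref{stepcounter} instead of the one of \cref{infinitegadget}. Fix an FR strategy $\sigma$ with $k$ memory modes, and call it a \emph{local error} $\mathit{Bad}_n$ if the run either reaches $\perp$ inside the $n$-th gadget or sees a mean payoff $\le -1$ somewhere inside it. It suffices to produce, for all sufficiently large $n$, a deterministic number $e_n$ lower bounding $\probm(\mathit{Bad}_n \mid \text{history before }s_n)$ regardless of $\sigma$'s memory mode on entering $s_n$ and of events in earlier gadgets, and with $\sum_n e_n=\infty$; then \cref{prop:product-sum} gives $\prod_n(1-e_n)=0$, and the final chain of inequalities of \cref{inflose} (using $\liminfmpobj \subseteq \eventually\always\neg\mathit{Bad}$ and continuity of measures, now with a variable but finite number of steps per gadget) yields $\probm_{\mdp_{\text{RI}},s_0,\sigma}(\liminfmpobj)=0$. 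As in \cref{inflose}, randomizing the memory update after the random step does not reduce $\probm(\mathit{Bad}_n)$ (a single decision, finitely many modes, a one-dimensional criterion), so I may assume deterministic memory updates.

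The pigeonhole step must be slightly strengthened. Take $n$ large enough that $k(n)>k+1$. For any entry mode the update function maps the $k(n)$ branches $0,\dots,k(n)-1$ into $\le k$ modes at $c_n$, and a colliding pair $i(n)<j(n)$ with $j(n)\ge 2$ must exist: otherwise the only collision would be $\{0,1\}$, so branches $0,2,3,\dots,k(n)-1$ would occupy $k(n)-1>k$ distinct modes. The condition $j(n)\ge 2$ is the essential difference from \cref{inflose}: in \cref{stepcounter} a ``wrong'' controlled choice damages the mean payoff only when the penalty weight dominates the elapsed step count, and for the confused pair $(0,1)$ it does not (one computes a dip of only $\approx -1/n$ there). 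As in \cref{claim:confusion-simple}, among all distributions at $c_n$ it is (weakly) best for $\sigma$ to randomize only over transitions $i(n)$ and $j(n)$, say taking the $j(n)$-th with probability $\alpha_n$.

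The heart of the adaptation is the step-count bookkeeping. Every branch taken in a gadget $\ell<n$ has length at most $\ell\,m_\ell^{k(\ell)}$, and since $k$ is non-decreasing and the $m_\ell\ge 1$ we get $\sum_{\ell<n}\ell\,m_\ell^{k(\ell)}\le n\sum_{\ell<n}m_\ell^{k(n)}=n m_n$; hence, whatever the earlier random choices were, after taking branch $i(n)$ in the $n$-th gadget the number of steps elapsed upon reaching the state just after $c_n$ is at most $n m_n + n m_n^{i(n)} + O(n)$. Since $j(n)>i(n)$, $j(n)\ge 2$, and $m_n/n\to\infty$ by \cref{def:kn}, for all large $n$ this bound is $\le m_n^{j(n)}$, so taking branch $i(n)$ followed by the $-m_n^{j(n)}$ transition forces a mean payoff $\le -1$; by contrast, taking the matching transition for branch $j(n)$, or transition $i(n)$ from branch $j(n)$, keeps the mean payoff above $-1$ but still carries the fixed risks $\eps_{j(n)}(n)$, resp.\ $\eps_{i(n)}(n)$, of landing in $\perp$. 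Tabulating the four (branch, transition) combinations gives exactly
\[
e_n \;=\; \delta_{j(n)}(n)\bigl(\alpha_n\varepsilon_{j(n)}(n)+(1-\alpha_n)\varepsilon_{i(n)}(n)\bigr)\;+\;\delta_{i(n)}(n)\bigl(\alpha_n+(1-\alpha_n)\varepsilon_{i(n)}(n)\bigr),
\]
the quantity of \cref{alglose}; replacing it if necessary by the minimum over the finitely many possible entry modes still yields sequences $i(n),j(n),\alpha_n$ to which \cref{alglose} applies, so $\sum_n e_n=\infty$ (and $n\ge k^{-1}(2)$ holds automatically since $k(n)>k+1$).

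I expect the main obstacle to be exactly this $j(n)\ge 2$ point together with the uniform step-count estimate: unlike in \cref{infinitegadget}, not every mismatch in the controlled state is fatal, so one must both (a) refine the pigeonhole to discard the harmless confusion of branches $0$ and $1$, and (b) establish the step-count bound above, uniformly in the unknown earlier behaviour of the run, to certify that every remaining mismatch really does push the mean payoff below $-1$. The remaining ingredients --- deterministic memory updates, the reduction to the two confused transitions, and the $\prod(1-e_n)=0$ endgame --- carry over from \cref{inflose} with only notational changes, and the argument for $\liminftpobj$ (\cref{infinitesummarytp}'s finitely branching analogue is not claimed here, but the $\mdp_{\text{Restart}}$ version in \cref{mpstepoptlower}) is obtained the same way.
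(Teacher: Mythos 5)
Your overall route is the paper's route: the paper's own proof of \cref{liminfmpstepval0} simply asserts that the argument of \cref{inflose} carries over, and you reconstruct that argument (per-gadget error lower bound $e_n$, divergence via \cref{alglose}/\cref{claim:divergence}, then $\prod(1-e_n)=0$ by \cref{prop:product-sum}). The two adaptations you single out are real and are exactly what the terse "same as \cref{inflose}" hides: in the gadget of \cref{stepcounter} the elapsed step count (not the accumulated reward) is what the penalties must dominate, so one needs your uniform bound on the steps elapsed before the controlled choice ($\le n m_n + n m_n^{i(n)} + O(n)$, which is correct), and one must notice that the mismatch ``branch $0$, play transition $1$'' is \emph{not} guaranteed to push the mean payoff below $-1$ when the history is long, so the pigeonhole has to be refined. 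Your refined pigeonhole ($k(n)>k+1$ forces a confused pair with $j(n)\ge 2$) is also correct.

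However, there is a concrete gap in the step where you import \cref{claim:confusion-simple}: in this gadget it is \emph{not} true that restricting the strategy to randomize over the two confused transitions $i(n),j(n)$ is weakly worst-case, precisely because of the same phenomenon you identified. If the confused pair is $(0,j(n))$ with $j(n)\ge 2$ (which your pigeonhole permits), the strategy may put all its mass on the \emph{intermediate} transition $1$: after branch $0$ this is not guaranteed fatal (penalty $-m_n$ against up to $\approx n m_n$ elapsed steps, a dip of only $\approx -1/n$, and such long histories occur with high probability since the top branch dominates each earlier gadget), and it carries risk $\eps_1(n)<\eps_0(n)$; after branch $j(n)$ it carries risk $\eps_1(n)<\eps_0(n)$ as well. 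Its guaranteed Bad-probability $(\delta_0(n)+\delta_{j(n)}(n))\eps_1(n)$ is strictly below both endpoints $e_n(0)=(\delta_0+\delta_{j(n)})\eps_0$ and $e_n(1)=\delta_{j(n)}\eps_{j(n)}+\delta_0$ of your tabulated bound, so your $e_n$ is not a valid history-uniform lower bound for such pairs. The patch is small and uses only your own tools: either apply the pigeonhole to the branches $1,\dots,k(n)-1$ (still $k(n)-1>k$ of them under the same hypothesis) so that $i(n)\ge 1$, after which every transition $x>i(n)$ satisfies $x\ge 2$ and your step-count bound makes it fatal after branch $i(n)$, and the \cref{claim:confusion-simple}-style case analysis goes through verbatim; or keep $i(n)=0$ but add the intermediate-transition case to the tabulation and note that its probability is at least $\delta_{j(n)}(n)\eps_1(n)\ge\eps_{j(n)}(n)\ge\eps_{k(n)-1}(n)$, whose sum diverges by \cref{claim:divergence}, so the product still vanishes. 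With that repair your proof is complete, and in fact more careful than the paper's, whose figure-caption dichotomy (``mean payoff $<-m_n/n$ or the $\perp$-risk is too high'') is itself not literally true for the $(0,1)$ mismatch you flagged.
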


\begin{proof}
When playing with finitely many memory modes, there are two ways for a run in $\mdp_{\text{RI}}$ to lose. Either it falls into a losing sink, or it never falls into a sink but its mean payoff is $<-1$. The proof that either of these occurs with probability $1$ is the same as in 
\cref{inflose}.
\end{proof}

Now we construct the MDP $\mdp_{\text{Restart}}$ by chaining together the gadgets from \cref{stepcounter} in the way shown in \cref{restart}.

\thmmpstepoptlower*
\begin{proof}
This follows from \cref{liminfmpstepam1} and \cref{liminfmpstepam0}.
\end{proof}

\begin{lemma}\label{liminfmpstepam1}
There exists an HD strategy $\sigma$ such that $\probm_{\mdp_{\text{\emph{Restart}}}, s_{0}, \sigma}(\liminfmpobj)=1$.
\end{lemma}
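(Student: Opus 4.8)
The plan is to reproduce, essentially verbatim, the argument of \cref{almostwin}, replacing the appeal to \cref{infwin} by an appeal to \cref{liminfmpstepval1} and the mimicking strategy underlying it. First I would recall that strategy: within one copy of the chain of \cref{stepcounter}-gadgets, after skipping forward over sufficiently many gadgets, always play in the controlled state $c_n$ the same branch that the preceding random transition chose at $s_n$. As in \cref{infwin}, this is deterministic, and skipping far enough yields a strategy $\sigma_{1/2}$ whose probability of ever reaching a $\perp$/restart state inside a single copy of the chain is at most $1/2$. Crucially, mimicking keeps the mean payoff $\ge -1/n$ at the worst moment of the $n$-th gadget before the dip is reimbursed, so along any run that never restarts, mimicking surely satisfies $\liminfmpobj$; this is exactly what the proof of \cref{liminfmpstepval1} (like that of \cref{infwin}) establishes.

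Next I would define the desired HD strategy $\sigma$ on $\mdp_{\text{Restart}}$ by concatenation: at the start of each row of \cref{restart} (each fresh copy of the chain) $\sigma$ restarts its internal bookkeeping and plays exactly like $\sigma_{1/2}$. This is deterministic, and the history is used only to locate the current row and the position within it, so $\sigma$ is HD. Writing $\playset$ for the set of runs induced by $\sigma$ from $s_0$, I would partition $\playset$ into the sets $\playset_i$ of runs restarting exactly $i$ times ($i \in \N$) and the set $\playset_{\infty}$ of runs restarting infinitely often. By construction of $\mdp_{\text{Restart}}$ every restart drives the mean payoff below $-1$, so $\playset_{\infty} \subseteq \neg\liminfmpobj$; and since in each row $\sigma$ plays $\sigma_{1/2}$, which restarts with probability at most $1/2$ regardless of earlier rows, an induction gives $\probm_{\mdp_{\text{Restart}},s_0,\sigma}(\playset_{\ge i}) \le 2^{-i}$, whence continuity of measure from above yields $\probm_{\mdp_{\text{Restart}},s_0,\sigma}(\playset_{\infty}) = 0$.

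It then remains to handle the finitely-many-restarts case. A run in $\playset_i$ spends an infinite suffix inside its $(i{+}1)$-st row, in which $\sigma$ behaves like $\sigma_{1/2}$ and, by assumption, never restarts again; applying \cref{liminfmpstepval1} to that row (exactly as \cref{infwin} is used inside the proof of \cref{almostwin}) gives $\probm(\playset_i) = \probm(\playset_i \wedge \forall j.\,\always\neg r_{j,i+1}) = \probm(\playset_i \wedge \liminfmpobj)$, i.e.\ all but a nullset of $\playset_i$ is winning. Summing over the partition, $1 = \probm(\playset) = \sum_{i} \probm(\playset_i \wedge \liminfmpobj) + \probm(\playset_{\infty}) = \probm(\liminfmpobj)$, which is the claim.

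The high-level bookkeeping is identical to \cref{almostwin}; the only genuine point to verify is that the two quantitative facts borrowed from \cref{liminfmpstepval1}/\cref{infwin} really do carry over to the modified gadget of \cref{stepcounter}, where it is the path length rather than the reward that varies across the random branches and where $m_n$ has a different definition. Concretely one must check that mimicking still (i) keeps the per-row restart probability at most $1/2$ and (ii) yields $\liminf$ mean payoff $\ge 0$ on non-restarting runs, the latter because the momentary dip of $-m_n^{i}$ in the $n$-th gadget is spread over at least $n\,m_n^{i}$ steps and then refunded. I expect this verification, rather than the measure-theoretic skeleton, to be where the real work lies, though it is already implicit in the proof of \cref{liminfmpstepval1}.
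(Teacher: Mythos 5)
Your proposal is correct and matches the paper's own proof, which simply states that the argument is identical to that of \cref{almostwin} (with \cref{liminfmpstepval1} playing the role of \cref{infwin} for the modified gadgets of \cref{stepcounter}). The concatenation of $\sigma_{1/2}$ strategies, the partition by number of restarts, the $2^{-i}$ bound with continuity of measures, and the per-gadget mean-payoff bound of $-1/n$ under mimicking are exactly the ingredients the paper relies on.
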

\begin{proof}
The proof is identical to that of 
\cref{almostwin}.
\end{proof}

\begin{lemma}\label{liminfmpstepam0}
For any FR strategy $\sigma$, $\probm_{\mdp_{\text{\emph{Restart}}}, s_{0}, \sigma}(\liminfmpobj)=0$.
\end{lemma}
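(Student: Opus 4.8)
The plan is to reuse the proof of \cref{almostlose} essentially unchanged, with \cref{liminfmpstepval0} playing the role that \cref{inflose} played there. Fix an arbitrary FR strategy $\sigma$ and let $\playset$ be the set of runs it induces from $s_{0}$ in $\mdp_{\text{\emph{Restart}}}$. I would first partition $\playset$ by the number of restarts: for $i\in\N$ let $\playset_{i}$ be the set of runs that restart exactly $i$ times, and let $\playset_{\infty}$ be the set of runs that restart infinitely often, so that $\playset=\big(\bigcup_{i\in\N}\playset_{i}\big)\cup\playset_{\infty}$.

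For $\playset_{\infty}$ the argument is immediate: by the construction of \cref{restart} each restart first collects a negative reward which, by the choice of $m_{n}$, instantly pulls the mean payoff below $-1$ before being refunded, so every run that restarts infinitely often has mean payoff $\le -1$ infinitely often; hence $\playset_{\infty}\subseteq\neg\liminfmpobj$. For a fixed $i$, a run in $\playset_{i}$ spends an infinite tail inside the $(i+1)$-st copy of $\mdp_{\text{\emph{RI}}}$ and never visits a restart (former $\perp$) state again. Since $\sigma$ is FR, the substrategy it induces from the entry state of that copy is again FR, so \cref{liminfmpstepval0} (which is \cref{inflose} applied to the gadget of \cref{stepcounter}) gives that, with probability $1$, such a run either sees a mean payoff $\le -1$ infinitely often or reaches a restart state of that copy; on $\playset_{i}$ the latter is impossible, whence $\probm_{\mdp_{\text{\emph{Restart}}},s_{0},\sigma}(\playset_{i})=\probm_{\mdp_{\text{\emph{Restart}}},s_{0},\sigma}(\playset_{i}\wedge\neg\liminfmpobj)$. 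Summing over the partition yields
\[
\probm_{\mdp_{\text{\emph{Restart}}},s_{0},\sigma}(\liminfmpobj)=\Big(\sum_{i\in\N}\probm_{\mdp_{\text{\emph{Restart}}},s_{0},\sigma}(\playset_{i}\wedge\liminfmpobj)\Big)+\probm_{\mdp_{\text{\emph{Restart}}},s_{0},\sigma}(\playset_{\infty}\wedge\liminfmpobj)=0 .
\]

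The one point that needs care is the last application of \cref{liminfmpstepval0}: it must hold for the induced substrategy on a single copy of $\mdp_{\text{\emph{RI}}}$ \emph{regardless of the reward level already accumulated before entering that copy}. This is exactly what the fast growth of $m_{n}\defeq\sum_{i=N^{*}}^{n-1}m_{i}^{k(n)}$ buys: any ``wrong'' choice in the $n$-th gadget of that copy forces a local mean payoff $\le -1$ even after the best possible compensation from all earlier gadgets, so the lower bounds $e_{n}$ of \cref{inflose} are valid uniformly in the entry memory mode and the accumulated reward. I expect this uniformity to be the only genuine obstacle; once it is granted, the bookkeeping with $\playset_{i}$, $\playset_{\infty}$, countable additivity over the partition, and (if one spells out $\playset_{\infty}$ more carefully) continuity of measures from above is routine and identical to \cref{almostlose}.
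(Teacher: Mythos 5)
Your proposal is correct and matches the paper's own argument: the paper proves this lemma by declaring it identical to \cref{almostlose}, i.e.\ partitioning runs by the number of restarts, discarding the infinitely-restarting runs by construction, and applying \cref{liminfmpstepval0} (the analogue of \cref{inflose}) to the final copy for the finitely-restarting runs. Your extra remark about uniformity of the lower bounds $e_n$ in the accumulated reward is exactly what the fast growth of $m_n$ is designed to guarantee, as in the paper's proofs.
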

\begin{proof}
The proof is identical to that of
 \cref{almostlose}.
\end{proof}

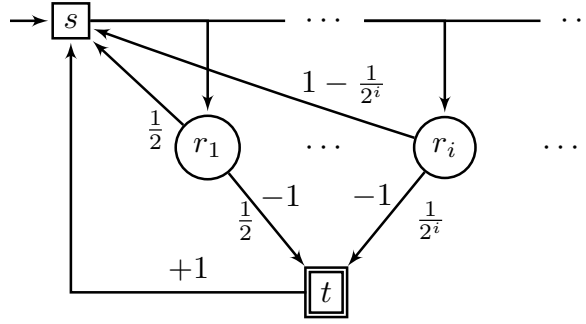
\begin{figure}
\begin{center}
\scalebox{1.2}{
\begin{tikzpicture}[>=latex',shorten >=1pt,node distance=1.9cm,on grid,auto,
roundnode/.style={circle, draw,minimum size=1.5mm},
squarenode/.style={rectangle, draw,minimum size=2mm},
diamonddnode/.style={diamond, draw,minimum size=2mm}]

\node [squarenode,initial,initial text={}] (s) at(0,0) [draw]{$s$};

\node[roundnode] (r1)  [below right=1.4cm and 1.5cm of s] {$r_1$};
\node[roundnode] (r3)  [draw=none,right=1.3 of r1] {$\cdots$};
\node[roundnode] (r4)  [right=1.3cm of r3] {$r_i$};
\node[roundnode] (r5)  [draw=none,right=1.3cm of r4] {$\cdots$};

\node [squarenode,double,inner sep = 4pt] (t)  [below=1.6cm of r3] {$ t $};

\draw [->] (s) -- ++(1.5,0) -- (r1);
\node[roundnode] (d)  [draw=none,right=2.8 of s] {$\cdots$};
\draw[-] (s) -- (d);
\draw [->] (d) -- ++(1.3,0) -- (r4);
\node[roundnode] (dd)  [draw=none,right=2.8 of d] {$\cdots$};
\draw[-] (d) -- (dd);

\path[->] (r1) edge node [midway,left] {$\frac{1}{2}$} node[near start, right] {$-1$} (t);
\path[->] (r4) edge node [midway,right=.2cm] {$\frac{1}{2^{i}}$} node[near start, left] {$-1$} (t);
\path[->] (r1) edge  node[pos=0.3,below] {$\frac{1}{2}$} (s);   
\path[->] (r4) edge [bend left=1] node [pos=0.2,above] {$1-\frac{1}{2^{i}}$} (s);  

\draw [->] (t) -- node[midway, above] {$+1$} ++(-2.8,0) -- (s);
\end{tikzpicture}
}
\caption{ 
We present an infinitely branching MDP adapted from \cite[Figure 3]{KMSW2017} and augmented with a reward structure.
All of the edges carry reward $0$ except the edges entering $t$ that carry reward $-1$ and the edge from $t$ to $s$ carries reward $+1$.
As a result, entering $t$ necessarily brings the total reward down to $-1$ before resetting it to $0$.
We use a reduction to co-B\"{u}chi to show that infinite memory is required for almost-sure as well as $\eps$-optimal strategies for $\liminftpobj$ as well as $\liminfppobj$.
}
\label{infinitebranchtp}
\end{center}
\end{figure}

\thminfbranchsteplower*
\begin{proof}
This follows directly from \cite[Theorem 4]{KMSW2017} and the observation that in \cref{infinitebranchtp}, $\liminftpobj$, $\liminfppobj$ and co-B\"{u}chi objectives coincide.
\end{proof}

Consequently, when the MDP $\mdp$ is infinitely branching and has the reward counter implicit in the state, both $\liminftpobj$ and $\liminfppobj$ require at least a step counter.


\newpage
\section{Missing proofs from Section~\ref{sec:upper}}\label{app:upper}
\defencodereward

\lemmatotaltopoint*
\lemmatotaltopointproof

\defencodeam


\subsection{Proofs from Section~\ref{subsec:upper-fb}}\label{app:upper-fb}

In this section we consider finitely branching MDPs.
We need the following technical lemma that holds only for finitely
branching MDPs.

\lemfbavoidstatement
\lemfbavoidproof

\claimfblosetwoeps*
\begin{proof}
\begin{align*}
& \mathcal{P}_{\mdp',\state_0,\sigma} \left( \bigcap_{i \in \N} \text{Safety}^{n_{i}}_{i} \right) 
\\
& \geq
\mathcal{P}_{\mdp',\state_0,\sigma} \left(
\bigcap_{k \in \N}  \eventually(\text{Safety}_{k}) \cap \bigcap_{i \in \N} \text{Safety}^{n_{i}}_{i} 
\right) \\
& =  \mathcal{P}_{\mdp',\state_0,\sigma} \left( \left( \bigcap_{k \in \N}  \eventually(\text{Safety}_{k}) \cap \bigcap_{i \in \N} \text{Safety}^{n_{i}}_{i} \right)
\cup \left(\overline{\bigcap_{k \in \N}  \eventually(\text{Safety}_{k})} \cap \bigcap_{k \in \N}  \eventually(\text{Safety}_{k})\right)
\right) \\
& = \mathcal{P}_{\mdp',\state_0,\sigma} \left(  \bigcap_{k \in \N}  \eventually(\text{Safety}_{k}) \cap 
\left(\bigcap_{i \in \N} \text{Safety}^{n_{i}}_{i} \cup \overline{\bigcap_{k \in \N}  \eventually(\text{Safety}_{k})} \right)
\right) \\
& = 1 - \mathcal{P}_{\mdp',\state_0,\sigma} \left(  \overline{ \bigcap_{k \in \N}  \eventually(\text{Safety}_{k})} \cup 
\left(\overline{\bigcap_{i \in \N} \text{Safety}^{n_{i}}_{i}} \cap \bigcap_{k \in \N}  \eventually(\text{Safety}_{k})\right)
\right) \\
& \geq 1 - \mathcal{P}_{\mdp',\state_0,\sigma} \left(  \overline{ \bigcap_{k \in \N}  \eventually(\text{Safety}_{k})} \right)
 -
\mathcal{P}_{\mdp',\state_0,\sigma} \left(\overline{\bigcap_{i \in \N} \text{Safety}^{n_{i}}_{i}} \cap \bigcap_{k \in \N}  \eventually(\text{Safety}_{k})\right) \\
& = \mathcal{P}_{\mdp',\state_0,\sigma} \left(\liminfppobj\right)
 -
\mathcal{P}_{\mdp',\state_0,\sigma} \left(\overline{\bigcap_{i \in \N} \text{Safety}^{n_{i}}_{i}} \cap \bigcap_{k \in \N}  \eventually(\text{Safety}_{k})\right)
& \ \text{by \eqref{eq:fbliminfppissafety}}\\
& \geq \valueof{\mdp',\liminfppobj}{s_0} - \eps -
\mathcal{P}_{\mdp',\state_0,\sigma} \left(\bigcup_{i \in \N} \overline{\text{Safety}^{n_{i}}_{i}} \cap \bigcap_{k \in \N}  \eventually(\text{Safety}_{k})\right)
& \ \text{by \eqref{eq:fbsigma-eps-opt}}\\
& \geq \valueof{\mdp',\liminfppobj}{s_0} - \eps
 - \sum_{i \in \N} \mathcal{P}_{\mdp',\state_0,\sigma} \left( \overline{\text{Safety}^{n_{i}}_{i}} \cap \bigcap_{k \in \N} \eventually(\text{Safety}_{k})
\right) \\
& \geq \valueof{\mdp',\liminfppobj}{s_0} - \eps - \sum_{i \in \N} \eps_{i}
& \ \text{by \eqref{eq:fbni}}\\
& = \valueof{\mdp',\liminfppobj}{s_0} - 2 \eps
\end{align*}
\end{proof}

\eqliminfpptransience*
\begin{proof}
First we show that 
\begin{equation}\label{eq:transience-part-liminf}
\transience \subseteq \liminfppobj \quad \text{in $\mdp''$}.
\end{equation}
Let $\rho \in \transience$ be a transient run. Then $\rho$ can never visit the
state $\perp$. Moreover, $\rho$ must eventually leave every finite set
forever. In particular $\rho$ must satisfy $\eventually \always (\neg \text{Bubble}_{n_i}(s_0))$
for every $i$, since $\text{Bubble}_{n_i}(s_0)$ is finite, because $\mdp''$ is
finitely branching. Thus $\rho$ must either fall into $G_{\text{safe}}$,
in which case it satisfies $\liminfppobj$, or for every $i$,
$\rho$ must eventually leave $\text{Bubble}_{n_i}(s_0)$ forever.
By definition of $\text{Bubble}_{n_i}(s_0)$ and $\mdp''$, the run
$\rho$ must eventually stop seeing rewards $< -2^{-i}$ for every $i$.
In this case $\rho$ also satisfies $\liminfppobj$. Thus \eqref{eq:transience-part-liminf}.

Secondly, we show that
\begin{equation}\label{eq:liminf-part-transience}
\forall \sigma''.\ \probm_{\mdp'',s_0,\sigma''}(\liminfppobj \cap \overline{\transience})=0.
\end{equation}
i.e., except for a null-set, $\liminfppobj$ implies $\transience$ in $\mdp''$.

Let $\sigma''$ be an arbitrary strategy from $s_0$ in $\mdp''$ and
$\playset$ be the set of all runs induced by it.
For every $s \in S$, let $\playset_{s} \defeq \{\rho \in \playset \mid \rho \text{ satisfies } \always \eventually (s) \}$
be the set of runs seeing state $s$ infinitely often.
In particular, any run $\rho \in \playset_{s}$ is not transient.
Indeed, $\overline{\transience} = \bigcup_{s \in S} \playset_{s}$.
We want to show that for every state $s \in S$ and strategy $\sigma''$
\begin{equation}\label{eq:liminf-part-transience-s}
\probm_{\mdp'',s_0,\sigma''}(\liminfppobj \cap \playset_{s}) = 0.
\end{equation}
Since any runs seeing a state in $G_{\text{safe}}$ are transient, any
$\playset_{s}$ with $s \in G_{\text{safe}}$ must be empty. Similarly, any run
seeing $\perp$ is losing for $\liminfppobj$ by construction.
Hence we have \eqref{eq:liminf-part-transience-s}
for any state $s$ where $s = \perp$ or $s \in G_{\text{safe}}$.

Now consider $\playset_{s}$ where $s$ is neither in $G_{\text{safe}}$ nor
$\perp$.
Let $T_{\it neg} \eqdef \{t \in \longrightarrow\ \mid\ r(t) < 0\}$
be the subset of transitions with negative rewards in $\mdp''$.

We now show that $\valueof{\mdp'',\neg\eventually T_{\it neg}}{s} < 1$
by assuming the opposite and deriving a contradiction.
Assume that $\valueof{\mdp'',\neg\eventually T_{\it neg}}{s} = 1$.
The objective $\neg\eventually T_{\it neg}$ is a safety objective.
Thus, since $\mdp''$ is finitely branching, there exists a strategy
from $s$ that surely avoids $T_{\it neg}$ (always pick an optimal
move) \cite{Puterman:book,KMSW2017}.
(This does not hold in infinitely branching MDPs where optimal moves might
not exist.)
However, by construction of $\mdp''$, this implies that 
$s \in G_{\text{safe}}$. Contradiction.
Thus $\valueof{\mdp'',\neg\eventually T_{\it neg}}{s} < 1$.

Since $\mdp''$ is finitely branching, we can apply \cref{lem:fbavoid}
and obtain that there exists a threshold $k_s$ such that
$\valueof{\mdp'',\neg\eventually^{\le k_s} T_{\it neg}}{s} < 1$.
Therefore $\delta_s \eqdef 1 - \valueof{\mdp'',\neg\eventually^{\le k_s}
T_{\it neg}}{s} >0$.
Thus, under every strategy, upon visiting $s$ there is a chance $\ge \delta_s$
of seeing a transition in $T_{\it neg}$ within the next $\le k_s$ steps.
Moreover, the subset $T^s_{\it neg} \subseteq T_{\it neg}$
of transitions that can be reached
in $\le k_s$ steps from $s$ is finite, since $\mdp''$ is finitely branching.
So the maximum of the rewards in $T^s_{\it neg}$ is still negative, i.e.,
$\ell_s \eqdef \max\{r(t)\ \mid\ t \in T^s_{\it neg}\} < 0$.
Let $T_{\le \ell} \eqdef \{t \in \longrightarrow\ \mid\ r(t) \le \ell_s\}$
be the subset of transitions with rewards $\le \ell_s$ in $\mdp''$.

Thus, under \emph{every} strategy, upon visiting $s$ there is a chance $\ge \delta_s$
of seeing a transition in $T_{\le \ell}$ within the next $\le k_s$ steps.

Define $\playset_{s}^{i} \defeq \{ \rho \in \playset \mid \rho \text{ sees $s$
at least $i$ times} \}$, so we get $\playset_{s} = \bigcap_{i \in \N} \playset_{s}^{i}$.
We obtain
\begin{align*}
& \sup_{\sigma''}\probm_{\mdp'',s_0,\sigma''}(\liminfppobj \cap \playset_{s}) \\
& \le \sup_{\sigma''}\probm_{\mdp'',s_0,\sigma''}(\eventually\always\neg
T_{\le \ell} \cap \playset_{s}) & \text{set inclusion}\\
&
= \sup_{\sigma''}\lim_{n \to \infty}\probm_{\mdp'',s_0,\sigma''}(\eventually^{\le
n}\always\neg T_{\le \ell} \cap \playset_{s}) &  \text{continuity of measures}\\
& \le \sup_{\sigma'''}\probm_{\mdp'',s,\sigma'''}(\always\neg
T_{\le \ell} \cap \playset_{s}) & \text{$s$ visited after $>n$ steps}\\
& = \sup_{\sigma'''} \probm_{\mdp'',s,\sigma'''}(\always\neg
T_{\le \ell} \cap \bigcap_{i \in \N} \playset_{s}^{i})
& \text{def.\ of $\playset_{s}^{i}$} \\
& = \sup_{\sigma'''} \lim_{i \to \infty}\probm_{\mdp'',s,\sigma'''}(\always\neg
T_{\le \ell} \cap \playset_{s}^{i})
&  \text{continuity of measures} \\
& \le \lim_{i \to \infty}(1-\delta_s)^i = 0 & \text{by def.\ of $\playset_{s}^{i}$ and $\delta_s$}
\end{align*}
and thus \eqref{eq:liminf-part-transience-s}.

From this we obtain
$\probm_{\mdp'',s_0,\sigma''}(\liminfppobj \cap \overline{\transience})=
\probm_{\mdp'',s_0,\sigma''}(\liminfppobj \cap \bigcup_{s \in S} \playset_{s}) \le
\sum_{s \in S} \probm_{\mdp'',s_0,\sigma''}(\liminfppobj \cap \playset_{s})=0$
and thus \eqref{eq:liminf-part-transience}.

From \eqref{eq:transience-part-liminf}
and \eqref{eq:liminf-part-transience}
we obtain that for every $\sigma''$ we have
\begin{align*}
& \probm_{\mdp'',s_0,\sigma''}(\liminfppobj) \\
& = \probm_{\mdp'',s_0,\sigma''}(\liminfppobj \cap \transience) + \probm_{\mdp'',s_0,\sigma''}(\liminfppobj \cap \overline{\transience}) \\
& = \probm_{\mdp'',s_0,\sigma''}(\transience) + 0 \\
& = \probm_{\mdp'',s_0,\sigma''}(\transience)
\end{align*}

and thus \cref{eqliminfpptransience}.
\end{proof}


\subsection{Proofs from Section~\ref{subsec:upper-ib}}\label{app:upper-ib}

In this section we consider infinitely branching MDPs.
In the following theorem we show
how to obtain $\eps$-optimal deterministic Markov strategies
for $\liminfppobj$.
We do this by deriving $\eps$-optimal MD strategies in $S(\mdp)$ via
a reduction to a safety objective.

\thminfpointpayoff*
\thminfpointpayoffproof

\cormpepsupper*
\cormpepsupperproof

\corinftpepsupper*
\corinftpepsupperproof

\begin{remark}
While $\eps$-optimal strategies for mean payoff and total payoff
(in infinitely branching MDPs) have the same memory requirements, the step counter and the reward counter do not arise in the same way.
Both the step counter and reward counter used in $\eps$-optimal strategies for mean payoff  arise from the construction of $A(\mdp)$. However, in the case for total payoff, only the reward counter arises from the construction of $R(\mdp)$. The step counter on the other hand arises from the Markov strategy needed for point payoff in $R(\mdp)$. 
\end{remark}

\corinfoptupper*
\corinfoptupperproof


\newpage
\section{Strengthening results}\label{app:strengthening}
We show that the counterexamples presented in \cref{sec:liminfreward} can be modified s.t.\ all transition rewards
are either $-1$, $0$, or $+1$ and the maximal degree of branching is $2$.
I.e., the hardness does not depend on arbitrarily large rewards or degrees of
branching.

Consider a new MDP $\mathcal{M}$ based on the MDP constructed in \cref{infinitegadget} which now undergoes the following changes.
The rewards on transitions are now limited to $-1$, $0$ or $1$.
To compensate for the smaller rewards, in the $n$-th gadget, each transition bearing a reward is replaced by $k(n) \cdot m_{n}$ transitions as follows. If the original transition had reward $j \cdot m_{n}$ then that transition is replaced with $j \cdot m_{n}$ transitions with reward $1$, and $(k(n) - j) \cdot m_{n}$ transitions with reward $0$. Symmetrically all negatively weighted transitions are similarly replaced by transitions with rewards $-1$ and $0$.

We further alter $\mdp$ by modifying \cref{infinitegadget} such that the
branching degree is bounded by $2$.
We do this by replacing the outgoing transitions in states $s_{n}$ and $c_{n}$ of each gadget by binary trees with accordingly adjusted probabilities such that there is still a probability of $\delta_{i}(n)$ of receiving reward $i \cdot m_{n}$ in each gadget for $i \in \{0, 1, ..., k(n)\}$.

To adjust for the increased path lengths incurred by the modifications to each gadget, the construction in \cref{chain} is accordingly modified by padding each vertical column of white states with extra transitions based on the number of transitions present in the matching gadget. As a result, path length is preserved even when skipping gadgets.
The construction in \cref{restart} is similarly modified.

This construction allows us to obtain the following properties.

\begin{remark}
There exists a countable, acyclic MDP $\mathcal{M}$, whose step counter is implicit in the state, whose rewards on transitions are in $\{-1, 0, 1\}$ and whose branching degree is bounded by $2$ for which 
$\valueof{\mdp,\liminfmpobj}{s_{0}} = 1$ and any FR strategy $\sigma$ is such that 
$\probm_{\mdp, s_{0}, \sigma}(\liminfmpobj)=0$.
In particular, there are no $\varepsilon$-optimal step counter plus finite memory strategies
for any $\varepsilon < 1$ for the
$\liminfmpobj$ objective for countable MDPs.
\end{remark}

This follows from \cref{infwin}, \cref{inflose} and the above construction.

\begin{remark}
There exists a countable, acyclic MDP $\mathcal{M}$, whose step counter is implicit in the state, whose rewards on transitions are in $\{-1, 0, 1\}$ and whose branching degree is bounded by $2$ for which  
$s_{0}$ is almost surely winning and any FR strategy $\sigma$ is such that 
$\probm_{\mdp, s_{0}, \sigma}(\liminfmpobj)=0$.
In particular, almost sure winning strategies, when they exist, cannot be chosen 
with a step counter plus finite memory for countable MDPs.
\end{remark}

This follows from \cref{almostwin}, \cref{almostlose} and the above construction.

\begin{remark}
The two previous remarks also hold for the $\liminftpobj$
objective with no modifications to their respective constructions or their proofs.
\end{remark}

\begin{remark}
The result from \cref{inflose} holds even for strategies $\sigma$ whose memory grows unboundedly, but slower than $k(n)-1$. That is to say that there exists a countable, acyclic MDP $\mathcal{M}$, whose step counter is implicit in the state such that $\valueof{\mdp,\liminfmpobj}{s_{0}} = 1$ and any strategy $\sigma$ with memory $< k(n) - 1$ is such that
 $\probm_{\mathcal{M}, \sigma, s_{0}}(\liminfmpobj)=0$.
The result then follows since in every gadget at least one memory mode will confuse at least two states $i(n), j(n): \mathbb{N} \to \{ 0, 1, ..., k(n)-1\}$.
 \end{remark}
 
\bigskip

The results from \cref{sec:liminfstep} can similarly be strengthened. Consider the construction in \cref{stepcounter}. In the random choice, the transition rewards are already all $+1$, so only the branching degree needs to be adjusted by padding the choice with a binary tree as above. In the controlled choice, the transitions carrying reward $ \pm m_{n}^{i}$ are replaced by $m_{n}^{i}$ transitions each bearing reward $\pm 1$ respectively. 
Therefore, the path lengths increase in the following way in the $n$-th
gadget. In $s_{n}$ and $c_{n}$, the binary trees increase path length by up to
$\lceil\lg(k(n)+1)\rceil$ (where $\lg$ is the logarithm to base $2$)
and after $c_{n}$ the path length increases by up to $m_{n}^{k(n)}$ twice.

Consider the scenario where the play took the $i$-th random choice and the
player makes the `best' mistake where they choose transition $i+1$.
We show that, even in this best error case (and thus in all other error
cases), the newly added path lengths do
still not help to prevent seeing a mean payoff $\le -1/2$ in the $n$-th gadget.
In this case, in the state between $c_{n}$ and $s_{n+1}$, the total payoff is $-m_{n}^{i+1}$ and the total number of steps taken by the play so far is upper bounded by 
\[
\beta_n \eqdef \left( \sum_{i=N^{*}}^{n-1} 2\lceil\lg(k(i)+1)\rceil + 2m_{i}^{k(i)} \right)
+ 2\lceil\lg(k(n)+1)\rceil + m_{n}^{i} + m_{n}^{i+1}.
\]
Recall that $m_{n} \defeq \sum_{i = N^{*}}^{n-1} m_{i}^{k(n)}$ with $m_{N^{*}} \defeq 1$, and this is the definition of $m_{n}$ from \cref{app:step} which is different from the definition of $m_{n}$ in \cref{sec:liminfreward}. Note that $k(n)$ is very slowly growing, so it follows that
\[
\beta_n \leq 3m_{n} + m_{n}^{i} + m_{n}^{i+1} \leq 2m_{n}^{i+1}.
\]
That is to say that the mean payoff is $\le \dfrac{-m_{n}^{i+1}}{2m_{n}^{i+1}}
= -1/2$. As a result, in the case of a bad aggressive decision,
the mean payoff will still drop below $-1/2$
in this modified MDP (instead of dropping below $-1$ in the original MDP).
This is just as good to falsify $\liminfmpobj$.

Thus we obtain the following two results.
 
\begin{remark}
There exists a countable, acyclic MDP $\mathcal{M}$, whose reward counter is implicit in the state, whose rewards on transitions are in $\{-1, 0, 1\}$ and whose branching degree is bounded by $2$ for which 
$\valueof{\mdp,\liminfmpobj}{s_{0}} = 1$ and any FR strategy $\sigma$ is such that 
$\probm_{\mdp, s_{0}, \sigma}(\liminfmpobj)=0$.
In particular, there are no $\varepsilon$-optimal step counter plus finite memory strategies
for any $\varepsilon < 1$ for the
$\liminfmpobj$ objective for countable MDPs.
\end{remark}

This follows from \cref{liminfmpstepval1}, \cref{liminfmpstepval0} and the above construction.

\begin{remark}
There exists a countable, acyclic MDP $\mathcal{M}$, whose reward counter is implicit in the state, whose rewards on transitions are in $\{-1, 0, 1\}$ and whose branching degree is bounded by $2$ for which  
$s_{0}$ is almost surely winning and any FR strategy $\sigma$ is such that 
$\probm_{\mdp, s_{0}, \sigma}(\liminfmpobj)=0$.
In particular, almost sure winning strategies, when they exist, cannot be chosen 
with a step counter plus finite memory for countable MDPs.
\end{remark}

This follows from \cref{liminfmpstepam1}, \cref{liminfmpstepam0} and the above construction.

\end{document}